\theoremstyle{plain}
\newtheorem{theorem}{Theorem}[section]
\newtheorem{proposition}[theorem]{Proposition}
\newtheorem{lemma}[theorem]{Lemma}
\theoremstyle{definition}
\theoremstyle{remark}
\newtheorem{remark}[theorem]{Remark}
\newcommand{\softmax}{\mathrm{softmax}}
\DeclarePairedDelimiterX{\infdivx}[2]{(}{)}{%
	#1\;\delimsize\|\;#2%
}
\newcommand{\kl}{D_{\mathrm{KL}}\infdivx}
\newcommand{\vect}[1]{\bm{#1}}
\newcommand{\argmax}{\operatornamewithlimits{argmax}}
\newcommand{\x}{\xv}
\newcommand{\dm}{\mathrm{d}}
\newcommand{\E}{\mathbb{E}}
\newcommand{\R}{\mathbb{R}}
\newcommand{\Cat}{\mbox{Cat}}
\newcommand{\muv}{\vect\mu}
\newcommand{\piv}{\vect\pi}
\newcommand{\ev}{\vect e}
\newcommand{\fv}{\vect f}
\newcommand{\pv}{\vect p}
\newcommand{\sv}{\vect s}
\newcommand{\wv}{\vect w}
\newcommand{\xv}{\vect x}
\newcommand{\yv}{\vect y}
\newcommand{\zv}{\vect z}
\newcommand{\Fv}{\vect F}
\newcommand{\Iv}{\vect I}
\newcommand{\Pv}{\vect P}
\newcommand{\Qv}{\vect Q}
\newcommand{\Bc}{\mathcal B}
\newcommand{\Gc}{\mathcal G}
\newcommand{\Lc}{\mathcal L}
\newcommand{\Oc}{\mathcal O}
\newcommand{\Tc}{\mathcal T}
\newcommand{\Uc}{\mathcal U}
\newcommand{\Xc}{\mathcal X}
\newcommand{\Ib}{\mathbb I}
\newcommand{\Var}{\mbox{Var}}
\title{Masked Diffusion Models are Secretly Time-Agnostic Masked Models and Exploit Inaccurate Categorical Sampling}
\author{%
Kaiwen Zheng$^{1}$\thanks{Work done during an internship at NVIDIA;\quad $^\dagger$The corresponding author.}, Yongxin Chen$^{2}$, Hanzi Mao$^{2}$, Ming-Yu Liu$^{2}$, Jun Zhu$^{1\dagger}$, Qinsheng Zhang$^{2}$
\\
$^1$Department of Computer Science \& Technology, Institute for AI, Tsinghua University\\
$^2$NVIDIA\\ 
%$^4$Renmin University of China, $^5$University of Science and Technology of China, \\
%$^6$South China University of Technology, $^7$University of Surrey
\\
\texttt{zkwthu@gmail.com; dcszj@mail.tsinghua.edu.cn;} \\
\texttt{\{qinshengz,yongxinc,hanzim,mingyul\}@nvidia.com} \\
%\texttt{$^3$\{yancu, wake, helei, szhao\}@microsoft.com} \\
%\texttt{$^4$yihanwu@ruc.edu.cn}, \texttt{$^5$lyc123go@mail.ustc.edu.cn}\\
%\texttt{$^6$csjiaweichen@mail.scut.edu.cn}, \texttt{$^7$hl01486@surrey.ac.uk}
%\And
%Yihan Wu, Yichong Leng, Jiawei Chen, \& Haohe Liu \\
%Microsoft Azure Speech Group \\
%University of the Witwatersrand \\
%Beijing, 100080, China \\
%\texttt{\{robot,net\}@microsoft.com} \\
%\AND
%Xu Tan \\
%Microsoft Research Asia \\
%Address \\
%\texttt{email}
%\AND
%Coauthor \\
%Affiliation \\
%Address \\
%\texttt{email}
}
\begin{document}

\maketitle

\begin{abstract}
Masked diffusion models (MDMs) have emerged as a popular research topic for generative modeling of discrete data, thanks to their superior performance over other discrete diffusion models, and are rivaling the auto-regressive models (ARMs) for language modeling tasks. The recent effort in simplifying the masked diffusion framework further leads to alignment with continuous-space diffusion models and more principled training and sampling recipes. 
In this paper, however, we reveal that both training and sampling of MDMs are theoretically free from the time variable, arguably the key signature of diffusion models, and are instead equivalent to masked models. The connection on the sampling aspect is drawn by our proposed first-hitting sampler (FHS). Specifically, we show that the FHS is theoretically equivalent to MDMs' original generation process while significantly alleviating the time-consuming categorical sampling and achieving a 20$\times$ speedup. In addition, our investigation raises doubts about whether MDMs can truly beat ARMs in text generation. We identify, for the first time, an underlying numerical issue, even with the commonly used 32-bit floating-point precision, which results in inaccurate categorical sampling. 
We show that it lowers the effective temperature both theoretically and empirically, and the resulting decrease in token diversity makes previous evaluations, which assess the generation quality solely through the incomplete generative perplexity metric, somewhat unfair.
\end{abstract}

\section{Introduction}
\begin{wrapfigure}[15]{r}{0.38\textwidth}
\vspace{-0.4in}
\includegraphics[width=1.0\linewidth]{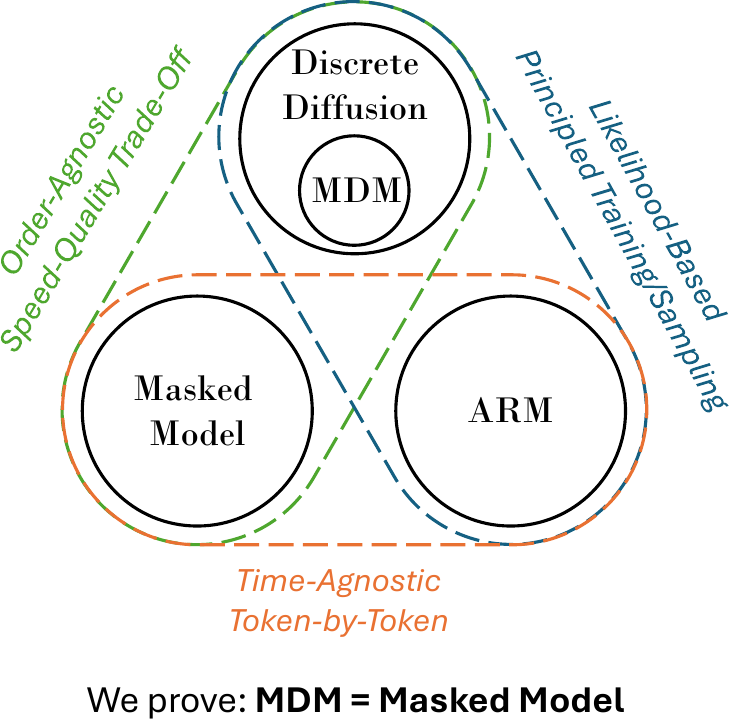}
\vspace{-0.2in}
\caption{\label{fig:into} 
Trilemma of generative modeling for discrete data. 
% We prove that masked diffusion models (MDMs), the leading variant in discrete diffusion and a promising contender to auto-regressive models (ARMs), are equivalent to masked models.
}
\end{wrapfigure}

There are three primary paradigms of generative models. \textbf{Diffusion models}~\citep{ho2020denoising,song2020score} have been the prevalent way for generative modeling of \textit{continuous data} with both theoretical and empirical success. They are SOTA in image, speech, video synthesis~\citep{dhariwal2021diffusion,karras2022elucidating,chen2020wavegrad,ho2022imagen} and serve as the cornerstone of large-scale text-to-image~\citep{rombach2022high,balaji2022ediff,esser2024scaling} and text-to-video~\citep{gupta2023photorealistic,bao2024vidu} generation systems. \textbf{Auto-regressive models} (ARMs) have dominated the generation of \textit{discrete data} especially languages~\citep{radford2018improving,radford2019language,brown2020language,achiam2023gpt,touvron2023llama}, due to the scalability and generalizability of the straightforward next-token-prediction mechanism based on transformer architectures~\citep{vaswani2017attention}. \textbf{Masked models}, such as BERT~\citep{devlin2019bert} for masked language modeling and MaskGIT~\citep{chang2022maskgit} for masked image generation, are trained to reconstruct randomly masked tokens and sampled by order-agnostic decoding. They are an alternative approach to model \textit{discrete data} while suffering from insufficient theoretical foundations. 

Diffusion models have been extended to discrete data spaces with principled training and sampling~\citep{austin2021structured,campbell2022continuous,meng2022concrete,lou2023discrete}. 
Compared to ARMs, they predict all tokens simultaneously and offer a favorable trade-off between generation quality and sampling efficiency. 
Recently, \textbf{masked diffusion models} (MDMs), the leading variant of discrete diffusion formulations, are emerging as a promising contender of ARMs~\citep{lou2023discrete}. Recent works~\citep{shi2024simplified,sahoo2024simple} have simplified MDMs to align with the design space of diffusion models via continuous-time forward processes, training objectives, and sampling procedures, resulting in a unified view and empirical improvements. Positioned at the intersection of diffusion models and masked models, MDMs are considered promising as they inherit both the theoretical principles from diffusion models and the simple mechanism from masked models. Moreover, it is believed that MDMs can outperform ARMs in text generation when measured by the common generative perplexity metric~\citep{lou2023discrete,shi2024simplified,sahoo2024simple}.

However, we argue that the current understanding of MDMs is still quite limited in both theoretical and empirical aspects. In this paper\footnote{We present our views in a straightforward manner. The earlier version is available in Appendix~\ref{appendix:old-intro}.}, we conduct a thorough and comprehensive investigation and demonstrate that \textbf{MDMs are essentially a theoretically and empirically equivalent form of typical masked models while being complicated, inefficient, and numerically unstable}:
\begin{enumerate}
    \item The \textbf{training objective} of MDMs is equivalent to that of masked models, differing only by nuanced likelihood-based loss weighting. The introduction of an additional time variable in the loss function provides little benefit in practice. (Section~\ref{sec:training})
    \item The \textbf{sampling process} of MDMs, being computationally expensive and inefficient, has a theoretically equivalent alternative (our first-hitting sampler) that is up to \textbf{20}$\times$ faster and mirrors the random-order, token-by-token decoding process of masked models. (Section~\ref{sec:sampling})
    \item The previously reported superiority of MDMs over ARMs on text generation stems from \textbf{numerical issues} that hack the generative perplexity metric during sampling by lowering the effective temperature, rather than genuine advantages. (Section~\ref{sec:numerical})
\end{enumerate}

Based on these findings, we argue that the community should reconsider investing efforts in MDMs. That being said, MDMs do provide theoretical insights and supplementary perspectives to masked models, but in practice, \textbf{the simpler masked models are not only sufficient but also free from above inefficiency and numerical instability issues}. Moreover, \textbf{scaling up MDMs on text encounters fundamental inference inefficiency challenges compared to ARMs}, as the bidirectional attention in masked models is incompatible with \textit{KV caching}, a crucial technique for accelerating modern large language models (LLMs) that require long context length. This has been largely overlooked or intentionally downplayed by most existing works on diffusion language models\footnote{Given our findings, it may be more appropriate to call them \textbf{masked language models} if based on MDMs.}. Considering the critical role of infrastructure and cost-effective inference in the deployment of LLMs, MDMs (or masked models) lack a clear and compelling prospect to replace ARMs.
\section{Background: Masked Diffusion Models (MDMs)}
Let $\Xc=\{0,1,\dots,m-1\}$ be the discrete data space, with an extra mask token $m$ added to $\Xc$. Denote $\Delta^m=\{\piv\in\R^{m+1}|\sum_{i=0}^m\pi_i=1,\piv\geq 0\}$ as the standard $m$-simplex. For any data token or mask token $x\in\Xc$, denote $\ev_x\in\R^{m+1}$ as the corresponding one-hot vector. Continuous-time discrete-space masked diffusion models (MDMs)~\citep{shi2024simplified,sahoo2024simple} can be defined akin to diffusion models, with a continuous-time forward noising process
\begin{equation}
\label{eq:forward}
    q_{t|0}(x_t|x_0)=\Cat(\alpha_t\ev_{x_0}+(1-\alpha_t)\ev_m)
\end{equation}
where $\alpha_t$ is the predefined \textit{noise schedule} function satisfying $\alpha_0\approx1,\alpha_1\approx0$, and $\Cat(\piv)$ denotes the \textit{categorical distribution} over the class probabilities $\piv\in\Delta^m$. The forward process has a time reversal for $s<t$ given $x_0$:
\begin{equation}
    q_{s|t,0}(x_s|x_t,x_0)=
    \begin{cases}
        \Cat(\ev_{x_t}),\quad &x_t\neq m\\
        \Cat\left(\frac{(1-\alpha_s)\ev_{m}+(\alpha_s-\alpha_t)\ev_{x_0}}{1-\alpha_t}\right),\quad &x_t= m
    \end{cases}
\end{equation}
Following DDPM~\citep{ho2020denoising}, the parameterized model is defined by replacing $\ev_{x_0}$ in the reversal with a data prediction model $\muv_\theta:\Xc\times\R\mapsto\Delta^m$:
\begin{equation} 
\label{eq:parameterization-single}
p_\theta(x_s|x_t)\coloneqq q(x_s|x_t,\ev_{x_0}\leftarrow\muv_\theta(x_t,t))
\end{equation}
and $\muv_\theta$ is further parameterized by $\fv_\theta:\Xc\times\R\mapsto\R^m$ as
\begin{equation}
\label{eq:parameterization}
    \muv_\theta(x_t,t)=\begin{cases}
    [\softmax(\fv_\theta(x_t,t)),0],\quad &x_t=m\\
    \ev_{x_t},\quad &x_t\neq m
    \end{cases}
\end{equation}
so that it satisfies (1) the predicted vector contains valid class probabilities sum to 1;
(2) the predicted $x_0$ has zero probability of being the mask token;
(3) if a token is already unmasked, it no longer changes. 
When $\alpha_0\rightarrow 1,\alpha_1\rightarrow 0$ and the number of timesteps tends to infinity, it is proven that the parameterized model $p_\theta$ has an \textit{evidence lower bound} (ELBO) $\log p_\theta(x_0)\geq -\Lc_\infty$, where
\begin{equation}
\label{eq:elbo-single}
    \Lc_\infty=\int_{0}^1\frac{\alpha_t'}{1-\alpha_t}\E_{q_{t|0}(x_t|x_0)}\left[\delta_{x_t,m}\ev_{x_0}^\top\log \muv_\theta(x_t,t)\right]\dm t 
\end{equation}
is a time-weighted cross-entropy loss, $\alpha_t'=\frac{\dm\alpha_t}{\dm t}$, and $\delta_{x_t,m}$ is a indicator function. We refer to $\Lc_\infty$, the training objective, as the negative ELBO (NELBO).
\paragraph{Multi-Dimensional Case} For a token sequence $\xv\in\Xc^L= \{0,1,\dots,m-1,m\}^L$ of length $L$, MDMs choose a factorized forward process $q_{t|0}(\x_t|\x_0)=\prod_{l=1}^L q_{t|0}(x_t^{(l)}|x_0^{(l)})$ over different dimensions, where $x^{(l)}$ denotes the $l$-th token of $\xv$. As a result, the reversal $q_{s|t,0}(\x_s|\x_t,\x_0)=\prod_{l=1}^Lq_{s|t,0}(x_s^{(l)}|x_t^{(l)},x_0^{(l)})$ and the parameterized model $p_\theta(\x_s|\x_t)=\prod_{l=1}^Lq(x_s^{(l)}|x_t^{(l)},\ev_{x_0^{(l)}}\leftarrow\muv_\theta^{(l)}(\x_t,t))$ also factorize. Here the network $\muv_\theta:\Xc^L\times\R\mapsto(\Delta^m)^L$ predicts the probabilities at all positions at a time, and we use $\muv_\theta^{(l)}$ to denote the $l$-th column of $\muv_\theta$. The ELBO loss in Eqn.~\eqref{eq:elbo-single} under multi-dimension can be written as
\begin{equation}
\label{eq:elbo-multi}
    \Lc_\infty^{(L)}=\int_{0}^1\frac{\alpha_t'}{1-\alpha_t}\E_{q_{t|0}(\x_t|\x_0)}\left[\sum\nolimits_{l:x_t^{(l)}=m}\ev_{x_0^{(l)}}^\top\log \muv_\theta^{(l)}(\x_t,t)\right]\dm t 
\end{equation}
\paragraph{Context of Discrete Diffusion Models} MDMs described above are a simplified version of the 
best-performing masked (or absorbing) case in discrete-space diffusion models. Discrete diffusion models, originated from D3PM~\citep{austin2021structured}, rely on discrete-time or continuous-time Markov chains to model transitions in discrete space. Notably, \textit{concrete score}~\citep{meng2022concrete} in discrete diffusion acts as an analog of the score function in continuous diffusion, and a recent work SEDD~\citep{lou2023discrete} proposes \textit{score entropy} for robust and scalable learning of the concrete score. The model definition (Markov chain, score parameterization), training objective (diffusion-weighted denoising score entropy) and sampling procedure (Tweedie $\tau$-leaping) of SEDD Absorb can be proven equivalent to the simplified expressions (Eqn.~\eqref{eq:forward}~\eqref{eq:parameterization-single}~\eqref{eq:parameterization}~\eqref{eq:elbo-single}) in MDMs. Interested readers can refer to Appendix~\ref{appendix:ctmc} for further details.
\section{Revisiting the Training of MDMs}
\label{sec:training}
MDMs are defined and trained by the continuous-time forward process (Eqn.~\eqref{eq:forward}), time-dependent network parameterization (Eqn.~\eqref{eq:parameterization}) and continuous-time ELBO (Eqn.~\eqref{eq:elbo-single}). However, different from continuous-time diffusion models~\citep{song2020score}, the evolution of $\x_t$ is discrete. The evolution trajectories of $(\x_t,t)$ are like pairs of ``phenotype" and ``genotype", where the continuous changes in time $t$ may not be reflected on the observable traits of $\x_t$. In this section, we aim to disentangle the internal time variable $t$ and the external traits of the masked sequence $\x_t$ in the training of MDMs.
\subsection{Reformulating the ELBO with the Number of Masked Tokens}
Previous works~\citep{shi2024simplified,sahoo2024simple} show the invariance of the ELBO to the noise schedule $\alpha_t$ by performing the time change-of-variable $\gamma=\log(1-\alpha_t)$ or $\lambda=\log\frac{\alpha_t}{1-\alpha_t}$ following VDM~\citep{kingma2021variational}. 
However, this does not get to the essence as they still rely on an internal continuous time. 
In the following proposition, we show that the sequence NELBO of MDMs can be expressed as a partition by the number of masked tokens instead of the continuous time.
\begin{proposition}[ELBO by the Number of Masked Tokens]
\label{proposition:elbo}
For $\x_0$ with sequence length $L$, denote $\x_n$ as a sequence with $n$ masked tokens, and $\tilde q(\x_n|\x_0)$ as the discrete forward process which randomly and uniformly masks $n$ tokens of $\x_0$. Suppose the noise schedule $\alpha_t$ satisfies $\alpha_0=1,\alpha_1=0$. The sequence NELBO in Eqn.~\eqref{eq:elbo-multi} can be reformulated as
\begin{equation}
\label{eq:elbo-multi-discrete}
    \Lc_\infty^{(L)}=-\sum_{n=1}^L\E_{\tilde q_{n|0}(\x_n|\x_0)}\left[\frac{1}{n}\sum\nolimits_{l:x_n^{(l)}=m }\ev_{x_0^{(l)}}^\top\log\bar\muv^{(l)}_\theta(\x_n)\right]
\end{equation}
where
\begin{equation}
\label{eq:moe}
    \log\bar\muv_\theta(\x_n)=\E_{\alpha_n\sim\Bc(L-n+1,n)}\left[\log\muv_\theta(\x_n,\alpha^{-1}(\alpha_n))\right],
\end{equation}
$\alpha^{-1}$ is the inverse function of $\alpha_t$ satisfying $\alpha^{-1}(\alpha_t)=t$, and $\Bc(a,b)$ denotes the Beta distribution with shape parameters $a,b>0$.
\end{proposition}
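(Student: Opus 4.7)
The plan is to decouple the continuous time $t$ from the observable state $\x_t$ by stratifying on the number of masked tokens $N_t$. The key observation is that under the factorized forward process each dimension is independently masked with probability $1-\alpha_t$, so $N_t \sim \mathrm{Binomial}(L, 1-\alpha_t)$, and conditional on $N_t = n$ the masked positions are uniformly distributed, which is exactly the law $\tilde q(\x_n \mid \x_0)$. Continuous time then survives only through a scalar weight that I will reduce to a Beta expectation.

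First I would rewrite the inner expectation in Eqn.~\eqref{eq:elbo-multi} by conditioning on $n = N_t$ (the $n=0$ stratum is void since there are no masked tokens):
\[
\E_{q_{t|0}(\x_t|\x_0)}\Big[\sum_{l:x_t^{(l)}=m}\ev_{x_0^{(l)}}^\top\log\muv_\theta^{(l)}(\x_t,t)\Big] = \sum_{n=1}^L \binom{L}{n}(1-\alpha_t)^n \alpha_t^{L-n}\, g_n(t),
\]
where $g_n(t) \coloneqq \E_{\tilde q(\x_n|\x_0)}\big[\sum_{l:x_n^{(l)}=m}\ev_{x_0^{(l)}}^\top\log\muv_\theta^{(l)}(\x_n,t)\big]$. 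After swapping the finite sum with the time integral and cancelling one $(1-\alpha_t)$ factor against $\tfrac{\alpha_t'}{1-\alpha_t}$, the $n$-th term carries weight $\alpha_t'\binom{L}{n}(1-\alpha_t)^{n-1}\alpha_t^{L-n}$.

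Next I would substitute $u = \alpha_t$, so $\dm u = \alpha_t'\,\dm t$; since $\alpha_0 = 1$ and $\alpha_1 = 0$, inverting the limits produces an overall minus sign and turns the $n$-th integral into $-\int_0^1 \binom{L}{n} u^{L-n}(1-u)^{n-1}\, g_n(\alpha^{-1}(u))\,\dm u$. The crucial algebraic identity, using $B(L-n+1,n) = \tfrac{(L-n)!(n-1)!}{L!}$, is
\[
\binom{L}{n}\, u^{L-n}(1-u)^{n-1} = \frac{1}{n}\cdot \frac{u^{L-n}(1-u)^{n-1}}{B(L-n+1,n)},
\]
whose right-hand side is $1/n$ times the density of $\Bc(L-n+1,n)$. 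Hence the $u$-integral collapses to $\tfrac{1}{n}\,\E_{\alpha_n \sim \Bc(L-n+1,n)}[g_n(\alpha^{-1}(\alpha_n))]$.

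Finally, Fubini lets me pull the Beta expectation inside $\E_{\tilde q(\x_n|\x_0)}$ and the positional sum, after which it is absorbed into $\log\bar\muv_\theta^{(l)}(\x_n)$ by Eqn.~\eqref{eq:moe}, yielding Eqn.~\eqref{eq:elbo-multi-discrete}. I expect the only real obstacle to be bookkeeping: tracking the sign flip from the decreasing substitution $u = \alpha_t$, and matching the binomial $\binom{L}{n}$ against the Beta normalization $B(L-n+1,n)$ to isolate the clean $1/n$ prefactor. Once those constants line up, the remainder is a routine interchange of sum, integral, and conditional expectation, valid because all integrands have a consistent sign.
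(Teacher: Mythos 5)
Your proposal is correct and follows essentially the same route as the paper's proof: condition on the binomial number of masked tokens, swap the sum with the time integral, change variables to $u=\alpha_t$ (picking up the sign from the decreasing schedule), and recognize the $\Bc(L-n+1,n)$ density to extract the $1/n$ weight before absorbing the Beta expectation into $\log\bar\muv_\theta$. No gaps; the bookkeeping of the binomial-versus-Beta normalization and the sign flip is exactly what the paper does.
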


This expression offers two aspects of theoretical insights:

\begin{wrapfigure}[9]{r}{0.3\textwidth}
\vspace{-0.6in}
\includegraphics[width=1.0\linewidth]{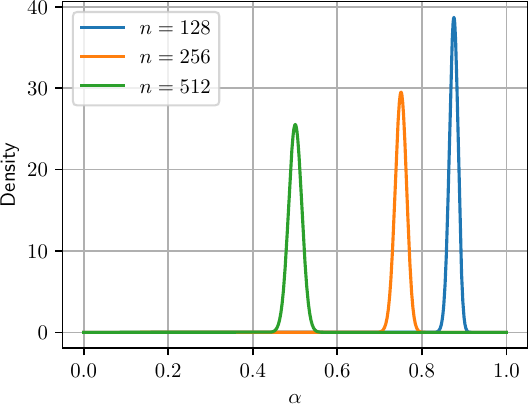}
\vspace{-0.2in}
\caption{\label{fig:beta}\small Probability density function (PDF) of $\Bc(L-n+1,n)$ with $L=1024$.}
\end{wrapfigure}

\paragraph{Mixture of Experts} From Eqn.~\eqref{eq:moe}, the time-dependent network $\muv_\theta(\x,t)$ implicitly parameterizes a time-independent network $\bar\muv_\theta(\x)$ by aggregating the logarithm at the same $\x$ but different $t$, which can be seen as an ensemble. The time $t$ is sampled unevenly so that $\alpha_t$ follows a Beta distribution $\Bc(L-n+1,n)$. This distribution has the mode (peak) $\frac{L-n}{L-1}$ and variance $\frac{n(L-n+1)}{(L+1)^2(L+2)}\leq \frac{1}{4(L+2)}$. With a large sequence length $L$, the variance is small and the distribution is concentrated around the mode, as illustrated in Figure~\ref{fig:beta}. Moreover,  under the best-performing linear schedule $\alpha_t=1-t$ in MDMs~\citep{lou2023discrete,shi2024simplified,sahoo2024simple}, the mode of $t$ is $\frac{n-1}{L-1}$, close to the \textit{masked ratio} $\frac{n}{L}$. Therefore, the time variable $t$ can be seen as a continuous relaxation and smoothing of the masked ratio, and we can directly condition the network on the discretely distributed masked ratio instead of the continuous time while yielding similar performance (Appendix~\ref{appendix:training}).
\paragraph{Discrete ELBO} From Eqn.~\eqref{eq:elbo-multi-discrete}, the sequence NELBO can be expressed discretely with the time-agnostic network $\bar\muv_\theta(\x)$. Therefore, Eqn.~\eqref{eq:elbo-multi-discrete} can serve as a NELBO of masked models in a straightforward way: \textit{uniformly} choose the number of masked tokens $n$ from $\{1,\dots,L\}$, \textit{uniformly} mask $n$ random tokens in $\x_0$ to obtain $\x_n$, and compute the \textit{average} cross-entropy loss of $\bar\muv_\theta(\x)$ on these $n$ positions. The weighting $\frac{1}{n}$ in this NELBO resembles the \textit{likelihood weighting} in diffusion models~\citep{song2021maximum,kingma2021variational,lu2022maximum,zheng2023improved}, facilitating \textit{maximum likelihood training} of masked models. Note that early works on order-agnostic auto-regressive models~\citep{uria2014deep,hoogeboom2021autoregressive} already reveal this weighting from a different perspective\footnote{The relation between ELBOs of order-agnostic ARMs and MDMs was also mentioned in 
a recent work~\citep{ou2024your}, while they only consider an originally time-agnostic network instead of mixture-of-experts.}. While in the context of masked models, there are few discussions on the ELBO. Discussions on related work are placed in Appendix~\ref{appendix:related-work}.

\subsection{Time-Independent Network Parameterization}
\label{sec:time-independent-network}
When the original network $\muv_\theta$ is parameterized without the time input, we have $\bar\muv_\theta=\muv_\theta$ in Eqn~\eqref{eq:elbo-multi-discrete}. In this case, the training of MDMs is completely free from the time variable and behaves like masked models. The rationality of time-independent network parameterization has been discussed in recent works~\citep{ou2024your,sahoo2024simple}. Here we restate this conclusion with our simplified notations from the perspective of the optimal model.
\begin{proposition}[Optimal Masked Diffusion Model]
\label{proposition:optimal}
Given unlimited model capacity, the optimal network $\theta^*$ that minimizes the NELBO in Eqn.~\eqref{eq:elbo-multi} satisfies
\begin{equation}
\label{eq:optimal-solution}
    \muv_{\theta^*}^{(l)}(\x,t)=\E_{\tilde q_{0|N(\x)}(\x_0|\x)}\left[\ev_{x_0^{(l)}}\right]
\end{equation}
where $N(\x)$ is a deterministic function that counts the number of masked tokens in $\x$, and $\tilde q_{0|n}(\x_0|\x_n)$ is the posterior distribution of the discrete forward process $\tilde q_{n|0}(\x_n|\x_0)$.
\end{proposition}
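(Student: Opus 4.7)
The plan is to derive the optimum of $\Lc_\infty^{(L)}$ by reducing it to a pointwise cross-entropy minimization, and then to show that the resulting pointwise optimum depends on $\x$ only through which positions are masked, not on $t$. This is cleanest to do directly from the original continuous-time form in Eqn.~\eqref{eq:elbo-multi}, though Proposition~3.2 would give an analogous route via $\bar\muv_\theta$.

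First I would write the NELBO as an expectation over $(t,\x_0,\x_t)$ with a positive weight $w(t)=-\alpha_t'/(1-\alpha_t)$, then swap the order of expectation to marginalize $\x_0$ given $(\x_t,t)$, obtaining
\begin{equation*}
\Lc_\infty^{(L)} = -\E_t\!\left[w(t)\,\E_{q_t(\x_t)}\!\left[\sum_{l:x_t^{(l)}=m} \E_{q_{0|t}(\x_0|\x_t)}\!\left[\ev_{x_0^{(l)}}\right]^{\!\top}\log\muv_\theta^{(l)}(\x_t,t)\right]\right].
\end{equation*}
For each fixed triple $(\x_t,t,l)$ with $x_t^{(l)}=m$, this is (up to a positive weight) a cross-entropy between a fixed target distribution and the probability vector $\muv_\theta^{(l)}(\x_t,t)\in\Delta^m$. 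Under the parameterization~\eqref{eq:parameterization} the candidates are exactly those $\piv\in\Delta^m$ with $\pi_m=0$, and by Gibbs' inequality the minimum is attained uniquely at $\muv_\theta^{(l)}(\x_t,t)=\E_{q_{0|t}(\x_0|\x_t)}[\ev_{x_0^{(l)}}]$, which automatically puts zero mass on the mask token since $q_{0|t}$ is supported on $\Xc^L\setminus\{m\}$'s factor. With unlimited capacity this pointwise optimum is simultaneously realizable across all $(\x,t,l)$, giving global optimality.

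The main step, and the one I expect to require the most care, is the identification of the posterior $q_{0|t}(\x_0|\x_t)$ with the time-agnostic $\tilde q_{0|N(\x_t)}(\x_0|\x_t)$ from the discrete forward process. Using the factorized form of $q_{t|0}$ together with Bayes' rule, for $\x_t$ with masked set $M$ of size $n=N(\x_t)$ and unmasked set $U$,
\begin{equation*}
q_{t|0}(\x_t|\x_0)=\alpha_t^{L-n}(1-\alpha_t)^n\prod_{l\in U}\mathbb{1}[x_0^{(l)}=x_t^{(l)}],
\end{equation*}
so after normalization the factor $\alpha_t^{L-n}(1-\alpha_t)^n$ cancels and
\begin{equation*}
q_{0|t}(\x_0|\x_t)\propto q(\x_0)\prod_{l\in U}\mathbb{1}[x_0^{(l)}=x_t^{(l)}],
\end{equation*}
which is precisely the posterior of $\tilde q_{n|0}$ given $\x_n=\x_t$ and hence independent of $t$. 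Combining this with the pointwise optimum above yields the stated formula $\muv_{\theta^*}^{(l)}(\x,t)=\E_{\tilde q_{0|N(\x)}(\x_0|\x)}[\ev_{x_0^{(l)}}]$, and as a consequence the minimizer does not depend on $t$. A brief remark would note the implicit assumption that $q_t(\x_t)>0$ on the set of sequences over which we optimize so that the pointwise argument is valid almost everywhere under the training measure.
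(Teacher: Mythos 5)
Your proposal is correct and follows essentially the same route as the paper's proof: swap the expectation to obtain posterior-weighted cross-entropy terms, minimize each pointwise via Gibbs'/KL inequality, and then show $q_{0|t}(\x_0|\x_t)$ is time-independent and equals $\tilde q_{0|N(\x_t)}(\x_0|\x_t)$. Your cancellation of the explicit factor $\alpha_t^{L-n}(1-\alpha_t)^n$ is just a more direct phrasing of the paper's cancellation of $p_t(N(\x_t))$ in its mixture-over-$n_t$ decomposition, so there is no substantive difference.
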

From the above expression, the optimal MDM is irrelevant to the time variable, justifying 
the feasibility of removing the time input. Besides, it can be extended to a general weighted cross-entropy loss $\Lc_{\wv}^{(L)}=-\sum_{n=1}^Lw_n\E_{\tilde q_{n|0}(\x_n|\x_0)}\left[\sum\nolimits_{l:x_n^{(l)}=m }\ev_{x_0^{(l)}}^\top\log\muv^{(l)}_\theta(\x_n)\right]$ of masked models. $\Lc_{\wv}^{(L)}$ with arbitrary positive weights $\wv>0$ yields the same optimal solution as Eqn.~\eqref{eq:optimal-solution}, thus acting as a surrogate objective of the NELBO. This theoretically supports a wide range of objectives for training masked models, such as the loss in MaskGIT~\citep{chang2022maskgit}.
\subsection{Practical Considerations}
While there are theoretically equivalent variants for training MDMs (continuous-time/discrete ELBO, time-conditioned/time-independent network), these choices may have practical implications due to differences in network inputs and loss variances. We present some training comparisons and our attempts to improve training (e.g., variance reduction, flow matching) in Appendix~\ref{appendix:training}. Overall, all options yield similar performance, and the low-discrepancy sampler~\citep{kingma2021variational}, when applied to time or the number of masked tokens, can significantly reduce the loss variance.

Note that while several works~\citep{lou2023discrete,shi2024simplified} suggest that MDMs are competitive with ARMs in language modeling (beating GPT-2~\citep{radford2019language} when measured by test/zero-shot perplexity), a more fair comparison (retraining ARMs with the same configurations, Appendix~\ref{appendix:training})~\citep{sahoo2024simple} indicates that MDMs are only advantageous in language understanding tasks (surpassing ARMs and BERT on the GLUE metric~\citep{wang2018glue}).
\section{Revisiting the Sampling of MDMs}
\label{sec:sampling}
In the previous section, we demonstrate how the training of MDMs, both theoretically and empirically, can be disentangled with the continuous time variable and behave like masked models. In this section, we turn our attention to the sampling of MDMs, which is also performed in continuous time and seems distinct from masked models. We aim to address its current inefficiency problem as well as establish essential insights into its connection with masked models.
\subsection{Inefficiency of Current Sampling}
\label{sec:inefficiency}
MDMs are sampled in an ancestral way following the parameterized reverse-time process in Eqn.~\eqref{eq:parameterization-single}. Specifically, the sampling step $\x_t\rightarrow\x_s$ from time $t$ to $s<t$ can be expressed as
\begin{equation}
\label{eq:sampling}
x_s^{(l)}\begin{cases}
    =x_t^{(l)},\quad &x_t^{(l)}\neq m\\
    \sim\Cat\left(\frac{(1-\alpha_s)\ev_{m}+(\alpha_s-\alpha_t)\muv^{(l)}_\theta(\x_t,t)}{1-\alpha_t}\right),\quad &x_t^{(l)}=m
\end{cases},\quad \text{for every }l
\end{equation}
Given the number of sampling steps $N$, the sampling process involves first discretizing the timesteps as $0=t_0<t_1<\dots<t_N=1$, and then performing reverse steps $t_N\rightarrow t_{N-1}\rightarrow\dots\rightarrow t_0$ according to Eqn.~\eqref{eq:sampling}. Notable characteristics of MDM's sampling include: (1) Any mask token can only be unmasked once with no further changes. (2) Each sampling step requires a forward pass through the network $\muv_\theta$ and conducting at most $L$ times of $|\Xc|$-dimensional categorical sampling, where $L$ is the sequence length and $|\Xc|$ is the vocabulary size. (3) The number of sampling steps $N$ can be significantly larger than $L$, and a single sampling step may result in no changes to any token in the sequence. (4) As MDMs are trained with the continuous-time ELBO which assumes an infinite number of reverse steps, it is theoretically rigorous to employ an equivalently large $N$.

Recent works propose a simple \textit{caching strategy}~\citep{ou2024your,sahoo2024simple} to speedup the sampling of MDMs: when the network $\muv_\theta$ is parameterized without time input\footnote{In our practice, the time-dependent network also exhibits no performance degradation with the caching strategy, so this assumption is unnecessary.}, and the sequence is not changed in a sampling step $t\rightarrow s$ (i.e., $\x_s=\x_t$), we can reuse the network output at the last step as $\muv_\theta(\x_s)=\muv_\theta(\x_t)$. As the sequence changes at most $L$ times during sampling, the number of function evaluations (NFE) can be reduced to no more than $L$. However, sampling with the caching strategy still suffers from two major inefficiency problems:
\paragraph{Categorical Sampling is Time-Consuming} In diffusion models, NFE is an efficient indicator of the sampling speed, as the computation overhead beyond the network forward passes is negligible. However, in MDMs, the Gumbel-based\footnote{We will introduce Gumbel-based categorical sampling in the next section.} categorical sampling, which requires sampling a total number of $\Oc(NL|\mathcal X|)$ uniform variables and performing logarithmic operations on them, can be expensive compared to network evaluations. As illustrated in Figure~\ref{fig:inefficiency1}, when the number of sampling steps $N\gg L$, the sampling time scales with $N$ instead of the NFE. Categorical sampling steps that do not result in token changes are wasted, as they contribute no information gain.

\begin{wrapfigure}[13]{r}{0.6\textwidth}  % 'r' for right, or 'l' for left
\vspace{-0.3in}
    \centering
    \subfloat[Sampling time per sequence (caching strategy, batch size=1)]{
        \includegraphics[width=0.48\linewidth]{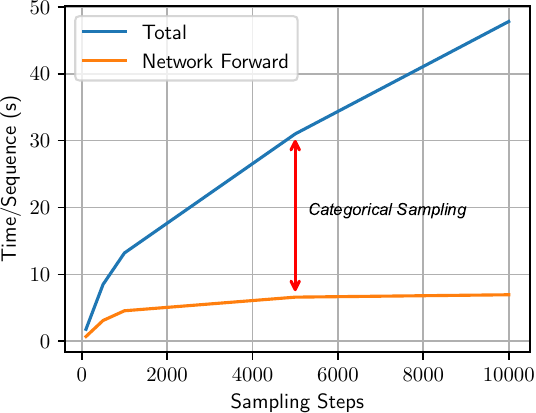}
        \label{fig:inefficiency1}
    }
    \subfloat[NFE (caching strategy)]{
        \includegraphics[width=0.5\linewidth]{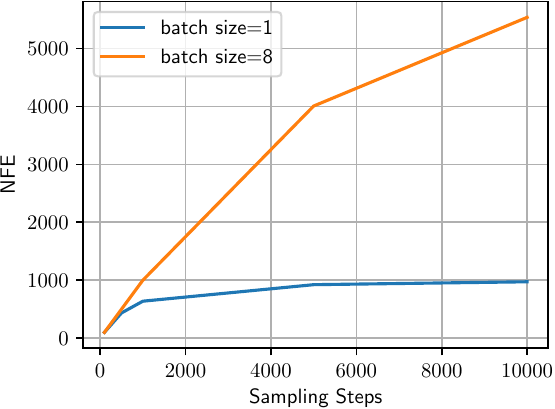}
        \label{fig:inefficiency2}
    }
    \vspace{-0.1in}
    \caption{\label{fig:inefficiency}Illustration of 
 sampling ineffciency using pretrained models of MDLM~\citep{sahoo2024simple} ($L=1024$).}
\end{wrapfigure}

\paragraph{Caching Strategy Degrades in Batched Sampling} When using the caching strategy in batched sampling, the network output can only be reused directly when all the sequences in the batch remain unchanged after a sampling step\footnote{We can reuse only the unchanged part of a batch, but this potentially reduce parallel efficiency.}. Suppose the batch size is $B$, and the default linear noise schedule $\alpha_t=1-t$ as well as uniform timesteps $t_k=\frac{k}{N}$ is used. The \textit{expected NFE} under the caching strategy can be derived as $N(1-(1-\frac{1}{N})^{BL})$ (proof in Appendix~\ref{appendix:expected-NFE}), similar to the $B=1$ case in~\cite{ou2024your}. As $\lim_{N\rightarrow \infty}N(1-(1-\frac{1}{N})^{BL})=BL$, the NFE is no longer upper bounded by the sequence length but scales with the batch size (Figure~\ref{fig:inefficiency2}).
\subsection{First-Hitting Sampler}
\begin{wrapfigure}[10]{r}{0.5\textwidth}
\vspace{-0.75in}
\begin{minipage}{0.5\textwidth}
\begin{algorithm}[H]
   \caption{First-Hitting Sampling of MDMs}
   \label{alg:first-hitting}
   \scriptsize
   \textbf{Require:} the sequence length $L$, the vocabulary $\Xc=\{0,\dots,m-1,m\}$ where $m$ is the mask token, the noise schedule $\alpha_t$ and its inverse function $\alpha^{-1}$, the pretrained masked diffusion model $\muv_\theta$

   \begin{algorithmic}[1]
   \STATE $\x_L\leftarrow[m\ m\ \dots\ m]$
   \STATE $\tau_L\leftarrow 1$
    \FOR {$n \leftarrow L$ \TO $1$}
    \STATE Sample $u_n\sim\Uc(0,1)$
    \STATE $\tau_{n-1}\leftarrow\alpha^{-1}(1-u_n^{1/n}(1-\alpha_{\tau_{n}}))$
    \STATE $\muv_n\leftarrow\muv_\theta(\x_n,\tau_{n-1})$
    \STATE Randomly and uniformly select an index $l$ from $\{i:x_{n}^{(i)}=m\}$ (i.e., masked positions in $\x_n$)
    \STATE $\x_{n-1}\leftarrow\x_n,x_{n-1}^{(l)}\leftarrow x\sim\Cat(\muv_n^{(l)})$
    \ENDFOR
   \end{algorithmic}
   \textbf{Output:} $\x_0$
\end{algorithm}
\end{minipage}
\end{wrapfigure}
The current sampling methods of MDMs, including the caching strategy, are neither efficient nor insightful into the essence of MDMs. To address this, we reexamine the sampling step in Eqn.~\eqref{eq:sampling}. 

\begin{figure}[t]
    \centering
    \includegraphics[width=1.0\linewidth]{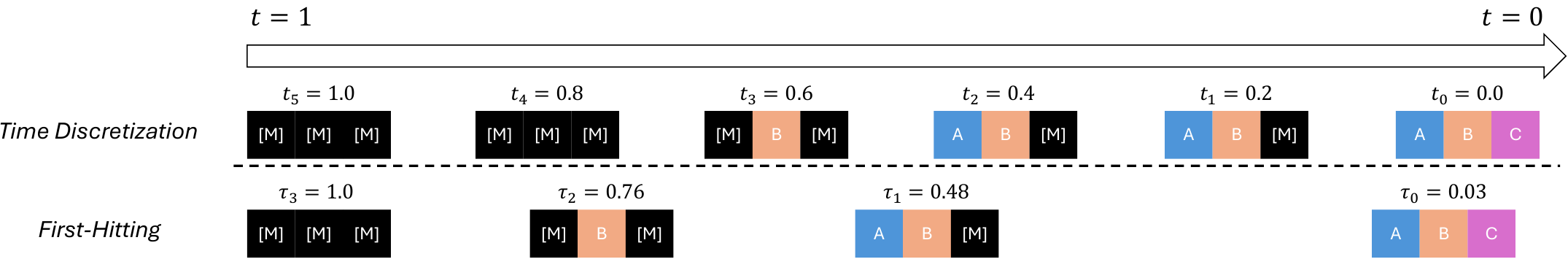}
    \caption{\label{fig:first-hitting}Illustration of the first-hitting sampler in comparison to the original sampling procedure.}
    \vspace{-0.1in}
\end{figure}

When the number of sampling steps $N\rightarrow\infty$ and the maximum step size $\max_{1\leq i\leq N} |t_i-t_{i-1}|\rightarrow0$, Eqn.~\eqref{eq:sampling} tends to an infinitesimal jump. In this case, the reverse sampling process becomes a continuous-time Markov chain (or Markov process), where each mask token is unmasked at some moment according to the network prediction. Our \textit{key insight} involves three folds: (1) \textit{Whether a mask token will transit or not} during a time interval $[s,t]$ is independent of the network. The network output only determines \textit{which token is the transition target} given the condition that the transition happens. (2) The transition probability $\frac{\alpha_s-\alpha_t}{1-\alpha_t}$ is equal for masked tokens at different positions. Therefore, each mask token has the same probability of being first unmasked. (3) The \textit{first-hitting time}, which denotes the first moment any of the remaining masked tokens is unmasked, can be \textit{analytically sampled}:
\begin{proposition}[Analytic Sampling of First-Hitting Time]
\label{proposition:sampling}
Denote $\tau_{L}=1$ as the initial time. Suppose there are $n$ masked tokens, and the last time a token is unmasked happens at $\tau_{n}$, then the next time a token is unmasked can be analytically sampled by
\begin{equation}
    \tau_{n-1}=\alpha^{-1}(1-u_n^{1/n}(1-\alpha_{\tau_{n}})),\quad u_n\sim\Uc(0,1)
\end{equation}
where $\Uc(0,1)$ is the uniform distribution on $[0,1]$.
\end{proposition}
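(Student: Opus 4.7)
The plan is to compute the CDF of $\tau_{n-1}$ in closed form and then invert it. Two structural facts drive the derivation, both of which fall out of the parameterization in Eqns.~\eqref{eq:parameterization-single}--\eqref{eq:parameterization}: (i) for a \emph{single} token that is masked at time $\tau_{n}$, the probability under the reverse kernel that it is still masked at $s<\tau_{n}$ equals $\tfrac{1-\alpha_s}{1-\alpha_{\tau_n}}$ and does \emph{not} depend on $\muv_\theta$, because $\muv_\theta$ puts zero mass on the mask symbol, so the mass assigned to $x_s=m$ in Eqn.~\eqref{eq:sampling} is purely a function of $\alpha_s,\alpha_{\tau_n}$; (ii) the reverse kernel factorizes across positions, so the survival events for distinct masked tokens are conditionally independent given $\x_{\tau_n}$.

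Combining (i) and (ii), the probability that all $n$ tokens masked at $\tau_{n}$ remain masked at some $s\in[0,\tau_{n}]$ is $\bigl(\tfrac{1-\alpha_s}{1-\alpha_{\tau_n}}\bigr)^{n}$. Because MDMs are monotone in the mask-to-data direction---once a token is unmasked, it is frozen---this event is identical to $\{\tau_{n-1}<s\}$: going backward in time from $\tau_{n}$, the first unmasking has not yet occurred at time $s$. Passing to the continuous-time limit $N\to\infty$ of Eqn.~\eqref{eq:sampling} makes $\tau_{n-1}$ a continuous random variable, so its CDF on $[0,\tau_{n}]$ is
\begin{equation*}
F(s)\;=\;\left(\frac{1-\alpha_s}{1-\alpha_{\tau_n}}\right)^{\!n}.
\end{equation*}

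Inverse-transform sampling concludes the argument: draw $u_n\sim\Uc(0,1)$, set $F(\tau_{n-1})=u_n$, i.e.\ $\bigl(\tfrac{1-\alpha_{\tau_{n-1}}}{1-\alpha_{\tau_n}}\bigr)^{n}=u_n$, and solve for $\tau_{n-1}$, yielding $\tau_{n-1}=\alpha^{-1}\!\bigl(1-u_n^{1/n}(1-\alpha_{\tau_n})\bigr)$. The inverse $\alpha^{-1}$ is well defined since the noise schedule is monotone with $\alpha_0=1,\alpha_1=0$.

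The main conceptual obstacle is recognizing the network-independence in step (i): without this observation one might expect the first-hitting law to depend on $\muv_\theta$ and admit no closed form, whereas it in fact reduces to a product of $n$ identical single-token survival factors. A secondary technical point is the continuous-time limit, which is needed so that $\tau_{n-1}$ has no atoms and the CDF inversion is unambiguous; this is routine, since in the limit the $n$ per-token survival clocks become independent atomless random variables and two tokens unmask simultaneously with probability zero.
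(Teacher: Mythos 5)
Your proof is correct and reaches the stated formula, but it is organized differently from the paper's argument. The paper first treats a single token (the same survival probability $\tfrac{1-\alpha_s}{1-\alpha_t}$ you use, obtained from Eqn.~\eqref{eq:sampling} because $\muv_\theta$ puts no mass on the mask symbol), then attaches to each of the $L$ tokens an independent unmasking time via $1-\alpha_\tau=u$ with $u\sim\Uc(0,1)$, and reduces sequential sampling of $\tau_{L-1}>\dots>\tau_0$ to sequentially drawing the descending order statistics $u_{(L-1)}>\dots>u_{(0)}$; the recursion is justified by Lemma~\ref{lemma:cond-uniform} (uniforms conditioned on their maximum are uniform on the smaller interval), which gives $P(u_{(n-1)}\leq x)=x^n/u_{(n)}^n$ and hence the same $n$-th power law you obtain. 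You instead compute the conditional survival function of the next jump directly, as the product of $n$ identical network-independent per-token survival factors, and invert it. The two routes share the essential insights (the mask/unmask decision is independent of $\muv_\theta$, and the per-position indicator processes are independent), and your derivation is arguably more direct. What the paper's order-statistics lemma buys is an explicit treatment of the conditioning on the random time $\tau_n$: your step ``given that the $n$ tokens are still masked at $\tau_n$, each survives to $s$ independently with probability $\tfrac{1-\alpha_s}{1-\alpha_{\tau_n}}$'' quietly invokes the (strong) Markov/memorylessness property of the reverse jump process at the random jump time $\tau_n$, whereas Lemma~\ref{lemma:cond-uniform} supplies exactly this fact in elementary form. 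This is a presentational gap rather than a mathematical one; with that property stated (or with the paper's lemma cited), your argument is complete.
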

As outlined in Algorithm~\ref{alg:first-hitting}, by recursively sampling the next time when any of the remaining mask tokens is first unmasked, then uniformly choosing a mask token and unmasking it according to the network output, we obtain a token-by-token sampling procedure of MDMs. Denote $\x_n$ as the sequence with $n$ remaining mask tokens. Since the transition $\x_n\rightarrow\x_{n-1}$ can be considered to happen in the infinitesimal step $\tau_{n-1}+\dm t\rightarrow \tau_{n-1}$, using the network output $\muv_\theta(\x_n,\tau_{n-1})$ at 
time $\tau_{n-1}$ incurs no approximation errors. Therefore, the first-hitting sampler (FHS) is \textit{theoretically equivalent} as simulating the continuous-time reverse Markov sampling process. We illustrate the comparison between the FHS and the original sampling procedure in Figure~\ref{fig:first-hitting}.

The FHS demonstrates appealing properties:
\paragraph{Tackling the Sampling Inefficiency} The FHS can tackle the two inefficiency problems described in Section~\ref{sec:inefficiency}. Firstly, as the categorical sampling is only conducted for determining the transition target of the single chosen mask token at each step, the total computation cost is reduced to $\Oc(L|\Xc|)$. Secondly, the first-hitting time $\tau_n$ can be sampled independently and asynchronously across different samples in a batch, avoiding performance degradation in batched sampling.
\paragraph{Connection to the Sampling of Masked Models} When the network parameterization is independent of the time, the FHS in Algorithm~\ref{alg:first-hitting} can be completely free from the time and become a token-by-token decoding process akin to masked models. This connection serves as supporting evidence for the typical sampling procedure of masked models, as it is theoretically equivalent to the more principled reverse Markov sampling process of MDMs.
\subsection{Parallel Decoding and High-Order Variants}
\begin{figure}[t]
    \centering
    \includegraphics[width=0.8\linewidth]{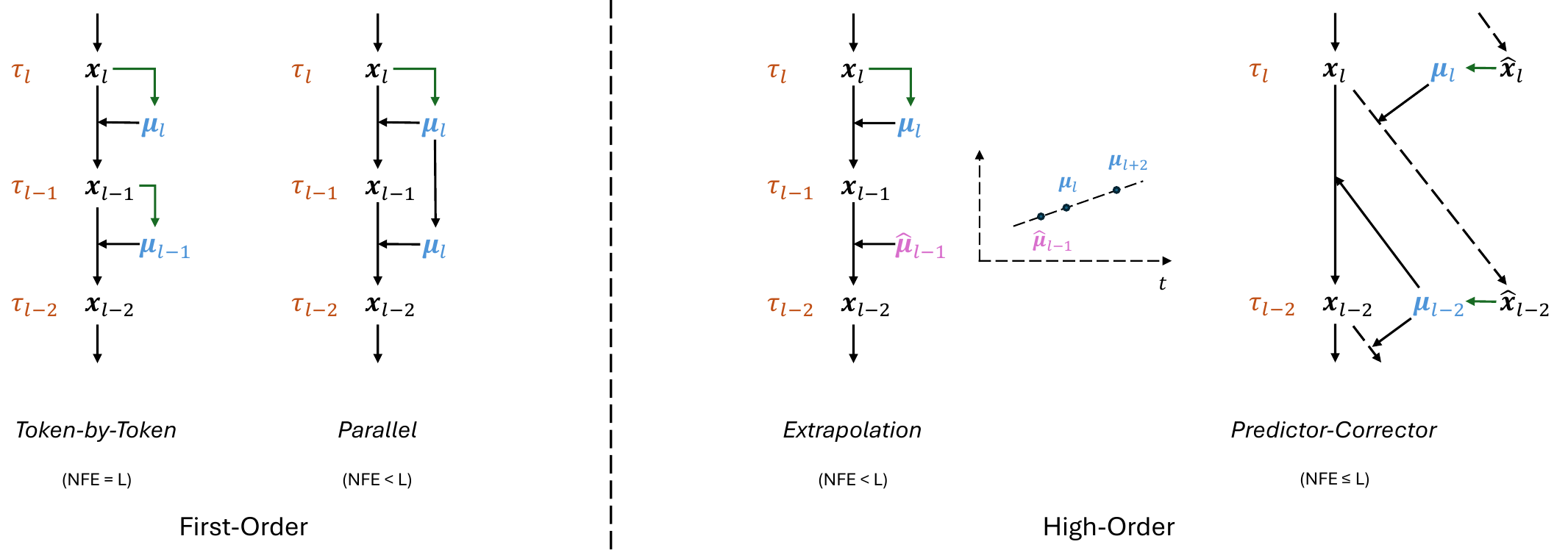}
    \caption{\label{fig:high-order}Variants of the first-hitting sampler. $\x_l$ denotes the sequence with $l$ remaining mask tokens, and $\muv_l=\muv_\theta(\x_l,\tau_{l-1})$ denotes the network prediction at the step $l$.}
    \vspace{-0.2in}
\end{figure}
The token-by-token decoding process of MDMs can be extended to parallel decoding by unmasking multiple tokens per step, as the network $\muv_\theta$ predicts tokens at all positions. This enables speed-quality trade-offs similar to diffusion models. As illustrated in Figure~\ref{fig:high-order}, parallel decoding essentially reuses the previous network output to reduce the NFE, thus functioning as an approximation method.

To reduce the approximation error, we follow the recipes of 
high-order diffusion solvers~\citep{karras2022elucidating,zhang2022fast,lu2022dpm,zheng2023dpm} to develop high-order samplers of MDMs. We propose two variants: one based on extrapolating previous network outputs, and the other utilizing a predictor-corrector method to refine the samples (algorithms in Appendix~\ref{appendix:algorithms}).
\section{Are MDMs Better than ARMs? A Critical Fault in Low-Precision Gumbel-Based Categorical Sampling}
\label{sec:numerical}
Before we proceed to verify the effectiveness of our proposed first-hitting sampler, we have to point out a critical fault in MDMs' original sampling implementation. As suggested by previous works~\citep{lou2023discrete,shi2024simplified,sahoo2024simple}, MDMs seem to surpass ARMs with a sufficient number of sampling steps when measured by the generative perplexity (Gen PPL)\footnote{The evaluation metrics used in this paper are introduced in Appendix~\ref{appendix:metric}.}, as shown in Figure~\ref{fig:fp32_gen_ppl}. However, in this section, we identify for the first time a hidden numerical issue existing in previous codebases that makes this observation questionable.
\subsection{Low Token Diversity under Numerous Sampling Steps}
\begin{figure}
    \centering
    \resizebox{.8\textwidth}{!}{
    \noindent\fbox{%
    \parbox{\textwidth}{%
There is the following definition:

The “right lane” on the lane lane lane. From the lane lane lane from lane lane to lane lane on a lane in lane lane on a front lane.

From lane lane lane the lane lane lane on the right lane from lane front lane lane to top lane.

From the right lane from a lane lane lane with the lane on the lane lane lane. The “that lane lane” on the rear lane.
    }%
}
}
    \caption{\label{fig:sedd_sample} Segment of generated text by SEDD Absorb~\citep{lou2023discrete} at 50k sampling steps.}
\end{figure}

\begin{wrapfigure}[15]{r}{0.6\textwidth}  % 'r' for right, or 'l' for left
\vspace{-0.3in}
    \centering
    \subfloat[Generative Perplexity]{
        \includegraphics[width=0.48\linewidth]{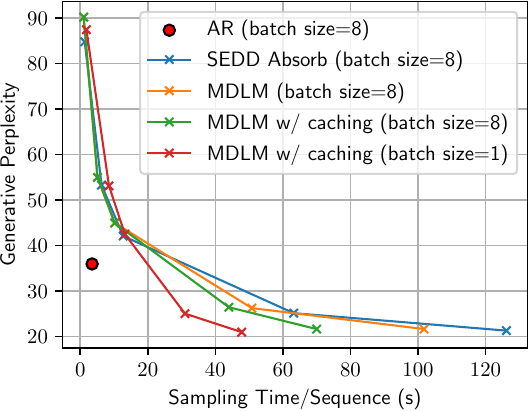}
        \label{fig:fp32_gen_ppl}
    }
    \subfloat[Entropy]{
        \includegraphics[width=0.48\linewidth]{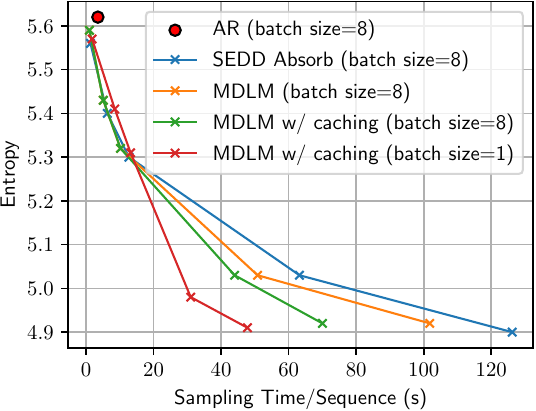}
        \label{fig:fp32_entro}
    }
    \vspace{-0.1in}
    \caption{\label{fig:fp32_results}\small Comparisons of different models (AR, SEDD Absorb~\citep{lou2023discrete}, MDLM~\citep{sahoo2024simple}) trained on OpenWebText~\citep{Gokaslan2019OpenWeb} with the same network architecture and configurations. We generate 64 samples using their original codebase on a single NVIDIA RTX A6000 GPU, and vary the sampling steps $N\in\{100,500,1000,5000,10000\}$.}
\end{wrapfigure}
Empirically, a reduction in Gen PPL is observed by increasing the inference budget. In particular, an exceptionally low Gen PPL ($<15$) is achieved when the number of sampling steps approaches 50k.

However, when we check the generated content, we discover that the quality is compromised by low token diversity (an extreme case is shown in Figure~\ref{fig:sedd_sample}). We further quantify this phenomenon by measuring the sentence entropy (Figure~\ref{fig:fp32_entro}). With the original sampler, the Gen PPL of MDMs surpasses ARMs at around 2k steps, but \textit{the entropy is always lower and keeps decreasing}. 

This low generation quality is unexpected, as theory suggests that increasing sampling steps should yield lower discretization errors and more faithfully reflect the true model performance. We therefore consider this a hidden implementation issue and investigate further to identify the root cause.

\subsection{Identifying the Numerical Precision Problem}
\begin{table}[t]
    \centering
    \caption{\label{tab:fp}Maximum Gumbel under different floating-point precisions.}
    \resizebox{\textwidth}{!}{% <------ Don't forget this %
    \begin{tabular}{lcccccc}
    \toprule
    \multirow{ 2}{*}{Data Type}&\multicolumn{3}{|c|}{Structure (bits)}&\multirow{ 2}{*}{Maximum Value ($<1$) Representable}&\multicolumn{1}{|c|}{\multirow{ 2}{*}{Maximum Gumbel}}\\
    &\multicolumn{1}{|c}{Sign}&Exponent&\multicolumn{1}{c|}{Fraction}&&\multicolumn{1}{|c|}{}\\
    \midrule
    float32&1&8&23&$1-2^{-24}\approx 0.9999999404$&$-\log(-\log(1-2^{-24}))\approx16.6355$\\
    float64&1&11&52&$1-2^{-53}\approx 0.999999999999999889$&$-\log(-\log(1-2^{-53}))\approx36.7368$\\
    \bottomrule
    \end{tabular}% <------ Don't forget this %
    }
    \vspace{-0.1in}
\end{table}
\begin{figure}[t]
    \centering
    \includegraphics[width=1.0\linewidth]{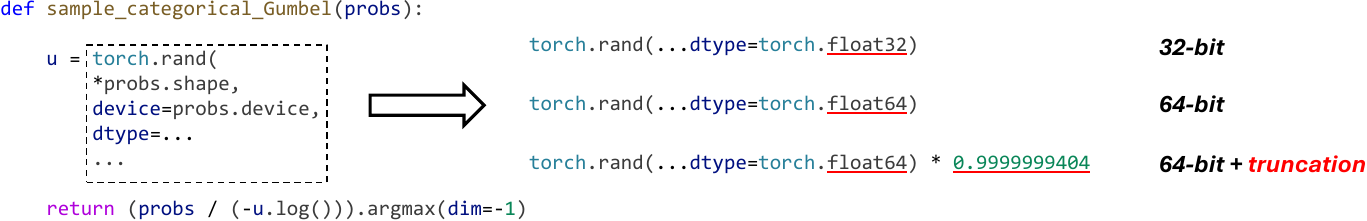}
    \caption{\label{fig:code}Code for different versions of Gumbel-based categorical sampling. The operation $\argmax_i(\log\pi_i-\log(-\log u_i))$ is simplified to $\argmax_i(\pi_i/(-\log u_i))$ to save computation cost.}
    \vspace{-0.05in}
\end{figure}
Our \textit{key observation} is that, when we alter the floating-point precision during sampling from 32-bit to 64-bit, the entropy returns to a normal level similar to ARMs, but with a generative perplexity $\approx100$. After careful ablations, we identify the root cause as the inaccuracy in previous Gumbel-based categorical sampling. To sample from a categorical distribution with class probabilities $\{\pi_i\}_{i=1}^K$, Gumbel-max trick\footnote{A brief introduction to Gumbel tricks is provided in Appendix~\ref{appendix:gumbel}} is used by first sampling $K$ independent uniform variables $u_i\sim\Uc(0,1)$, then transforming them into samples from the standard Gumbel distribution $\Gc(0,1)$ by $g_i=-\log(-\log u_i)$, and finally obtaining the categorical sample $n=\argmax_{i} (\log\pi_i+g_i)$. The operation $g=-\log(-\log u)$ theoretically maps $u\in[0,1]$ to $g\in(-\infty,+\infty)$. But due to the limited representation ability of floating-point numbers in implementation, $u$ is constrained to $[0,1-\epsilon]$ and $g$ is constrained to $(-\infty,M]$, as shown in Table~\ref{tab:fp}. Therefore, the sample $g$ instead follows a \textit{truncated Gumbel distribution}, denoted $\Tc\Gc(0,1,M)$, which refers to the Gumbel distribution $\Gc(0,1)$ conditioned on $g\leq M$. This tricky difference theoretically makes the categorical sampling inaccurate, i.e., $\argmax_{i} (\log\pi_i+g_i)$ no longer follows the class probabilities $\{\pi_i\}_{i=1}^K$.

\begin{wraptable}[6]{r}{0.3\textwidth}
\vspace{-0.2in}
    \centering
    \caption{\label{tab:ablate-precision}\small{Results with different versions of categorical sampling.}}
    \resizebox{0.3\textwidth}{!}{ % Adjust the width here
        \begin{tabular}{lcc}
            \toprule
            Version & Gen PPL & Entropy \\
            \midrule
            32-bit & 31.24 & 5.17 \\
            64-bit & 126.11 & 5.66 \\
            64-bit + trunc & 28.64 & 5.12 \\
            \bottomrule
        \end{tabular}
    }
\end{wraptable}
To verify that truncation is the fundamental issue, we conduct ablations by only modifying the categorical sampling code. As shown in Figure~\ref{fig:code}, we manually scale 64-bit uniform samples to match the truncation in the 32-bit case. We then randomly generate 8 samples with 2048 steps and compare the average generative perplexity and entropy in Table~\ref{tab:ablate-precision}. The similar results between the 32-bit and truncated 64-bit cases confirm the impact of truncation.
\begin{remark}
    Note that auto-regressive LLMs like Llama~\citep{touvron2023llama} and Mistral~\citep{jiang2023mistral} use \texttt{torch.multinomial} for categorical sampling, which is also implemented with the Gumbel-max trick in the low-level C++ code of PyTorch. In contrast, we find \textbf{the token-by-token decoding process of ARMs and MDMs (by our first-hitting sampler) does not suffer from notable numerical issues under 32-bit precision} (illustrations and explanations in Appendix~\ref{appendix:fhs-32}).
\end{remark}
\subsection{Categorical Sampling with Truncated Gumbel}
\label{sec:truncated-gumbel-theory}
In the previous section, we empirically observe that truncated Gumbel-based categorical sampling reduces token diversity. Surprisingly, such effects can be precisely depicted in \textit{closed-form}.
\begin{proposition}[Closed-Form Categorical Sampling with Truncated Gumbel]
\label{proposition:gumbel}
Suppose the class probabilities are sorted as $\pi_1\leq\dots\leq\pi_K$, and $g_i\sim\Tc\Gc(0,1,M)$ are truncated Gumbel samples with maximum value $M$. Denote $\pi_0=0$. For $1\leq n\leq K$, we have $P(\argmax_i (\log\pi_i+g_i)=n)=\pi_n\sum_{i=1}^{n}\beta(i)$,
where
\begin{equation}
    \beta(i)=\frac{e^{\left(K+1-i-\frac{\sum_{k=i}^K\pi_k}{\pi_i}\right) e^{-M}}-e^{\left(K+1-i-\frac{\sum_{k=i}^K\pi_k}{\pi_{i-1}}\right) e^{-M}}}{\sum_{k=i}^K\pi_k}\geq 0
\end{equation}
\end{proposition}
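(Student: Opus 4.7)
The plan is to express the probability as an integral by conditioning on $Y_n := \log \pi_n + g_n$ and using independence, then carefully track the saturation of each competitor's CDF at the truncation boundary $M$.

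Set $c := e^{-M}$. I would first record that the truncated Gumbel $\mathcal{TG}(0,1,M)$ has CDF $F_M(g) = e^{c - e^{-g}}$ on $(-\infty, M]$, so $Y_i = \log \pi_i + g_i$ has $P(Y_i \leq y) = e^{c - \pi_i e^{-y}}$ when $y \leq M + \log \pi_i$ and equals $1$ otherwise, while $Y_n$ has density $\pi_n e^{-y} e^{c - \pi_n e^{-y}}$ on $(-\infty, M + \log \pi_n]$. By independence,
\begin{equation*}
P(\arg\max_i Y_i = n) = \int_{-\infty}^{M + \log \pi_n} \pi_n e^{-y} e^{c - \pi_n e^{-y}} \prod_{i \neq n} P(Y_i \leq y)\, \dm y.
\end{equation*}
The substitution $u = e^{-y}$ turns this into $\int_{c/\pi_n}^{\infty} \pi_n e^{c - \pi_n u} \prod_{i \neq n} P_i(u)\, \dm u$, where $P_i(u) = e^{c - \pi_i u}$ if $u \geq c/\pi_i$ and $P_i(u) = 1$ otherwise.

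Next, I would introduce the thresholds $t_i := c/\pi_i$, which (by $\pi_1 \leq \dots \leq \pi_K$ and the convention $t_0 = +\infty$ corresponding to $\pi_0 = 0$) are decreasing, so $[c/\pi_n, \infty) = \bigsqcup_{i=1}^n [t_i, t_{i-1})$. On the slice $[t_i, t_{i-1})$ exactly the indices $j \geq i$ contribute a nontrivial factor $e^{c - \pi_j u}$ while all $j < i$ contribute $1$. Excluding $j = n$ (which lies in $\{i, \dots, K\}$ since $i \leq n$), the product collapses to $e^{(K-i)c - (S_i - \pi_n) u}$ with $S_i := \sum_{k=i}^K \pi_k$. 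Multiplying by $\pi_n e^{c - \pi_n u}$ leaves the pure exponential $\pi_n e^{(K+1-i)c} e^{-S_i u}$, whose integral over $[t_i, t_{i-1})$ is
\begin{equation*}
\frac{\pi_n}{S_i}\left[e^{(K+1-i - S_i/\pi_i)c} - e^{(K+1-i - S_i/\pi_{i-1})c}\right] = \pi_n \beta(i),
\end{equation*}
with the convention $\pi_0 = 0$ making the second exponential vanish when $i = 1$ (matching integration all the way to $\infty$). Summing $i = 1, \dots, n$ yields the claimed identity, and non-negativity $\beta(i) \geq 0$ follows from $S_i/\pi_i \leq S_i/\pi_{i-1}$ together with $c > 0$.

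The main obstacle is purely bookkeeping rather than analytic: because each $P(Y_i \leq y)$ is piecewise (saturating at $1$ once $y$ exceeds $M + \log \pi_i$), one must identify correctly which subset of competitors is ``active'' on each sub-interval of $u$. Sorting the $\pi_i$ makes this transparent—the active set on $[t_i, t_{i-1})$ is exactly $\{i, i+1, \dots, K\}$—and the $i=1$ boundary is absorbed by the $\pi_0 = 0$ convention. Once this partition is in place, all remaining work is elementary integration of exponentials.
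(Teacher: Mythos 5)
Your proposal is correct and follows essentially the same route as the paper's proof: condition on the value of the winning coordinate, multiply the independent (truncated) CDFs, partition the integration domain according to where each competitor's CDF saturates at $1$, and integrate the resulting exponentials piecewise. Your substitution $u=e^{-y}$ is only a cosmetic variant of the paper's use of the antiderivative $\int e^{-g}e^{-Ae^{-g}}\,\dm g=e^{-Ae^{-g}}/A$, and your handling of the $\pi_0=0$ boundary and the nonnegativity of $\beta(i)$ matches the paper's argument.
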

\textit{To the best of our knowledge, this formulation has not been revealed in previous works}. Intuitively, with truncated Gumbel, the original class probabilities $\pi_n$ are shifted to $\pi_n'=\pi_n\sum_{i=1}^{n}\beta(i)$. This has two main implications: (1) As $\beta(i)\geq 0$ and $\pi_n$ are sorted, if $\pi_{n_1}>\pi_{n_2}$, the adjusted class probabilities satisfy $\frac{\pi_{n_1}'}{\pi_{n_2}'}>\frac{\pi_{n_1}}{\pi_{n_2}}$. This indicates that relatively larger probabilities are further amplified, creating an effect similar to lowering the temperature. (2) In the sampling step, the probability of unmasking is adjusted based on the network output, resulting in unequal unmasking probabilities at different positions in a sequence. This implies that some tokens are prioritized to be unmasked, further reducing the randomness and overall entropy.

In both aspects, the inaccurate categorical sampling deviates from theoretical correctness and reduces the generation diversity, leading to unfair evaluations of MDMs' generative performance.
\section{A Fair Evaluation of MDMs' Generation}
\label{sec:results}
In this section, we will fairly evaluate the generation performance of MDMs and examine the impact of our proposed sampler and the temperature. Our experiments are based on the codebase of MDLM~\citep{sahoo2024simple} which is inherited from SEDD~\citep{lou2023discrete}. We fix the categorical sampling to 64-bit floating-point precision so that the numerical truncation is negligible. We directly use pretrained models (AR, SEDD Absorb, MDLM) provided by MDLM, which share the same network architecture and were trained with the same configuration. Additional experiment details are provided in Appendix~\ref{appendix:details}. We display some generated text in Appendix~\ref{appendix:generated-text} to illustrate the token diversity under different sampling strategies.
\subsection{Original Sampler v.s. First-Hitting Sampler}
\begin{figure}[htbp]
    \centering
    \subfloat[Generative Perplexity]{
        \includegraphics[width=0.4\linewidth]{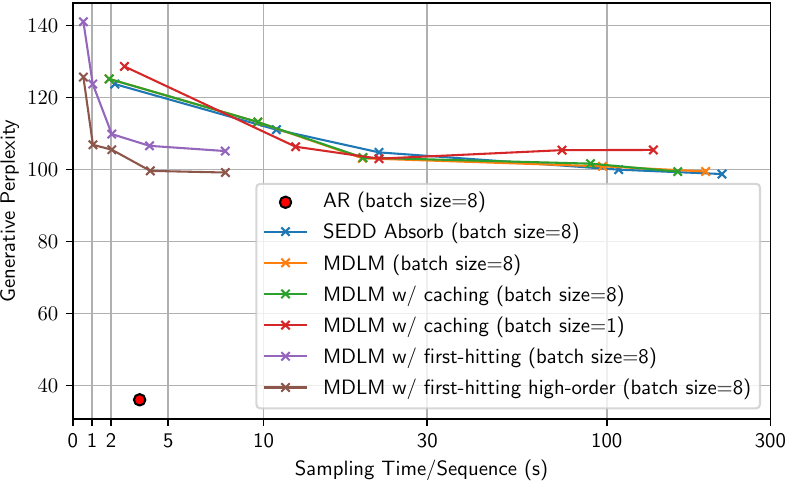}
        \label{fig:fp64_gen_ppl}
    }
    \hspace{0.1in}
    \subfloat[Entropy]{
        \includegraphics[width=0.4\linewidth]{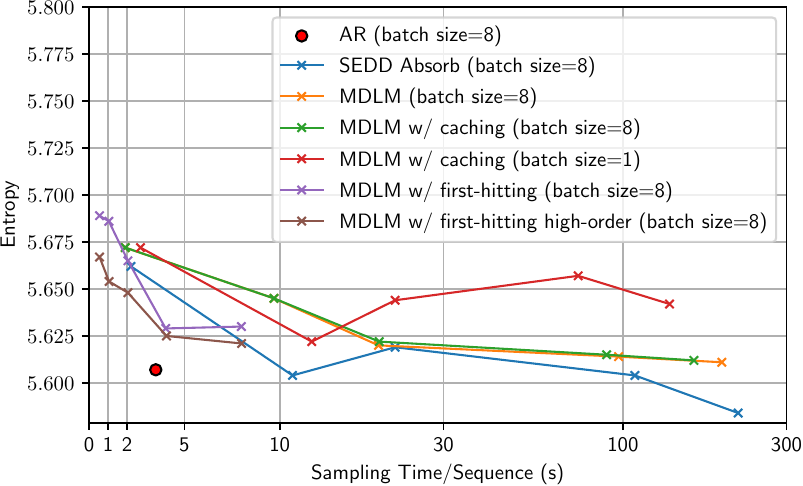}
        \label{fig:fp64_entro}
    }
    
    \caption{Comparisons of different models after fixing the categorical sampling to 64-bit. We additional compare our propose first-hitting sampler with steps $N\in\{64,128,256,512,1024\}$.}
    \label{fig:fp64_results}
    \vspace{-0.1in}
\end{figure}
Figure~\ref{fig:fp64_results} compares both the generative perplexity and the entropy of different models. For the baselines, SEDD is sampled by their analytic sampler (Tweedie $\tau$-leaping), and MDLM is sampled with and without the caching strategy. For our first-hitting sampler, the parallel decoding is performed by unmasking the same number of tokens per step. High-order variants employ the extrapolation strategy when the number of sampling steps $N \leq 128$, and the predictor-corrector strategy otherwise.

After the numerical problem is fixed, the entropy returns to a normal level (5.60$\sim$5.70) for all models. Besides, our sampler can be up to 20$\times$ faster than previous sampling strategies of MDMs in terms of the wall-clock time\footnote{The efficiency gains (measured by inference wall-clock time) can depend on many factors and may not be as large as 20x in other settings (analyzed in Appendix~\ref{appendix:efficiency-analysis}).}. Despite the notable speedup, the true generative perplexity of MDMs is revealed to be around 100, significantly lagging behind that of counterpart ARMs ($<40$).
\subsection{Trading Off Generative Perplexity and Entropy via Temperature}
\begin{wrapfigure}[10]{r}{0.3\textwidth}
\vspace{-0.2in}
\includegraphics[width=1.0\linewidth]{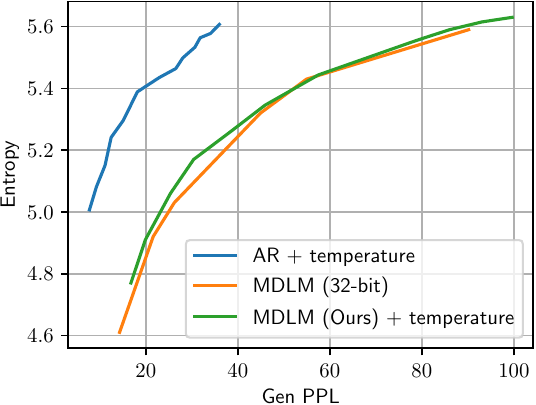}
\vspace{-0.2in}
\caption{\label{fig:temperature} \small{Trade-off of generative perplexity and entropy.}}
\end{wrapfigure}
The truncation effect of 32-bit floating-point numbers creates a trade-off between generative perplexity and entropy by varying the number of sampling steps (Figure~\ref{fig:fp32_results}). This trade-off arises from a tricky interplay of inaccurate categorical sampling and the approximation error at limited discretization steps. In Figure~\ref{fig:temperature}, we demonstrate that this trade-off can be achieved at a lower time cost by using the correct sampling (our 1024-step high-order sampler) and manually adjusting the temperature within the range $[0.8,1.0]$. The trade-off curve of our method is slightly better than the original MDM sampling, while still significantly lagging behind ARMs.
\section{Conclusion}
In this work, we advance our understanding of masked diffusion models (MDMs) by revealing their theoretical equivalence to masked models and uncovering a hidden numerical issue that compromised the fairness of previous evaluations of MDMs' generative performance. Our findings challenge earlier claims that MDMs can surpass ARMs in text generation. Despite these negative results, we acknowledge that our text-based experiments may inherently favor ARMs, as text naturally follows a left-to-right order that ARMs are better suited to model. Nevertheless, we believe that MDMs may hold potential for applications where an order-agnostic data structure is a key prior, and in practice, simply using masked models may be a better choice.

\section*{Acknowledgments}
%The work was supported and conducted during Kaiwen Zheng's internship at NVIDIA. 
The team would like to thank Aaron Lou, Cheng Lu from OpenAI, and Jiaxin Shi from Google DeepMind for their valuable discussions and comments. K. Z and J. Z were also supported by the National Natural Science Foundation of China (Nos. 62350080, 62106120, 92270001), Tsinghua Institute for Guo Qiang, and the High Performance Computing Center, Tsinghua University; J. Z was also supported by the XPlorer Prize.

\bibliography{iclr2025_conference}

\begin{thebibliography}{63}
\providecommand{\natexlab}[1]{#1}
\providecommand{\url}[1]{\texttt{#1}}
\expandafter\ifx\csname urlstyle\endcsname\relax
  \providecommand{\doi}[1]{doi: #1}\else
  \providecommand{\doi}{doi: \begingroup \urlstyle{rm}\Url}\fi

\bibitem[Achiam et~al.(2023)Achiam, Adler, Agarwal, Ahmad, Akkaya, Aleman, Almeida, Altenschmidt, Altman, Anadkat, et~al.]{achiam2023gpt}
Josh Achiam, Steven Adler, Sandhini Agarwal, Lama Ahmad, Ilge Akkaya, Florencia~Leoni Aleman, Diogo Almeida, Janko Altenschmidt, Sam Altman, Shyamal Anadkat, et~al.
\newblock Gpt-4 technical report.
\newblock \emph{arXiv preprint arXiv:2303.08774}, 2023.

\bibitem[Anderson(2012)]{anderson2012continuous}
William~J Anderson.
\newblock \emph{Continuous-time Markov chains: An applications-oriented approach}.
\newblock Springer Science \& Business Media, 2012.

\bibitem[Austin et~al.(2021)Austin, Johnson, Ho, Tarlow, and Van Den~Berg]{austin2021structured}
Jacob Austin, Daniel~D Johnson, Jonathan Ho, Daniel Tarlow, and Rianne Van Den~Berg.
\newblock Structured denoising diffusion models in discrete state-spaces.
\newblock \emph{Advances in Neural Information Processing Systems}, 34:\penalty0 17981--17993, 2021.

\bibitem[Balaji et~al.(2022)Balaji, Nah, Huang, Vahdat, Song, Zhang, Kreis, Aittala, Aila, Laine, et~al.]{balaji2022ediff}
Yogesh Balaji, Seungjun Nah, Xun Huang, Arash Vahdat, Jiaming Song, Qinsheng Zhang, Karsten Kreis, Miika Aittala, Timo Aila, Samuli Laine, et~al.
\newblock ediff-i: Text-to-image diffusion models with an ensemble of expert denoisers.
\newblock \emph{arXiv preprint arXiv:2211.01324}, 2022.

\bibitem[Bao et~al.(2024)Bao, Xiang, Yue, He, Zhu, Zheng, Zhao, Liu, Wang, and Zhu]{bao2024vidu}
Fan Bao, Chendong Xiang, Gang Yue, Guande He, Hongzhou Zhu, Kaiwen Zheng, Min Zhao, Shilong Liu, Yaole Wang, and Jun Zhu.
\newblock Vidu: a highly consistent, dynamic and skilled text-to-video generator with diffusion models.
\newblock \emph{arXiv preprint arXiv:2405.04233}, 2024.

\bibitem[Brown et~al.(2020)Brown, Mann, Ryder, Subbiah, Kaplan, Dhariwal, Neelakantan, Shyam, Sastry, Askell, et~al.]{brown2020language}
Tom Brown, Benjamin Mann, Nick Ryder, Melanie Subbiah, Jared~D Kaplan, Prafulla Dhariwal, Arvind Neelakantan, Pranav Shyam, Girish Sastry, Amanda Askell, et~al.
\newblock Language models are few-shot learners.
\newblock \emph{Advances in neural information processing systems}, 33:\penalty0 1877--1901, 2020.

\bibitem[Campbell et~al.(2022)Campbell, Benton, De~Bortoli, Rainforth, Deligiannidis, and Doucet]{campbell2022continuous}
Andrew Campbell, Joe Benton, Valentin De~Bortoli, Thomas Rainforth, George Deligiannidis, and Arnaud Doucet.
\newblock A continuous time framework for discrete denoising models.
\newblock \emph{Advances in Neural Information Processing Systems}, 35:\penalty0 28266--28279, 2022.

\bibitem[Chang et~al.(2022)Chang, Zhang, Jiang, Liu, and Freeman]{chang2022maskgit}
Huiwen Chang, Han Zhang, Lu~Jiang, Ce~Liu, and William~T Freeman.
\newblock Maskgit: Masked generative image transformer.
\newblock In \emph{Proceedings of the IEEE/CVF Conference on Computer Vision and Pattern Recognition}, pp.\  11315--11325, 2022.

\bibitem[Chen \& Ying(2024)Chen and Ying]{chen2024convergence}
Hongrui Chen and Lexing Ying.
\newblock Convergence analysis of discrete diffusion model: Exact implementation through uniformization.
\newblock \emph{arXiv preprint arXiv:2402.08095}, 2024.

\bibitem[Chen et~al.(2021)Chen, Zhang, Zen, Weiss, Norouzi, and Chan]{chen2020wavegrad}
Nanxin Chen, Yu~Zhang, Heiga Zen, Ron~J. Weiss, Mohammad Norouzi, and William Chan.
\newblock Wavegrad: Estimating gradients for waveform generation.
\newblock In \emph{International Conference on Learning Representations}, 2021.

\bibitem[Chen et~al.(2022)Chen, Zhang, and Hinton]{chen2022analog}
Ting Chen, Ruixiang Zhang, and Geoffrey Hinton.
\newblock Analog bits: Generating discrete data using diffusion models with self-conditioning.
\newblock \emph{arXiv preprint arXiv:2208.04202}, 2022.

\bibitem[Chen et~al.(2023)Chen, Yuan, Li, Kou, Zhang, and Gu]{chen2023fast}
Zixiang Chen, Huizhuo Yuan, Yongqian Li, Yiwen Kou, Junkai Zhang, and Quanquan Gu.
\newblock Fast sampling via de-randomization for discrete diffusion models.
\newblock \emph{arXiv preprint arXiv:2312.09193}, 2023.

\bibitem[Devlin et~al.(2019)Devlin, Chang, Lee, and Toutanova]{devlin2019bert}
Jacob Devlin, Ming-Wei Chang, Kenton Lee, and Kristina Toutanova.
\newblock Bert: Pre-training of deep bidirectional transformers for language understanding.
\newblock In \emph{Proceedings of the 2019 Conference of the North American Chapter of the Association for Computational Linguistics: Human Language Technologies, Volume 1 (Long and Short Papers)}, pp.\  4171--4186, 2019.

\bibitem[Dhariwal \& Nichol(2021)Dhariwal and Nichol]{dhariwal2021diffusion}
Prafulla Dhariwal and Alexander~Quinn Nichol.
\newblock Diffusion models beat {GAN}s on image synthesis.
\newblock In \emph{Advances in Neural Information Processing Systems}, volume~34, pp.\  8780--8794, 2021.

\bibitem[Esser et~al.(2024)Esser, Kulal, Blattmann, Entezari, M{\"u}ller, Saini, Levi, Lorenz, Sauer, Boesel, et~al.]{esser2024scaling}
Patrick Esser, Sumith Kulal, Andreas Blattmann, Rahim Entezari, Jonas M{\"u}ller, Harry Saini, Yam Levi, Dominik Lorenz, Axel Sauer, Frederic Boesel, et~al.
\newblock Scaling rectified flow transformers for high-resolution image synthesis.
\newblock In \emph{Forty-first International Conference on Machine Learning}, 2024.

\bibitem[Ghazvininejad et~al.(2019)Ghazvininejad, Levy, Liu, and Zettlemoyer]{ghazvininejad2019mask}
Marjan Ghazvininejad, Omer Levy, Yinhan Liu, and Luke Zettlemoyer.
\newblock Mask-predict: Parallel decoding of conditional masked language models.
\newblock In \emph{Proceedings of the 2019 Conference on Empirical Methods in Natural Language Processing and the 9th International Joint Conference on Natural Language Processing (EMNLP-IJCNLP)}, pp.\  6112--6121, 2019.

\bibitem[Gokaslan et~al.(2019)Gokaslan, Cohen, Pavlick, and Tellex]{Gokaslan2019OpenWeb}
Aaron Gokaslan, Vanya Cohen, Ellie Pavlick, and Stefanie Tellex.
\newblock Openwebtext corpus.
\newblock \url{http://Skylion007.github.io/OpenWebTextCorpus}, 2019.

\bibitem[Gonzalez et~al.(2024)Gonzalez, Fernandez~Pinto, Tran, Hajri, Masmoudi, et~al.]{gonzalez2024seeds}
Martin Gonzalez, Nelson Fernandez~Pinto, Thuy Tran, Hatem Hajri, Nader Masmoudi, et~al.
\newblock Seeds: Exponential sde solvers for fast high-quality sampling from diffusion models.
\newblock \emph{Advances in Neural Information Processing Systems}, 36, 2024.

\bibitem[Gumbel(1935)]{gumbel1935valeurs}
Emil~Julius Gumbel.
\newblock Les valeurs extr{\^e}mes des distributions statistiques.
\newblock In \emph{Annales de l'institut Henri Poincar{\'e}}, volume~5, pp.\  115--158, 1935.

\bibitem[Gumbel(1954)]{gumbel1954statistical}
Emil~Julius Gumbel.
\newblock \emph{Statistical theory of extreme values and some practical applications: a series of lectures}, volume~33.
\newblock US Government Printing Office, 1954.

\bibitem[Gupta et~al.(2023)Gupta, Yu, Sohn, Gu, Hahn, Fei-Fei, Essa, Jiang, and Lezama]{gupta2023photorealistic}
Agrim Gupta, Lijun Yu, Kihyuk Sohn, Xiuye Gu, Meera Hahn, Li~Fei-Fei, Irfan Essa, Lu~Jiang, and Jos{\'e} Lezama.
\newblock Photorealistic video generation with diffusion models.
\newblock \emph{arXiv preprint arXiv:2312.06662}, 2023.

\bibitem[He et~al.(2022)He, Chen, Xie, Li, Doll{\'a}r, and Girshick]{he2022masked}
Kaiming He, Xinlei Chen, Saining Xie, Yanghao Li, Piotr Doll{\'a}r, and Ross Girshick.
\newblock Masked autoencoders are scalable vision learners.
\newblock In \emph{Proceedings of the IEEE/CVF conference on computer vision and pattern recognition}, pp.\  16000--16009, 2022.

\bibitem[Ho et~al.(2020)Ho, Jain, and Abbeel]{ho2020denoising}
Jonathan Ho, Ajay Jain, and Pieter Abbeel.
\newblock Denoising diffusion probabilistic models.
\newblock In \emph{Advances in Neural Information Processing Systems}, volume~33, pp.\  6840--6851, 2020.

\bibitem[Ho et~al.(2022)Ho, Chan, Saharia, Whang, Gao, Gritsenko, Kingma, Poole, Norouzi, Fleet, et~al.]{ho2022imagen}
Jonathan Ho, William Chan, Chitwan Saharia, Jay Whang, Ruiqi Gao, Alexey Gritsenko, Diederik~P Kingma, Ben Poole, Mohammad Norouzi, David~J Fleet, et~al.
\newblock Imagen video: High definition video generation with diffusion models.
\newblock \emph{arXiv preprint arXiv:2210.02303}, 2022.

\bibitem[Hoogeboom et~al.(2021{\natexlab{a}})Hoogeboom, Gritsenko, Bastings, Poole, Berg, and Salimans]{hoogeboom2021autoregressive}
Emiel Hoogeboom, Alexey~A Gritsenko, Jasmijn Bastings, Ben Poole, Rianne van~den Berg, and Tim Salimans.
\newblock Autoregressive diffusion models.
\newblock \emph{arXiv preprint arXiv:2110.02037}, 2021{\natexlab{a}}.

\bibitem[Hoogeboom et~al.(2021{\natexlab{b}})Hoogeboom, Nielsen, Jaini, Forr{\'e}, and Welling]{hoogeboom2021argmax}
Emiel Hoogeboom, Didrik Nielsen, Priyank Jaini, Patrick Forr{\'e}, and Max Welling.
\newblock Argmax flows and multinomial diffusion: Learning categorical distributions.
\newblock \emph{Advances in Neural Information Processing Systems}, 34:\penalty0 12454--12465, 2021{\natexlab{b}}.

\bibitem[Jang et~al.(2016)Jang, Gu, and Poole]{jang2016categorical}
Eric Jang, Shixiang Gu, and Ben Poole.
\newblock Categorical reparameterization with gumbel-softmax.
\newblock \emph{arXiv preprint arXiv:1611.01144}, 2016.

\bibitem[Jiang et~al.(2023)Jiang, Sablayrolles, Mensch, Bamford, Chaplot, Casas, Bressand, Lengyel, Lample, Saulnier, et~al.]{jiang2023mistral}
Albert~Q Jiang, Alexandre Sablayrolles, Arthur Mensch, Chris Bamford, Devendra~Singh Chaplot, Diego de~las Casas, Florian Bressand, Gianna Lengyel, Guillaume Lample, Lucile Saulnier, et~al.
\newblock Mistral 7b.
\newblock \emph{arXiv preprint arXiv:2310.06825}, 2023.

\bibitem[Karras et~al.(2022)Karras, Aittala, Aila, and Laine]{karras2022elucidating}
Tero Karras, Miika Aittala, Timo Aila, and Samuli Laine.
\newblock Elucidating the design space of diffusion-based generative models.
\newblock In \emph{Advances in Neural Information Processing Systems}, 2022.

\bibitem[Kelly(2011)]{kelly2011reversibility}
Frank~P Kelly.
\newblock \emph{Reversibility and stochastic networks}.
\newblock Cambridge University Press, 2011.

\bibitem[Kingma \& Welling(2014)Kingma and Welling]{kingma2013auto}
Diederik~P. Kingma and Max Welling.
\newblock Auto-encoding variational bayes.
\newblock In \emph{International Conference on Learning Representations}, 2014.

\bibitem[Kingma et~al.(2021)Kingma, Salimans, Poole, and Ho]{kingma2021variational}
Diederik~P Kingma, Tim Salimans, Ben Poole, and Jonathan Ho.
\newblock Variational diffusion models.
\newblock In \emph{Advances in Neural Information Processing Systems}, 2021.

\bibitem[Lipman et~al.(2022)Lipman, Chen, Ben-Hamu, Nickel, and Le]{lipman2022flow}
Yaron Lipman, Ricky~TQ Chen, Heli Ben-Hamu, Maximilian Nickel, and Matt Le.
\newblock Flow matching for generative modeling.
\newblock \emph{arXiv preprint arXiv:2210.02747}, 2022.

\bibitem[Lou et~al.(2023)Lou, Meng, and Ermon]{lou2023discrete}
Aaron Lou, Chenlin Meng, and Stefano Ermon.
\newblock Discrete diffusion language modeling by estimating the ratios of the data distribution.
\newblock \emph{arXiv preprint arXiv:2310.16834}, 2023.

\bibitem[Lu et~al.(2022{\natexlab{a}})Lu, Zheng, Bao, Chen, Li, and Zhu]{lu2022maximum}
Cheng Lu, Kaiwen Zheng, Fan Bao, Jianfei Chen, Chongxuan Li, and Jun Zhu.
\newblock Maximum likelihood training for score-based diffusion odes by high order denoising score matching.
\newblock In \emph{International Conference on Machine Learning}, pp.\  14429--14460. PMLR, 2022{\natexlab{a}}.

\bibitem[Lu et~al.(2022{\natexlab{b}})Lu, Zhou, Bao, Chen, Li, and Zhu]{lu2022dpm}
Cheng Lu, Yuhao Zhou, Fan Bao, Jianfei Chen, Chongxuan Li, and Jun Zhu.
\newblock Dpm-solver: A fast ode solver for diffusion probabilistic model sampling in around 10 steps.
\newblock In \emph{Advances in Neural Information Processing Systems}, 2022{\natexlab{b}}.

\bibitem[Meng et~al.(2022)Meng, Choi, Song, and Ermon]{meng2022concrete}
Chenlin Meng, Kristy Choi, Jiaming Song, and Stefano Ermon.
\newblock Concrete score matching: Generalized score matching for discrete data.
\newblock \emph{Advances in Neural Information Processing Systems}, 35:\penalty0 34532--34545, 2022.

\bibitem[Nichol \& Dhariwal(2021)Nichol and Dhariwal]{nichol2021improved}
Alexander~Quinn Nichol and Prafulla Dhariwal.
\newblock Improved denoising diffusion probabilistic models.
\newblock In \emph{International Conference on Machine Learning}, pp.\  8162--8171. PMLR, 2021.

\bibitem[Ou et~al.(2024)Ou, Nie, Xue, Zhu, Sun, Li, and Li]{ou2024your}
Jingyang Ou, Shen Nie, Kaiwen Xue, Fengqi Zhu, Jiacheng Sun, Zhenguo Li, and Chongxuan Li.
\newblock Your absorbing discrete diffusion secretly models the conditional distributions of clean data.
\newblock \emph{arXiv preprint arXiv:2406.03736}, 2024.

\bibitem[Peebles \& Xie(2023)Peebles and Xie]{peebles2023scalable}
William Peebles and Saining Xie.
\newblock Scalable diffusion models with transformers.
\newblock In \emph{Proceedings of the IEEE/CVF International Conference on Computer Vision}, pp.\  4195--4205, 2023.

\bibitem[Radford et~al.(2018)Radford, Narasimhan, Salimans, Sutskever, et~al.]{radford2018improving}
Alec Radford, Karthik Narasimhan, Tim Salimans, Ilya Sutskever, et~al.
\newblock Improving language understanding by generative pre-training.
\newblock 2018.

\bibitem[Radford et~al.(2019)Radford, Wu, Child, Luan, Amodei, Sutskever, et~al.]{radford2019language}
Alec Radford, Jeffrey Wu, Rewon Child, David Luan, Dario Amodei, Ilya Sutskever, et~al.
\newblock Language models are unsupervised multitask learners.
\newblock \emph{OpenAI blog}, 1\penalty0 (8):\penalty0 9, 2019.

\bibitem[Rombach et~al.(2022)Rombach, Blattmann, Lorenz, Esser, and Ommer]{rombach2022high}
Robin Rombach, Andreas Blattmann, Dominik Lorenz, Patrick Esser, and Bj{\"o}rn Ommer.
\newblock High-resolution image synthesis with latent diffusion models.
\newblock In \emph{Proceedings of the IEEE/CVF Conference on Computer Vision and Pattern Recognition}, pp.\  10684--10695, 2022.

\bibitem[Sahoo et~al.(2024)Sahoo, Arriola, Schiff, Gokaslan, Marroquin, Chiu, Rush, and Kuleshov]{sahoo2024simple}
Subham~Sekhar Sahoo, Marianne Arriola, Yair Schiff, Aaron Gokaslan, Edgar Marroquin, Justin~T Chiu, Alexander Rush, and Volodymyr Kuleshov.
\newblock Simple and effective masked diffusion language models.
\newblock \emph{arXiv preprint arXiv:2406.07524}, 2024.

\bibitem[Shi et~al.(2024)Shi, Han, Wang, Doucet, and Titsias]{shi2024simplified}
Jiaxin Shi, Kehang Han, Zhe Wang, Arnaud Doucet, and Michalis~K Titsias.
\newblock Simplified and generalized masked diffusion for discrete data.
\newblock \emph{arXiv preprint arXiv:2406.04329}, 2024.

\bibitem[Sohl-Dickstein et~al.(2015)Sohl-Dickstein, Weiss, Maheswaranathan, and Ganguli]{sohl2015deep}
Jascha Sohl-Dickstein, Eric Weiss, Niru Maheswaranathan, and Surya Ganguli.
\newblock Deep unsupervised learning using nonequilibrium thermodynamics.
\newblock In \emph{International Conference on Machine Learning}, pp.\  2256--2265. PMLR, 2015.

\bibitem[Song et~al.(2021{\natexlab{a}})Song, Meng, and Ermon]{song2020denoising}
Jiaming Song, Chenlin Meng, and Stefano Ermon.
\newblock Denoising diffusion implicit models.
\newblock In \emph{International Conference on Learning Representations}, 2021{\natexlab{a}}.

\bibitem[Song et~al.(2021{\natexlab{b}})Song, Durkan, Murray, and Ermon]{song2021maximum}
Yang Song, Conor Durkan, Iain Murray, and Stefano Ermon.
\newblock Maximum likelihood training of score-based diffusion models.
\newblock In \emph{Advances in Neural Information Processing Systems}, volume~34, pp.\  1415--1428, 2021{\natexlab{b}}.

\bibitem[Song et~al.(2021{\natexlab{c}})Song, Sohl{-}Dickstein, Kingma, Kumar, Ermon, and Poole]{song2020score}
Yang Song, Jascha Sohl{-}Dickstein, Diederik~P. Kingma, Abhishek Kumar, Stefano Ermon, and Ben Poole.
\newblock Score-based generative modeling through stochastic differential equations.
\newblock In \emph{International Conference on Learning Representations}, 2021{\natexlab{c}}.

\bibitem[Su et~al.(2024)Su, Ahmed, Lu, Pan, Bo, and Liu]{su2024roformer}
Jianlin Su, Murtadha Ahmed, Yu~Lu, Shengfeng Pan, Wen Bo, and Yunfeng Liu.
\newblock Roformer: Enhanced transformer with rotary position embedding.
\newblock \emph{Neurocomputing}, 568:\penalty0 127063, 2024.

\bibitem[Sun et~al.(2022)Sun, Yu, Dai, Schuurmans, and Dai]{sun2022score}
Haoran Sun, Lijun Yu, Bo~Dai, Dale Schuurmans, and Hanjun Dai.
\newblock Score-based continuous-time discrete diffusion models.
\newblock \emph{arXiv preprint arXiv:2211.16750}, 2022.

\bibitem[Touvron et~al.(2023)Touvron, Lavril, Izacard, Martinet, Lachaux, Lacroix, Rozi{\`e}re, Goyal, Hambro, Azhar, et~al.]{touvron2023llama}
Hugo Touvron, Thibaut Lavril, Gautier Izacard, Xavier Martinet, Marie-Anne Lachaux, Timoth{\'e}e Lacroix, Baptiste Rozi{\`e}re, Naman Goyal, Eric Hambro, Faisal Azhar, et~al.
\newblock Llama: Open and efficient foundation language models.
\newblock \emph{arXiv preprint arXiv:2302.13971}, 2023.

\bibitem[Uria et~al.(2014)Uria, Murray, and Larochelle]{uria2014deep}
Benigno Uria, Iain Murray, and Hugo Larochelle.
\newblock A deep and tractable density estimator.
\newblock In \emph{International Conference on Machine Learning}, pp.\  467--475. PMLR, 2014.

\bibitem[Vaswani et~al.(2017)Vaswani, Shazeer, Parmar, Uszkoreit, Jones, Gomez, Kaiser, and Polosukhin]{vaswani2017attention}
Ashish Vaswani, Noam Shazeer, Niki Parmar, Jakob Uszkoreit, Llion Jones, Aidan~N Gomez, {\L}ukasz Kaiser, and Illia Polosukhin.
\newblock Attention is all you need.
\newblock \emph{Advances in neural information processing systems}, 30, 2017.

\bibitem[Wang et~al.(2018)Wang, Singh, Michael, Hill, Levy, and Bowman]{wang2018glue}
Alex Wang, Amanpreet Singh, Julian Michael, Felix Hill, Omer Levy, and Samuel~R Bowman.
\newblock Glue: A multi-task benchmark and analysis platform for natural language understanding.
\newblock \emph{arXiv preprint arXiv:1804.07461}, 2018.

\bibitem[Yang et~al.(2023)Yang, Yu, Wang, Wang, Weng, Zou, and Yu]{yang2023diffsound}
Dongchao Yang, Jianwei Yu, Helin Wang, Wen Wang, Chao Weng, Yuexian Zou, and Dong Yu.
\newblock Diffsound: Discrete diffusion model for text-to-sound generation.
\newblock \emph{IEEE/ACM Transactions on Audio, Speech, and Language Processing}, 31:\penalty0 1720--1733, 2023.

\bibitem[Zhang et~al.(2024)Zhang, Huang, Zhang, Wei, Zhu, and Chen]{zhang2024sageattention2}
Jintao Zhang, Haofeng Huang, Pengle Zhang, Jia Wei, Jun Zhu, and Jianfei Chen.
\newblock Sageattention2: Efficient attention with thorough outlier smoothing and per-thread int4 quantization.
\newblock \emph{arXiv preprint arXiv:2411.10958}, 2024.

\bibitem[Zhang et~al.(2025{\natexlab{a}})Zhang, Wei, Zhang, Zhu, and Chen]{zhang2025sageattention}
Jintao Zhang, Jia Wei, Pengle Zhang, Jun Zhu, and Jianfei Chen.
\newblock Sageattention: Accurate 8-bit attention for plug-and-play inference acceleration.
\newblock In \emph{International Conference on Learning Representations (ICLR)}, 2025{\natexlab{a}}.

\bibitem[Zhang et~al.(2025{\natexlab{b}})Zhang, Xiang, Huang, Wei, Xi, Zhu, and Chen]{zhang2025spargeattn}
Jintao Zhang, Chendong Xiang, Haofeng Huang, Jia Wei, Haocheng Xi, Jun Zhu, and Jianfei Chen.
\newblock Spargeattn: Accurate sparse attention accelerating any model inference.
\newblock \emph{arXiv preprint arXiv:2502.18137}, 2025{\natexlab{b}}.

\bibitem[Zhang \& Chen(2022)Zhang and Chen]{zhang2022fast}
Qinsheng Zhang and Yongxin Chen.
\newblock Fast sampling of diffusion models with exponential integrator.
\newblock In \emph{The Eleventh International Conference on Learning Representations}, 2022.

\bibitem[Zheng et~al.(2023{\natexlab{a}})Zheng, Lu, Chen, and Zhu]{zheng2023dpm}
Kaiwen Zheng, Cheng Lu, Jianfei Chen, and Jun Zhu.
\newblock Dpm-solver-v3: Improved diffusion ode solver with empirical model statistics.
\newblock In \emph{Thirty-seventh Conference on Neural Information Processing Systems}, 2023{\natexlab{a}}.

\bibitem[Zheng et~al.(2023{\natexlab{b}})Zheng, Lu, Chen, and Zhu]{zheng2023improved}
Kaiwen Zheng, Cheng Lu, Jianfei Chen, and Jun Zhu.
\newblock Improved techniques for maximum likelihood estimation for diffusion odes.
\newblock In \emph{International Conference on Machine Learning}, pp.\  42363--42389. PMLR, 2023{\natexlab{b}}.

\bibitem[Zheng et~al.(2024)Zheng, He, Chen, Bao, and Zhu]{zheng2024diffusion}
Kaiwen Zheng, Guande He, Jianfei Chen, Fan Bao, and Jun Zhu.
\newblock Diffusion bridge implicit models.
\newblock \emph{arXiv preprint arXiv:2405.15885}, 2024.

\end{thebibliography}
\bibliographystyle{iclr2025_conference}
\clearpage
\appendix
\tableofcontents
% \allowdisplaybreaks
\section{Notations and Definitions}
\centerline{\bf Numbers and Arrays}
\bgroup
\def\arraystretch{1.5}
\begin{tabular}{p{1in}p{4.0in}}
$\displaystyle x$ & A scalar representing a discrete token\\
$\displaystyle \x$ & A vector representing a sequence of discrete tokens\\
$x^{(l)}$&The $l$-th element of $\x$\\
$x_t,\x_t$&The state(s) at time $t$\\
$\x_n$&The sequence with $n$ masked tokens\\
$t$&The continuous time\\
$m$&The mask token\\
$n$&The number of masked tokens in a sequence\\
$\muv$&A matrix, where the $l$-th column represents the predicted transition probabilities at the $l$-th position in a sequence\\
$\muv^{(l)}$&The $l$-th column of $\muv$\\
$\piv$&The class probabilities\\
$\pi_i$&The $i$-th element of $\piv$\\
$L$&The sequence length\\
$N$&The number of sampling steps\\
$B$&The batch size\\
$\theta$&The neural network parameters\\
$\tau$&The first-hitting time\\
$\Lc_\infty$&The continuous-time NELBO loss for a single token\\
$\Lc_\infty^{(L)}$&The continuous-time NELBO loss for a sequence of length $L$\\
\end{tabular}
\egroup
\vspace{0.25cm}

\centerline{\bf Sets}
\bgroup
\def\arraystretch{1.5}

\begin{tabular}{p{1in}p{4.0in}}
$\displaystyle \R$ & The set of real numbers \\
$\displaystyle \Xc$ & The discrete data space (vocabulary) $\{0, 1, \dots, m\}$ where $m$ is the added mask token\\
$\Delta^m$&The standard $m$-simplex $\{\piv\in\R^{m+1}|\sum_{i=0}^m\pi_i=1,\piv\geq 0\}$\\
\end{tabular}
\vspace{0.25cm}

\centerline{\bf Functions}
\bgroup
\def\arraystretch{1.5}
\begin{tabular}{p{1in}p{4.0in}}
$\alpha_t$&The pre-defined noise schedule, which is a decreasing function of time $t$\\
$\alpha_t'$&The derivative of the noise schedule w.r.t. the time\\
$\alpha^{-1}(a)$&The inverse function of the noise schedule satisfying $\alpha_{\alpha^{-1}(a)}=a$\\
$\delta_{x,y}$ & The indicator function (1 when $x=y$ and 0 when $x\neq y$) \\
$\ev_x$&The one-hot vector of the token $x$\\
$\muv_\theta(\x,t)$&The network prediction given the sequence $\x$ and the time $t$ as input\\
$\softmax (\zv)$&The Softmax operation to transform logits into class probabilities\\
$\log \muv$&The element-wise natural logarithm\\
$N(\x)$&The function counting the number of masked tokens in the sequence $\x$\\
$|\Xc|$&The size of the vocabulary $\Xc$\\
\end{tabular}
\egroup
\vspace{0.25cm}

\centerline{\bf Distributions}
\bgroup
\def\arraystretch{1.5}

\begin{tabular}{p{1in}p{4.0in}}
$q$&The continuous-time forward process\\
$\tilde q$&The discrete forward process\\
$p_\theta$&The parameterized reverse process\\
$\Uc(a,b)$&The uniform distribution on the interval $[a,b]$\\
$\Bc(a,b)$&The Beta distribution with parameters $a,b>0$\\
$\Gc(0,1)$&The standard Gumbel distribution\\
$\Tc\Gc(0,1,M)$&The right-truncated standard Gumbel distribution with threshold $M$\\
$\Cat(\piv)$&The categorical distribution over the class probabilities $\piv$\\
\end{tabular}
\vspace{0.25cm}

\centerline{\bf Abbreviations}
\bgroup
\def\arraystretch{1.5}

\begin{tabular}{p{1in}p{4.0in}}
MDMs&Masked Diffusion Models\\
ARMs&Auto-Regressive Models\\
(N)ELBO&(Negative) Evidence Lower Bound\\
NFE&The Number of Function Evaluations\\
PPL&Perplexity\\
Gen PPL&Generative Perplexity\\
\end{tabular}
\section{Related Work}
\label{appendix:related-work}
\paragraph{Discrete Diffusion Models} Diffusion models are originally built on discrete-time continuous-space Markov chains with Gaussian transition kernels~\citep{sohl2015deep,ho2020denoising}. They are later extended to continuous time with the theory of stochastic processes and score matching~\citep{song2020score}. 

Discrete diffusion models arise from similar contexts of Markov chains but with discrete data space~\citep{sohl2015deep,hoogeboom2021argmax}. D3PM~\citep{austin2021structured} considers discrete-time Markov chains with several types of transition matrices (uniform, absorbing, discretized Gaussian) and derives the discrete-time variational objective (or ELBO), which is further extended to continuous-time Markov chain (CTMC) and the corresponding ELBO\citep{campbell2022continuous}. They employ the mean-parameterization to learn the reverse density $q_{0|t}$. 

Another line of work~\citep{meng2022concrete,lou2023discrete} argues that D3PM implicitly learns the ratio of the marginal distributions $\frac{q_t(\hat x)}{q_t(x)}$, which is referred to as the concrete score—a discrete analog to the score function in continuous diffusion. This ratio is proposed to be directly learned via a regression objective known as concrete score matching~\citep{meng2022concrete}, similar to score matching in continuous diffusion. However, this approach faces challenges in practice due to the incompatibility of the $L_2$ loss and the fact that the ratio $\frac{q_t(\hat x)}{q_t(x)}$ must be positive. To address this issue, SEDD~\citep{lou2023discrete} introduces the score entropy objective as a theoretically more robust surrogate, which also connects the concrete score with the continuous-time ELBO.

Though SEDD considers two types of transitions (uniform, absorb), the absorbing case (masked diffusion) is much more performant in practice. It involves adding a [MASK] token as the absorbing state and modeling the simple transitions between the mask state and unmasked states, akin to the mechanism of masked models. Recent studies~\citep{shi2024simplified,sahoo2024simple} have further aligned the masked diffusion framework with continuous diffusion, resulting in simple and principled training and sampling recipes. This not only provides a unified understanding of masked diffusion models but also enables both theoretical and empirical advancements through improved parameterization and engineering techniques. We mainly follow their framework in this work.
\paragraph{Masked Models and Order-Agnostic Auto-regressive Models} Learning to reconstruct masked tokens (or patches) is an efficient self-supervised manner for both representation learning and generative modeling. The masked modeling paradigm, originally introduced by BERT~\citep{devlin2019bert}, was not initially designed for generative purposes. BERT masks a fixed portion (15\%) of tokens at random\footnote{More specifically, among the 15\% tokens, 80\% are replaced with the [MASK] token, 10\% are replaced with random tokens, and 10\% remain unchanged.}, which supports representation learning and language understanding rather than generating text from scratch. Similarly, the masked autoencoder (MAE)~\citep{he2022masked} adopts this approach for image representation learning but employs a higher masked ratio (75\%).

Masked models can be generative when trained on sequences with a range of masked ratios. Mask-Predict~\citep{ghazvininejad2019mask} extends the number of masked tokens seen during training in BERT and uses the following objective to train a language generation model:
\begin{equation}
    \Lc_{\text{mask}}=-\E_{n\sim p(n)}\E_{\tilde q_{n|0}(\x_n|\x_0)}\left[\sum\nolimits_{l:x_n^{(l)}=m }\ev_{x_0^{(l)}}^\top\log\muv^{(l)}_\theta(\x_n))\right]
\end{equation}
where $p(n)$ is the uniform distribution over the sequence length $L$. MaskGIT~\citep{chang2022maskgit} uses a similar objective for image generation, but selects the number of masked tokens $n$ according to a mask scheduling function $\gamma(t)$: sample $t\sim\Uc(0,1)$, and set $n=\lceil\gamma(t) L\rceil$. Both Mask-Predict and MaskGIT generate samples by parallel decoding. Compared to MDMs, these methods have less theoretical grounding in training and sampling. Specifically, there is no discussion of the ELBO (Eqn.~\eqref{eq:elbo-multi-discrete}) where the likelihood weighting $\frac{1}{n}$ is necessary. Nevertheless, as discussed in Section~\ref{sec:time-independent-network}, their objectives can still lead to the same optimal solution.

The ELBO of masked models is instead revealed in the context of order-agnostic auto-regressive models~\citep{uria2014deep,hoogeboom2021autoregressive}. They factorize the model distribution as $p_\theta(\x_0)=\E_{\sigma\sim\Uc(S_L)}\prod_{n=1}^Lp_\theta(x_0^{\sigma(n)}|\x_0^{\sigma(<n)})$ in the style of ARMs, but with an additional expectation over the index permutation $\sigma$ sampled from the uniform distribution on the set of $L$-permutations $S_L$. By applying Jensen's inequality, the ELBO can be derived as:
\begin{equation}
\begin{aligned}
    \log p_\theta(\x_0)&\geq \E_{\sigma\sim\Uc(S_L)}\sum_{n=1}^L\log p_\theta(x_0^{\sigma(n)}|\x_0^{\sigma(<n)})\\
    &=\E_{\sigma\sim\Uc(S_L)}\sum_{n=1}^L\frac{1}{L-n+1}\sum_{k\in \sigma(\geq n)}\log p_\theta(x_0^{(k)}|\x_0^{\sigma(<n)})
\end{aligned}
\end{equation}
Here $\log p_\theta(x_0^{(k)}|\x_0^{\sigma(<n)})$ (predicted data probability given known tokens) is an equivalent expression for the cross-entropy term $\ev_{x_0^{(k)}}^\top\log\muv^{(k)}_\theta(\x_0^{\sigma(<n)})$: the cross-entropy extracts the $x_0^{(k)}$-th element, $\mu^{(k)}_\theta(\x_0^{\sigma(<n)})_{x_0^{(k)}}$, from the network prediction  
$\muv^{(k)}_\theta(\x_0^{\sigma(<n)})$ as $p_\theta(x_0^{(k)}|\x_0^{\sigma(<n)})$. This can be interpreted as a masked prediction where $\x_0^{\sigma(<n)}$ (the first $n-1$ tokens) is known and $\x_0^{\sigma(\geq n)}$ (the remaining $L-n+1$ tokens) is masked and to be predicted. The cross-entropy loss is averaged over the masked positions. As the last $L-n+1$ positions in a random permutation are equivalent to $L-n+1$ random positions without permutation, this ELBO is equivalent to the ELBO in Eqn.~\eqref{eq:elbo-multi-discrete}.
\paragraph{Training and Sampling Improvements of Diffusion Models}
Since the inception of diffusion models~\citep{ho2020denoising,song2020score}, numerous efforts have been undertaken to enhance their performance, leading to well-established training and sampling recipes. 

Prevalent training improvements include designing noise schedules, modifying the parameterization and 
applying variance reduction techniques~\citep{nichol2021improved,kingma2021variational}. Notably, flow matching~\citep{lipman2022flow} provides a theoretically equivalent variant of diffusion models by employing the straight-line diffusion paths and velocity parameterization. These techniques have been validated in likelihood training of diffusion models, achieving improved density estimation results on image benchmarks~\citep{zheng2023improved}.
The state-of-the-art image diffusion model, EDM~\citep{karras2022elucidating}, designs the parameterization according to their proposed preconditioning and first principles, which is deeply connected to velocity parameterization~\citep{zheng2023improved}. When targeted at maximum likelihood training with the ELBO, instead of improving generation quality (such as FID of generated images), the design space is relatively limited. This is also the case in discrete diffusion for text generation as the perplexity metric is based on likelihood.

Training-free accelerations of diffusion sampling mainly focus on two aspects: reducing stochasticity in the sampling process and leveraging higher-order information. DDIM~\citep{song2020denoising}, along with the extension to diffusion bridges~\citep{zheng2024diffusion}, generalizes the diffusion process to non-Markovian ones with lower levels of stochasticity, enabling faster sampling. Later works connect it to the probability flow ordinary differential equation (PF-ODE) formulations of diffusion models, and build dedicated high-order numerical differential equation solvers~\citep{zhang2022fast,lu2022dpm,zheng2023dpm,gonzalez2024seeds}.

However, adapting these sampling recipes to discrete diffusion is not feasible, as the underlying evolution process of discrete data cannot be described by an ODE. Designing effective samplers for discrete diffusion requires a specialized inspection of the reverse-time Markov chain. Previous works~\citep{chen2024convergence,chen2023fast} leverage the uniformization to convert continuous-time Markov chains into discrete ones, while still requiring time discretizations or approximations of the transition time distribution. Our study is the first to demonstrate that the transition time in MDMs can be sampled analytically without hyperparameter tuning or approximation errors. Infrastructure improvements, such as quantized or sparse attention~\citep{zhang2025sageattention,zhang2025spargeattn,zhang2024sageattention2}, can also be used to accelerate the inference of discrete diffusion models, which are beyond the scope of this work.

\section{Proof}
\subsection{Proof of Proposition~\ref{proposition:elbo}}
\begin{proof}
Denote $n_t=N(\x_t)$ as the number of masked tokens at time $t$. According to the forward process in Eqn.~\eqref{eq:forward}, each token is independently masked with a probability $1-\alpha_t$, and $n_t$ follows the Binomial distribution $B(L,1-\alpha_t)$. The probability mass function is
\begin{equation}
    p_t(n_t)=\binom{L}{n_t}(1-\alpha_t)^{n_t}\alpha_t^{L-n_t},\quad n_t=0,1,\dots,L
\end{equation}
We can rearrange the sequence NELBO in Eqn.~\eqref{eq:elbo-multi} as a partition by the number of masked tokens:
\begin{equation}
\begin{aligned}
    \Lc^{(L)}_\infty&=\int_0^1\frac{\alpha_t'}{1-\alpha_t}\E_{q_{t|0}(\x_t|\x_0)}\left[\sum\nolimits_{l:x_t^{(l)}=m}\ev_{x_0^{(l)}}^\top\log \muv_\theta^{(l)}(\x_t,t)\right]\dm t\\
    &=\int_0^1\frac{\alpha_t'}{1-\alpha_t}\E_{p_t(n_t)}\E_{\tilde q_{n_t|0}(\x_t|\x_0)}\left[\sum\nolimits_{l:x_t^{(l)}=m}\ev_{x_0^{(l)}}^\top\log \muv_\theta^{(l)}(\x_t,t)\right]\dm t\\
    &=\sum_{n=1}^L\int_0^1\frac{\alpha_t'}{1-\alpha_t}p_t(n)\E_{\tilde q_{n|0}(\x_n|\x_0)}\left[\sum\nolimits_{l:x_n^{(l)}=m}\ev_{x_0^{(l)}}^\top\log \muv_\theta^{(l)}(\x_n,t)\right]\dm t\\
    &=\sum_{n=1}^L\E_{\tilde q_{n|0}(\x_n|\x_0)}\Bigg[\sum\nolimits_{l:x_n^{(l)}=m}\ev_{x_0^{(l)}}^\top\Bigg[\underbrace{\int_0^1\frac{\alpha_t'}{1-\alpha_t}p_t(n) \log \muv_\theta(\x_n,t)\dm t}_{\text{time-related term}}\Bigg]^{(l)}\Bigg]
\end{aligned}
\end{equation}
The time-related term can be further simplified as
\begin{equation}
\begin{aligned}
    &\int_0^1\frac{\alpha_t'}{1-\alpha_t}p_t(n) \log \muv_\theta(\x_n,t)\dm t\\
    =&\int_0^1 \frac{\alpha_t'}{1-\alpha_t}\binom{L}{n}(1-\alpha_t)^{n}\alpha_t^{L-n}\log \muv_\theta(\x_n,t) \dm t\\
    =&\binom{L}{n}\int_{\alpha_0}^{\alpha_1} (1-\alpha_t)^{n-1}\alpha_t^{L-n}\log \muv_\theta(\x_n,t)  \dm\alpha_t\\
    =& -\binom{L}{n}\int_{0}^{1} (1-\alpha_t)^{n-1}\alpha_t^{L-n}\log \muv_\theta(\x_n,t)  \dm\alpha_t\\
    =& -\binom{L}{n}\frac{(n-1)!(L-n)!}{L!}\E_{\alpha_n\sim\Bc(L-n+1,n)}\left[\log\muv_\theta(\x_n,\alpha^{-1}(\alpha_n))\right]\\
    =&-\frac{1}{n}\underbrace{\E_{\alpha_n\sim\Bc(L-n+1,n)}\left[\log\muv_\theta(\x_n,\alpha^{-1}(\alpha_n))\right]}_{\coloneqq\log\bar\muv_\theta(\x_n)}
\end{aligned}
\end{equation}
which completes the proof.
\end{proof}
\subsection{Proof of Proposition~\ref{proposition:optimal}}
\begin{proof}
We consider minimizing the sequence NELBO (Eqn.~\eqref{eq:elbo-multi}) under the expectation of the data distribution $q_0(\x_0)$:
\begin{equation}
\begin{aligned}
    \min_{\theta} \E_{q_0(\x_0)}\Lc^{(L)}_\infty&=\int_0^1\frac{\alpha_t'}{1-\alpha_t}\E_{q_0(\x_0)}\E_{q_{t|0}(\x_t|\x_0)}\left[\sum\nolimits_{l:x_t^{(l)}=m}\ev_{x_0^{(l)}}^\top\log \muv_\theta^{(l)}(\x_t,t)\right]\dm t\\
    &=\int_0^1\frac{\alpha_t'}{1-\alpha_t}\E_{q_t(\x_t)}\E_{q_{0|t}(\x_0|\x_t)}\left[\sum\nolimits_{l:x_t^{(l)}=m}\ev_{x_0^{(l)}}^\top\log \muv_\theta^{(l)}(\x_t,t)\right]\dm t\\
    &=\int_0^1\frac{\alpha_t'}{1-\alpha_t}\E_{q_t(\x_t)}\left[\sum\nolimits_{l:x_t^{(l)}=m}\E_{q_{0|t}(\x_0|\x_t)}\left[\ev_{x_0^{(l)}}\right]^\top\log \muv_\theta^{(l)}(\x_t,t)\right]\dm t
\end{aligned}
\end{equation}
The objective is an aggregation of cross-entropy terms over different $t,\x_t,l$. The global minimum is achieved when each cross-entropy term is optimal:
\begin{equation}
    \min_{\theta} -\E_{q_{0|t}(\x_0|\x_t)}\left[\ev_{x_0^{(l)}}\right]^\top\log \muv_\theta^{(l)}(\x_t,t)
\end{equation}
Note that $\muv_\theta^{(l)}=\softmax(\fv_\theta^{(l)})$ is a set of valid class probabilities that sum to 1, and $\E_{q_{0|t}(\x_0|\x_t)}\left[\ev_{x_0^{(l)}}\right]$ also sum to 1. Denote them as $\Pv$ and $\hat \Pv$ respectively, we are essentially minimizing $-\Pv\log\hat \Pv=\kl{\Pv}{\hat\Pv}+H(\Pv)\geq H(\Pv)$, where $\kl{\cdot}{\cdot}$ is the Kullback–Leibler (KL) divergence and $H(\cdot)$ is the entropy. According to the property of KL divergence, the equality holds if and only if $\hat\Pv=\Pv$. This implies that the optimal $\theta^*$ satisfies
\begin{equation}
    \muv_{\theta^*}^{(l)}(\x_t,t)=\E_{q_{0|t}(\x_0|\x_t)}\left[\ev_{x_0^{(l)}}\right]
\end{equation}
This expression is similar to continuous diffusion, where the optimal data predictor is $\muv_{\theta^*}(\x_t,t)=\E_{q_{0|t}(\x_0|\x_t)}\left[\x_0\right]$. The key difference is that, the posterior $q_{0|t}(\x_0|\x_t)$ in MDMs only depends on $\x_t$ and is irrelevant to the time $t$. Denote $n_t$ as the number of masked tokens at time $t$, we have
\begin{equation}
    q_{0|t}(\x_0|\x_t)=\frac{q_0(\x_0)q_{t|0}(\x_t|\x_0)}{q_t(\x_t)}
    =\frac{q_0(\x_0)q_{t|0}(\x_t|\x_0)}{\sum_{\x_0}q_0(\x_0)q_{t|0}(\x_t|\x_0)}
    =\frac{q_0(\x_0)\E_{p_t(n_t)}[\tilde q_{n_t|0}(\x_{t}|\x_0)]}{\sum_{\x_0}q_0(\x_0)\E_{p_t(n_t)}[\tilde q_{n_t|0}(\x_{t}|\x_0)]}
\end{equation}
where $p_t(n_t)$ is distribution of $n_t$ at time $t$, and $\tilde q_{n_t|0}$ is the discrete forward process that randomly masks $n_t$ tokens. As $\x_t$ is known, the number of masked tokens is fixed as $N(\x_t)$, and $\tilde q_{n_t|0}(\x_{t}|\x_0)=0$ for $n_t\neq N(\x_t)$. Therefore,
\begin{equation}
    q_{0|t}(\x_0|\x_t)=\frac{q_0(\x_0)p_t(N(\x_t))\tilde q_{N(\x_t)|0}(\x_t|\x_0)}{\sum_{\x_0}q_0(\x_0)p_t(N(\x_t))\tilde q_{N(\x_t)|0}(\x_t|\x_0)}=\tilde q_{0|N(\x_t)}(\x_0|\x_t)\\
\end{equation}
which completes the proof.
\end{proof}
\subsection{Proof of Proposition~\ref{proposition:sampling}}
We first present a lemma that enables sequential sampling of order statistics and supports the recursive process for sampling the first hitting time.
\begin{lemma}[Uniform Distribution Conditioned on the Maximum]
\label{lemma:cond-uniform}
    Suppose $n$ random variables $u_1,u_2,\dots,u_n$ are independent samples from the uniform distribution $\Uc(0,\theta)$ ($\theta>0$). Given the condition that $u=\max\{u_1,\cdots,u_n\}$, the remaining variables $u_i$ ($u_i\neq u$) follow the distribution $\Uc(0,u)$.
\end{lemma}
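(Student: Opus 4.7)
The plan is a direct density calculation, combined with a symmetry argument to handle the ambiguity of which $u_i$ attains the maximum. First I would condition on the event $\{u_k = \max\}$ for a fixed index $k$; by exchangeability of the $u_i$'s, each index is equally likely to be the argmax, so it suffices to prove the claim conditional on a fixed $k$, after which the unconditional statement follows by averaging. Without loss of generality take $k = n$, and write $\vv = (u_1,\dots,u_{n-1})$.

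Next I would compute the joint density of $(\vv, u_n)$ on the event $\{u_n = \max\}$. Since the $u_i$ are i.i.d.\ $\Uc(0,\theta)$ with joint density $\theta^{-n}$ on $[0,\theta]^n$, restricting to $\{u_i \le u_n \text{ for all } i<n\}$ gives
\begin{equation*}
p(\vv, u_n;\, u_n=\max) = \theta^{-n}\,\mathbf{1}[0\le u_i\le u_n\ \forall i<n]\,\mathbf{1}[0\le u_n\le \theta].
\end{equation*}
Marginalizing out $\vv$ gives the density of $u_n$ on this event as $\theta^{-n} u_n^{n-1}$, so the conditional density of $\vv$ given $u_n = u$ and $u_n = \max$ is
\begin{equation*}
p(\vv \mid u_n = u,\, u_n=\max) \;=\; \frac{\theta^{-n}\,\mathbf{1}[0\le u_i \le u\ \forall i<n]}{\theta^{-n} u^{n-1}} \;=\; u^{-(n-1)}\,\mathbf{1}[\vv \in [0,u]^{n-1}],
\end{equation*}
which is exactly the density of $n-1$ i.i.d.\ $\Uc(0,u)$ samples. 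The claim now follows by the symmetry argument above.

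The only delicate point is that we are conditioning on the measure-zero event $\{u_n = u\}$, so strictly speaking this should be interpreted through the regular conditional distribution; but since the joint density is continuous and bounded, the disintegration along $u_n$ is standard and the display above is rigorous. I expect this to be the only subtle step; everything else is a routine density computation.
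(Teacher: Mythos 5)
Your proof is correct, but it follows a slightly different route than the paper. The paper fixes one variable, say $u_1$, conditions on the event $\{u_1\neq u\}$ (i.e., $u_1$ is \emph{not} the maximum), computes the joint CDF of $(u_1,u)$ on that event by inclusion--exclusion, differentiates to get the joint density, and divides by the density of $u$ to conclude $p(u_1=x\mid u=y,u_1\neq u)=1/y$; the other non-maximal variables are then handled by symmetry, so the paper's argument directly establishes only the \emph{marginal} law of each remaining variable. You instead condition on the \emph{identity of the argmax} (WLOG $u_n=\max$), write down the restricted joint density $\theta^{-n}\mathbf{1}[0\le u_i\le u_n\ \forall i<n]$, disintegrate along $u_n$, and obtain in one step that the conditional law of all $n-1$ remaining variables given the maximum value is the product $\Uc(0,u)^{\otimes(n-1)}$; averaging over the argmax index (each conditional law being identical, so the weights are irrelevant) removes the conditioning on $k$. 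Both are elementary density computations, but your decomposition is arguably cleaner and yields the \emph{joint} i.i.d.\ statement directly, which is in fact what the paper's application needs: in the proof of the first-hitting-time proposition, the CDF $P(u_{(n-1)}\le x)=x^n/u_{(n)}^n$ is obtained as a product of the $n$ individual conditional CDFs, i.e., it uses conditional independence of the remaining variables, not just their marginal uniformity. Your handling of the measure-zero conditioning via regular conditional distributions is also fine and matches the implicit level of rigor in the paper's own density manipulations.
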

\begin{proof}
    Without loss of generality, we derive the conditional distribution of $u_1$. Other remaining variables follow the same distribution due to symmetry. For $x\leq y\leq \theta$, we have
\begin{equation}
    P(u_1\leq x,u\leq y)=P(u_1\leq x,u_2\leq y,\dots u_n\leq y)=P(u_1\leq x)\prod_{i=2}^nP(u_i\leq y)=\frac{xy^{n-1}}{\theta^n}
\end{equation}
\begin{equation}
    P(u_1\leq x,u\leq y|u_1 = u)=P(u\leq x)=P(u_1\leq x,\dots,u_n\leq x)=\prod_{i=1}^nP(u_i\leq x)=\frac{x^n}{\theta^n}
\end{equation}
and
\begin{equation}
    \quad P(u_1=u)=\frac{1}{n},\quad \quad P(u_1\neq u)=\frac{n-1}{n}
\end{equation}
Therefore,
\begin{equation}
\begin{aligned}
    P(u_1\leq x,u\leq y|u_1\neq u)&=\frac{P(u_1\leq x,u\leq y)-P(u_1=u)P(u_1\leq x,u\leq y|u_1 = u)}{P(u_1\neq u)}\\
    &=\frac{n}{n-1}\frac{xy^{n-1}}{\theta^n}-\frac{1}{n-1}\frac{x^n}{\theta^n}
\end{aligned}
\end{equation}
By taking derivatives w.r.t. $x$ and $y$, we obtain the density $p(u_1= x,u= y|u_1\neq u)=\frac{ny^{y-2}}{\theta^n}$. Similarly, $P(u\leq y)=\frac{y^n}{\theta^n}$, and the density $p(u=y)=\frac{ny^{n-1}}{\theta^n}$. Therefore
\begin{equation}
    p(u_1= x|u= y,u_1\neq u)=\frac{p(u_1= x,u= y|u_1\neq u)}{p(u=y)}=\frac{1}{y}
\end{equation}
We conclude that $u_1$ ($u_1\neq u$) follows a uniform distribution over the interval $[0,u]$. 
\end{proof}
Then we prove Proposition~\ref{proposition:sampling} below.
\begin{proof}
We first consider the case of a single token undergoing the reverse process described in Eqn.~\eqref{eq:sampling}. Starting from time $t$, when $x_t=m$ is the mask token, we denote $\tau$ as the time at which the unmasking transition occurs (i.e., $x_{\tau+\dm t}=m$ and $x_\tau\neq m$). The transition time $\tau$ is a random variable, whose cumulative distribution function (CDF) is available:
\begin{equation}
    P(\tau\leq s)=p_\theta(x_s=m|x_t=m)=\frac{1-\alpha_s}{1-\alpha_t}
\end{equation}
Therefore, using \textit{inverse transform sampling}, $\tau$ can be analytically sampled by (1) drawing $u\sim\Uc(0,1)$, and (2) solving the equation $\frac{1-\alpha_\tau}{1-\alpha_t}=u$.

Next, we consider the case of multiple tokens in a sequence of length $L$. Thanks to the theoretical assumptions of MDMs, the transition times of different tokens are independent in the reverse process. However, to enable token-by-token decoding, we need to sample the $L$ transition times in descending order, i.e., $1>\tau_{L-1}>\dots>\tau_0$. Starting from $t=1$ with $\alpha_1=0$, each transition time $\tau$ can be sampled by drawing $u\sim\Uc(0,1)$ and solving $1-\alpha_\tau=u$ according to the single token case. In order to sample $\tau$ sequentially, we are essentially drawing the order statistics $u_{(L-1)}>\dots> u_{(0)}$ of $L$ independent uniform variables on $[0,1]$. 

According to Lemma~\ref{lemma:cond-uniform}, this process can be conducted in a recursive manner without sorting. Suppose there are currently $n$ remaining masked tokens, and the most recent unmasking occurred at time $\tau_{n}$. The transition time $\tau_{n}$ corresponds to the $n$-th smallest uniform variable $u_{(n)}$ through the relation $1-\alpha_{\tau_{n}}=u_{(n)}$. To obtain the next transition time $\tau_{n-1}$, we need to sample the next order statistic $u_{(n-1)}$. By recursively applying Lemma~\ref{lemma:cond-uniform}, we know that the remaining $n$ smallest uniform variables follow the distribution $\Uc(0,u_{(n)})$, if not considering their relative order. Furthermore, $u_{(n-1)}$, as the maximum of these $n$ variables, has the CDF $P(u_{(n-1)}\leq x)=\frac{x^n}{u_{(n)}^n}$ and can be sampled by solving $\frac{u_{(n-1)}^n}{u_{(n)}^n}=u_n$, where $u_n\sim\Uc(0,1)$ (using inverse transform sampling). Therefore, the next transition time $\tau_{n-1}$ satisfies
\begin{equation}
    1-\alpha_{\tau_{n-1}}=u_{(n-1)}=u_{(n)}u_n^{\frac{1}{n}}=(1-\alpha_{\tau_{n}})u_n^{\frac{1}{n}}
\end{equation}
which is equivalent to Eqn.~\eqref{eq:sampling} using the inverse noise schedule function $\tau_{n-1}=\alpha^{-1}(\alpha_{\tau_{n-1}})$.
\end{proof}
\subsection{Proof of Proposition~\ref{proposition:gumbel}}
\begin{proof}
    The truncated standard Gumbel distribution $\Tc\Gc(0,1,M)$ has the probability density function (PDF) and cumulative distribution function (CDF) defined as follows:
\begin{equation}
    \hat f(x)=\frac{f(x)}{F(M)}\Ib_{x\leq M},\quad \hat F(x)=\min\left\{\frac{F(x)}{F(M)},1\right\}
\end{equation}
where
\begin{equation}
    f(x)=e^{-x-e^{-x}},\quad F(x)=e^{-e^{-x}}
\end{equation}
are the PDF and CDF of the standard Gumbel distribution $\Gc(0,1)$, and $M$ is the right truncation point. Suppose the class probabilities are sorted as $\pi_1\leq\dots\leq\pi_K$, and denote $\pi_0=0,\theta_n=\log\pi_n$ for simplicity. To conduct truncated Gumbel-based categorical sampling, $K$ i.i.d. samples $\{g_i\}_{i=1}^K$ are drawn from $\Tc\Gc(0,1,M)$. The resulting categorical probability of class $n$ is
\begin{equation}
\label{eq:proof-gumbel-1}
\begin{aligned}
    P(\argmax (\theta_i+g_i)=n)&=\int_{-\infty}^{+\infty}\hat f(g)\prod_{k\neq n}P(\theta_k+g_k\leq \theta_n+g)\dm g\\
    &=\int_{-\infty}^{M}\hat f(g)\prod_{k\neq n}\hat F(\theta_n+g-\theta_k)\dm g\\
    &=e^{Ke^{-M}}\int_{-\infty}^{M}e^{-g-e^{-g}}\prod_{k\neq n}\min\{e^{-e^{-M}},e^{-e^{-\theta_n-g+\theta_k}}\}\dm g\\
    &=e^{Ke^{-M}}\int_{-\infty}^{M}e^{-g-e^{-g}}e^{-\sum_{k\neq n}e^{-\min\{g+\theta_n-\theta_k,M\}}}\dm g\\
    &=e^{Ke^{-M}}\sum_{i=1}^{n}\int_{\theta_{i-1}+M-\theta_n}^{\theta_{i}+M-\theta_n}e^{-g}e^{-(\sum_{k=i}^Ke^{\theta_k-\theta_n})e^{-g}}e^{-(i-1)e^{-M}}\dm g
\end{aligned}
\end{equation}
where the integral has a closed-form solution by
\begin{equation}
    \int e^{-g}e^{-Ae^{-g}}\dm g=\frac{e^{-Ae^{-g}}}{A}+C
\end{equation}
With this, Eqn.~\eqref{eq:proof-gumbel-1} can be further simplified to
\begin{equation}
\begin{aligned}
    &P(\argmax (\theta_i+g_i)=n)\\
    =&e^{Ke^{-M}}\sum_{i=1}^{n}\frac{e^{-(\sum_{k=i}^Ke^{\theta_k-\theta_n})e^{\theta_n-\theta_i-M}}-e^{-(\sum_{
    k=i}^Ke^{\theta_k-\theta_n})e^{\theta_n-\theta_{i-1}-M}}}{\sum_{k=i}^Ke^{\theta_k-\theta_n}}e^{-(i-1)e^{-M}}\\
    =&e^{Ke^{-M}}\pi_n\sum_{i=1}^{n}\frac{e^{-\frac{\sum_{k=i}^K\pi_k}{\pi_i} e^{-M}}-e^{\frac{-\sum_{k=i}^K\pi_k}{\pi_{i-1}} e^{-M}}}{\sum_{k=i}^K\pi_k}e^{-(i-1)e^{-M}}\\
    =&\pi_n\sum_{i=1}^{n}\frac{e^{\left(K+1-i-\frac{\sum_{k=i}^K\pi_k}{\pi_i}\right) e^{-M}}-e^{\left(K+1-i-\frac{\sum_{k=i}^K\pi_k}{\pi_{i-1}}\right) e^{-M}}}{\sum_{k=i}^K\pi_k}
\end{aligned}
\end{equation}
Therefore, the original class probabilities $\{\pi_n\}_{n=1}^K$ are shifted to $\{\pi_n'\}_{n=1}^K$ if the Gumbel variables used in categorical sampling are right-truncated to $M$. $\pi_n'$ is given by
\begin{equation}
\begin{aligned}
    \pi_n'&=\pi_n\sum_{i=1}^{n}\frac{e^{\left(K+1-i-\frac{\sum_{k=i}^K\pi_k}{\pi_i}\right) e^{-M}}-e^{\left(K+1-i-\frac{\sum_{k=i}^K\pi_k}{\pi_{i-1}}\right) e^{-M}}}{\sum_{k=i}^K\pi_k}\\
    &=\pi_n\sum_{i=1}^{n}\frac{e^{\left(K-i-\frac{\sum_{k=i+1}^K\pi_k}{\pi_i}\right) e^{-M}}-e^{\left(K-(i-1)-\frac{\sum_{k=i}^K\pi_k}{\pi_{i-1}}\right) e^{-M}}}{\sum_{k=i}^K\pi_k}
\end{aligned}
\end{equation}
We can verify that $\{\pi_n'\}_{n=1}^K$ are valid class probabilities that sum to 1:
\begin{equation}
\begin{aligned}
    \sum_{n=1}^K\pi_n'&=\sum_{n=1}^K\pi_n\sum_{i=1}^{n}\frac{e^{\left(K-i-\frac{\sum_{k=i+1}^K\pi_k}{\pi_i}\right) e^{-M}}-e^{\left(K-(i-1)-\frac{\sum_{k=i}^K\pi_k}{\pi_{i-1}}\right) e^{-M}}}{\sum_{k=i}^K\pi_k}\\
    &=\sum_{i=1}^K\sum_{n=i}^K\pi_n\frac{e^{\left(K-i-\frac{\sum_{k=i+1}^K\pi_k}{\pi_i}\right) e^{-M}}-e^{\left(K-(i-1)-\frac{\sum_{k=i}^K\pi_k}{\pi_{i-1}}\right) e^{-M}}}{\sum_{k=i}^K\pi_k}\\
    &=\sum_{i=1}^Ke^{\left(K-i-\frac{\sum_{k=i+1}^K\pi_k}{\pi_i}\right) e^{-M}}-e^{\left(K-(i-1)-\frac{\sum_{k=i}^K\pi_k}{\pi_{i-1}}\right) e^{-M}}\\
    &=e^{(K-K)e^{-M}}-e^{\left(K-\frac{1}{\pi_0}\right)e^{-M}}=1
\end{aligned}
\end{equation}
where $\pi_0=0$ and $e^{\left(K-\frac{1}{\pi_0}\right)e^{-M}}=0$.
\end{proof}
\section{Relationship between Masked Diffusion Models and Previous Discrete Diffusion Models}
\label{appendix:ctmc}
Our framework and notations are based on recent studies of MDMs~\citep{shi2024simplified,sahoo2024simple}, which offer a theoretically simplified and empirically improved version of the best-performing absorbing case in discrete diffusion models~\citep{austin2021structured,campbell2022continuous,lou2023discrete}. In this section, we present a summary of some background information on previous formulations: generative modeling of discrete data via continuous-time Markov chains (Section~\ref{appendix:ctmc-1}), robust and principled training and sampling with score parameterization (Section~\ref{appendix:ctmc-2}), and their equivalence to MDMs (Section~\ref{appendix:ctmc-3}).
\subsection{Discrete Diffusion via Continuous-Time Markov Chains}
\label{appendix:ctmc-1}
Continuous-time Markov chains (CTMCs)~\citep{anderson2012continuous} are a fundamental concept in stochastic processes used to model systems that transition between discrete states continuously over time. 
\paragraph{Forward Process}
Denote $\Xc$ as the state space and $x\in\Xc$ as a state. The probability of transitioning from one state $x$ to another state $\hat x$ near time $t$ is governed by the transition rate matrix $\Qv_t\in\R^{|\Xc|\times|\Xc|}$. Specifically, denote $Q_t(x,\hat x)$ as the transition rate from $x$ to $\hat x$, the transition probability during a small time interval $\Delta t$ is
\begin{equation}
\label{eq:ctmc-forward-euler}
    p_{t+\Delta t|t}(\hat x|x)=\delta_{x,\hat x}+Q_t(x,\hat x)\Delta t+\Oc((\Delta t)^2)
\end{equation}
The off-diagonal elements $Q_t(x,\hat x)$ ($x\neq\hat x$) are non-negative, and the diagonal elements $Q_t(x,x)=-\sum_{\hat x\neq x}Q_t(x,\hat x)\leq 0$, ensuring that each row of $\Qv_t$ sums to zero (so that $p_t$ does not gain or lose total mass). Equivalently, the transition rate can be defined by the transition probability as
\begin{equation}
    Q_t(x,\hat x)=\lim_{\Delta t\rightarrow 0}\frac{p_{t+\Delta t|t}(\hat x|x)-\delta_{x,\hat x}}{\Delta t}
\end{equation}
Denote $\pv_t=\{p_t(x)\}_{x\in\Xc}$ as the marginal distributions of all states at time $t$, and $\Pv_{t|s}\in\R^{|\Xc|\times|\Xc|}$ as the forward transition matrix from time $s$ to time $t$ satisfying $P_{t|s}(x,\hat x)=p_{t|s}(\hat x|x)$. The Kolmogorov forward (or Fokker-Planck) equations describe the evolution of both the marginals $\pv_t$ (starting from the data distribution) and the transition matrix $\Pv_{t|s}$:
\begin{equation}
    \frac{\dm\pv_t}{\dm t}=\pv_t\Qv_t,\quad \frac{\dm\Pv_{t|s}}{\dm t}=\Pv_{t|s}\Qv_t
\end{equation}
In practice, the forward process is designed to be simple degradation~\citep{campbell2022continuous,lou2023discrete}, such that $p_t$ approaches a stationary distribution $p_{\rm base}$ that is easy to sample from as $t$ increases, akin to the Gaussian noising process in continuous diffusion. Specifically, the transition rate matrix $\Qv_t$ is set to $\sigma(t)\Qv$ where $\sigma(t)$ is a scalar noise schedule function and $\Qv$ is a constant matrix with low ranks. In this case, the transition matrix can be solved analytically as $\Pv_{t|s}=e^{(\bar\sigma(t)-\bar\sigma(s))\Qv}$ where $\bar\sigma(t)=\int_0^t \sigma(\tau) \dm\tau$. Common choices of $\Qv$~\citep{lou2023discrete} include:
\begin{equation}
    \Qv_{\rm uniform} = \begin{bmatrix} 1 - N & 1 & \cdots & 1\\ 1 & 1 - N & \cdots & 1\\ \vdots & \vdots & \ddots & \vdots \\ 1 & 1 & \cdots & 1 - N\end{bmatrix},\quad
    \Qv_{\rm absorb} = \begin{bmatrix} -1 & 0 & \cdots & 0 & 1\\ 0 & -1 & \cdots & 0 & 1\\ \vdots & \vdots & \ddots & \vdots & \vdots \\ 0 & 0 & \cdots & -1 & 1\\ 0 & 0 & \cdots & 0 & 0\end{bmatrix}% \in \R^{(N + 1) \times (N + 1)}
\end{equation}
The former disturbs the data distribution into a uniform one, and the latter additionally adds a [MASK] token as the absorbing state.
\paragraph{Time Reversal} Similar to continuous diffusion, discrete diffusion defined above has a time reversal~\citep{kelly2011reversibility,sun2022score} described by the reverse transition rate matrix $\bar\Qv_t$ which satisfies
\begin{equation}
    \bar Q_t(x,\hat x)=
    \begin{cases}
        \displaystyle\frac{p_t(\hat x)}{p_t(x)}Q_t(\hat x,x),\quad &\hat x\neq x\\
        -\sum_{y\neq x}\bar Q_t(x,y),\quad &\hat x=x
    \end{cases}
\end{equation}
The intractable ratio $\frac{p_t(\hat x)}{p_t(x)}$, named concrete score~\citep{meng2022concrete}, acts as an analog to the score function~\citep{song2020score} in continuous diffusion. The reverse process can be described as
\begin{equation}
\label{eq:sedd-reverse}
    \frac{\dm\pv_s}{\dm s}=-\pv_s\bar\Qv_s,\quad \frac{\dm\Pv_{s|t}}{\dm s}=-\Pv_{s|t}\bar\Qv_s
\end{equation}
which evolves backward in time with $s$ decreasing to 0 and $s<t$. It is sufficient to simulate the whole process as long as the concrete score, the 
only unknown term in $\bar\Qv_t$, is estimated.
\subsection{Score-Entropy Discrete Diffusion (SEDD)}
\label{appendix:ctmc-2}
SEDD~\citep{lou2023discrete} provides principled, robust and scalable techniques for score-based training and sampling of discrete diffusion models.
\paragraph{Parameterization} SEDD parameterizes a score prediction network $\sv_\theta(x,t)\in\R^{|\Xc|}$ to learn the unknown concrete score $\left\{\frac{p_t(\hat x)}{p_t(x)}\right\}_{\hat x\in\Xc}$. We use $s_\theta(x,t)_y$ to denote its $y$-th element.
\paragraph{Training Objective} SEDD proposes the diffusion-weighted denoising score entropy (DWDSE) objective to optimize $\sv_\theta(x,t)$:
\begin{equation}
\label{eq:score_entropy}
\Lc_{\rm DWDSE}(x_0)=
    \int_0^T \E_{x_t \sim p_{t|0}\left(\cdot|x_0\right)} \sum_{{\hat{x}_t} \neq x_t} Q_t\left( \hat{x}_t,x_t\right)I\left(s_\theta\left(x_t, t\right)_{\hat{x}_t},\frac{p_{t \mid 0}\left({\hat{x}_t} \mid x_0\right)}{p_{t \mid 0}\left(x_t \mid x_0\right)}\right) \dm t
\end{equation}
where $I(a,b)\coloneqq a-b\log a+K(b)$, and $K(b)\coloneqq b\log b-b$ is a normalizing constant function that ensures $I(a,b)\geq 0$. Eqn.~\eqref{eq:score_entropy} not only admits the optimal solution as the concrete score, but also serves as a NELBO for discrete diffusion models by $-\log p_0^\theta(\x_0)\leq \Lc_{\rm DWDSE}(x_0)+\kl{p_{T|0}(\cdot|x_0)}{p_{\rm base}}$, where $p_{\rm base}$ is the stationary distribution when $T\rightarrow\infty$.
\paragraph{Sampling Procedures} Denote $\sv_t(x)=\left\{\frac{p_t(\hat x)}{p_t(x)}\right\}_{\hat x\in\Xc}$ as the ground-truth concrete score, and $s_t(x)_{y}$ as its $y$-th element. We use $\sv_t(x)$ to demonstrate the sampling process, while in practice it is replaced with the learned score $\sv_\theta(x,t)$. SEDD offers two sampling procedures: Euler sampling and analytic sampling with Tweedie $\tau$-Leaping. 

Euler sampling applies the Euler discretization to the reverse process (Eqn.~\eqref{eq:sedd-reverse}), producing a reverse transition similar to the forward transition in Eqn.~\eqref{eq:ctmc-forward-euler}:
\begin{equation}
    p_{s|t}^{\rm Euler}(x_s|x_t)=\delta_{x_t,x_s}+\bar Q_t(x_t,x_s)(t-s)
\end{equation}
It can be expressed by the concrete score $\sv_t$ as
\begin{equation}
\label{eq:sedd-euler}
    p_{s|t}^{\rm Euler}(x_s|x_t)=
    \begin{cases}
        (t-s)Q_t(x_s,x_t)s_t(x_t)_{x_s},\quad &x_s\neq x_t\\
        1-\sum_{y\neq x_t}p_{s|t}^{\rm Euler}(y|x_t),\quad &x_s=x_t
    \end{cases}
\end{equation}
The Euler sampling implicitly assumes a constant reverse rate matrix $\bar\Qv_\tau=\bar\Qv_t$ for $\tau\in [s,t]$, producing approximation errors and even resulting in negative probabilities at $x_s=x_t$.

Tweedie $\tau$-leaping operates similarly to the posterior sampling in DDPM~\citep{ho2020denoising}, by analytically solving the posterior $p_{s|t}(x_s|x_t)$ given the ground-truth concrete score. Specifically,
\begin{equation}
    p_{s|t}^{\rm Tweedie}(x_s|x_t)=\frac{p_{t|s}(x_t|x_s)p_s(x_s)}{p_t(x_t)}
\end{equation}
Under the special choice $\Qv_t=\sigma(t)\Qv$ 
described in the previous section, we have
\begin{equation}
    p_{t|s}(x_t|x_s)=\left(\Pv_{t|s}\right)_{x_s,x_t}=\left(e^{(\bar\sigma(t)-\bar\sigma(s))\Qv}\right)_{x_s,x_t}
\end{equation}
and
\begin{equation}
p_s(x_s)=\left(\pv_s\right)_{x_s}=\left(\pv_t\Pv_{t|s}^{-1}\right)_{x_s}=\left(\pv_te^{-(\bar\sigma(t)-\bar\sigma(s))\Qv}\right)_{x_s}
\end{equation}
Therefore,
\begin{equation}
\label{eq:sedd-tweedie-simplified}
\begin{aligned}
    p_{s|t}^{\rm Tweedie}(x_s|x_t)&=\left(e^{(\bar\sigma(t)-\bar\sigma(s))\Qv}\right)_{x_s,x_t}\left(\frac{\pv_t}{p_t(x_t)}e^{-(\bar\sigma(t)-\bar\sigma(s))\Qv}\right)_{x_s}\\
    &=\left(e^{(\bar\sigma(t)-\bar\sigma(s))\Qv}\right)_{x_s,x_t}\left(\sv_t(x_t)e^{-(\bar\sigma(t)-\bar\sigma(s))\Qv}\right)_{x_s}
\end{aligned}
\end{equation}
\paragraph{Multi-Dimensional Case} For a token sequence $\xv\in\Xc^L$ of length $L$, we use $-l$ to denote the indexes of all tokens except the $l$-th one. The concrete score $\sv_t(\x)\in\R^{|\Xc|\times L}$ is defined between sequences that differ by a Hamming distance of 1:
\begin{equation}
    s_t(\x)_{\hat x,l}=\frac{p_t(\hat \x)}{p_t(\x)},\quad \text{s.t.}\ \hat{x}^{(l)}=\hat x,\hat{\x}^{(-l)}=\x^{(-l)}
\end{equation}
The forward and reverse processes are factorized across dimensions in the same manner as in MDMs described in the main text. The parameterized score network $\sv_\theta(\x,t)\in\R^{|\Xc|\times L}$ also predicts the scores at all positions at a time. Consequently, both the training and sampling are conducted simultaneously and independently for all dimensions, except that the network input $\x$ contains the current sequence information.
\subsection{Connection between MDMs and SEDD Absorb}
\label{appendix:ctmc-3}
The absorbing case ($\Qv=\Qv_{\rm absorb}$) of discrete diffusion has demonstrated both simple formulations and superior performance. As revealed in previous and concurrent works~\citep{shi2024simplified,ou2024your}, SEDD Absorb is theoretically equivalent to MDMs in multiple aspects. For simplicity, we focus on the single-token case, as the factorization approach used in both MDMs and SEDD Absorb ensures that equivalence in one dimension implies equivalence in multiple dimensions.
\subsubsection{Equivalence of Training}
\paragraph{Relation between Forward Processes}
The forward transition matrix of SEDD Absorb is $\Pv_{t|0}=e^{\bar\sigma(t)\Qv_{\rm absorb}}$. It can be verified by mathematical induction that $\Qv_{\rm absorb}^n=(-1)^{n-1}\Qv_{\rm absorb}$ for any positive integer $n$. With this identity, $\Pv_{t|0}$ can be simplified as
\begin{equation}
\label{eq:forward-sedd-absorb}
\begin{aligned}
    &\Pv_{t|0}=e^{\bar\sigma(t)\Qv_{\rm absorb}}=\Iv+\sum_{k=1}^\infty\frac{\bar\sigma(t)^k\Qv^k_{\rm absorb}}{k!}=\Iv-\sum_{k=1}^\infty\frac{(-\bar\sigma(t))^k}{k!}\Qv_{\rm absorb}\\
    =&\Iv+\left(1-e^{-\bar\sigma(t)}\right)\Qv_{\rm absorb}
\end{aligned}
\end{equation}
This is equivalent to the forward process (Eqn.~\eqref{eq:forward}) in MDMs with the relation $\alpha_t=e^{-\bar\sigma(t)}$. 
\paragraph{Relation between Parameterizations}
For coherence, we use $m$ to denote the [MASK] token. We only need to consider the concrete score $\sv_t(m)$ at $m$, since $\sv_t(x)$ for $x\neq m$ can be converted from $\sv_t(m)$ by $s_t(x)_{\hat x}=\frac{s_t(m)_{\hat x}}{s_t(m)_x}$. We have
\begin{equation}
\label{eq:sedd-absorb-score-1}
    s_t(m)_x=\frac{p_t(x)}{p_t(m)}=\frac{\left(\pv_0\Pv_{t|0}\right)_x}{\left(\pv_0\Pv_{t|0}\right)_m}
\end{equation}
Substituting the expression of $\Pv_{t|0}$ in Eqn.~\eqref{eq:forward-sedd-absorb} into Eqn.~\eqref{eq:sedd-absorb-score-1}, for $x\neq m$, we have
\begin{equation}
    \left(\pv_0\Pv_{t|0}\right)_m=\sum_{x_0\in\Xc\backslash\{m\}}p_0(x_0)p_{t|0}(m|x_0)=(1-\alpha_t)\sum_{x_0\in\Xc\backslash\{m\}}p_0(x_0)=1-\alpha_t
\end{equation}
\begin{equation}
    \left(\pv_0\Pv_{t|0}\right)_x=\sum_{x_0\in\Xc\backslash\{m\}}p_0(x_0)p_{t|0}(x|x_0)=\sum_{x_0\in\Xc\backslash\{m\}}p_0(x_0)\alpha_t\delta_{x_0,x}=\alpha_tp_0(x)
\end{equation}
and
\begin{equation}
    s_t(m)_x=\frac{\left(\pv_0\Pv_{t|0}\right)_x}{\left(\pv_0\Pv_{t|0}\right)_m}=\frac{\alpha_t}{1-\alpha_t}p_0(x)=\frac{\alpha_t}{1-\alpha_t}\left(\E_{p_{0|t}(x_0|m)}\left[\ev_{x_0}\right]\right)_{x}
\end{equation}
This implies that the score parameterization is related to the mean parameterization $\muv_\theta(x,t)$ in MDMs (excluding the $m$-th dimension) by
\begin{equation}
\label{eq:score-mean-parameterization-relation}
    \sv_\theta(x,t)=\frac{\alpha_t}{1-\alpha_t}\muv_\theta(x,t)
\end{equation}
\paragraph{Equivalence of ELBOs}
Substituting this relation between $\sv_\theta$ and $\muv_\theta$ into the score entropy objective of SEDD (Eqn.~\ref{eq:score_entropy}), we have
\begin{equation}
\begin{aligned}
    \Lc_{\rm DWDSE}(x_0)&=\int_0^T \E_{x_t \sim p_{t|0}\left(\cdot|x_0\right)} \left[\delta_{x_t,m}\sum_{x \neq m} Q_t\left(x,m\right)I\left(s_\theta\left(m, t\right)_{x},\frac{p_{t \mid 0}\left(x \mid x_0\right)}{p_{t \mid 0}\left(m \mid x_0\right)}\right)\right] \dm t\\
    &=\int_0^T \sigma(t)\E_{x_t \sim p_{t|0}\left(\cdot|x_0\right)} \left[\delta_{x_t,m}\sum_{x \neq m} I\left(\frac{\alpha_t}{1-\alpha_t}\mu_\theta\left(m, t\right)_{x},\delta_{x,x_0}\frac{\alpha_t}{1-\alpha_t}\right)\right] \dm t\\
\end{aligned}
\end{equation}
Observing that $I(a,0)=a$, $\sum_{x\neq m}\mu_\theta(m,t)_x=1$, we have
\begin{equation}
\begin{aligned}
    &\sum_{x \neq m} I\left(\frac{\alpha_t}{1-\alpha_t}\mu_\theta\left(m, t\right)_{x},\delta_{x,x_0}\frac{\alpha_t}{1-\alpha_t}\right)\\
    =&K\left(\frac{\alpha_t}{1-\alpha_t}\right)-\frac{\alpha_t}{1-\alpha_t}\log \left(\frac{\alpha_t}{1-\alpha_t}\mu_\theta(m,t)_{x_0}\right)+\frac{\alpha_t}{1-\alpha_t}\sum_{x\neq m}\mu_\theta(m,t)_x\\
    =&-\frac{\alpha_t}{1-\alpha_t}\log \mu_\theta(m,t)_{x_0}
\end{aligned}
\end{equation}
Using $\alpha_t'=-\sigma(t)\alpha_t$, 
the objective $\Lc_{\rm DWDSE}(x_0)$ can be simplified to
\begin{equation}
\begin{aligned}
    \Lc_{\rm DWDSE}(x_0)&=-\int_0^T \sigma(t)\E_{x_t \sim p_{t|0}\left(\cdot|x_0\right)} \left[\delta_{x_t,m}\frac{\alpha_t}{1-\alpha_t}\log \mu_\theta(m,t)_{x_0}\right] \dm t\\
    &=\int_0^T \frac{\alpha_t'}{1-\alpha_t}\E_{x_t \sim p_{t|0}\left(\cdot|x_0\right)} \left[\delta_{x_t,m}\log \mu_\theta(x_t,t)_{x_0}\right]\dm t
\end{aligned}
\end{equation}
As $\log \mu_\theta(x_t,t)_{x_0}$ 
and $\ev_{x_0}^\top\log \muv_\theta(x_t,t)$ are equivalent expressions for the cross entropy, we conclude that $\Lc_{\rm DWDSE}(x_0)$ is equal to the NELBO for MDMs (Eqn.~\eqref{eq:elbo-single}) when $T=1$.
\subsubsection{Equivalence of Sampling}
\textbf{Euler Sampler and Tweedie $\tau$-Leaping Sampler in SEDD} are equivalent in the absorbing case under the linear noise schedule $\alpha_t=e^{-\bar\sigma(t)}=1-t$. 

On the one hand, the Euler sampler (Eqn.~\eqref{eq:sedd-euler}) with score parameterization network $\sv_\theta$ is 
\begin{equation}
    p_{s|t}^{\rm Euler}(x_s|x_t)=
    \begin{cases}
        (t-s)\sigma(t)\left(\Qv_{\rm absorb}\right)_{x_s,x_t}s_\theta(x_t,t)_{x_s},\quad &x_s\neq x_t\\
        1-\sum_{y\neq x_t}p_{s|t}^{\rm Euler}(y|x_t),\quad &x_s=x_t
    \end{cases}
\end{equation}
When $x_s\neq x_t$, using the identities $s_\theta(x,t)_{x}=1$ and $\left(\Qv_{\rm absorb}\right)_{x_s,x_t}=\delta_{m,x_t}-\delta_{x_s,x_t}$, $p_{s|t}^{\rm Euler}$ can be simplified to
\begin{equation}
\label{eq:sedd-euler-1}
    p_{s|t}^{\rm Euler}(x_s|x_t)=
    \begin{cases}
        0,\quad &x_t\neq m\\
        \left(t-s\right)\sigma(t)s_\theta(x_t,t)_{x_s},\quad &x_t=m
    \end{cases}
\end{equation}
When $x_s=x_t$, from Eqn.~\eqref{eq:score-mean-parameterization-relation} we know $\sum_{x\in\Xc\backslash\{m\}}s_\theta (m,t)_x=\frac{\alpha_t}{1-\alpha_t}\sum_{x\in\Xc\backslash\{m\}}\mu_\theta (m,t)_x=\frac{\alpha_t}{1-\alpha_t}$. Hence, $p_{s|t}^{\rm Euler}$ can be calculated as
\begin{equation}
\label{eq:sedd-euler-2}
    p_{s|t}^{\rm Euler}(x_s|x_t)=
    \begin{cases}
        1,\quad &x_t\neq m\\
        1-\frac{\sigma(t)e^{-\bar\sigma(t)}(t-s)}{1-e^{-\bar\sigma(t)}},\quad &x_t=m
    \end{cases}
\end{equation}
Combining Eqn.~\eqref{eq:sedd-euler-1} and Eqn.~\eqref{eq:sedd-euler-2}, the Euler sampler is simplified to
\begin{equation}
\label{eq:sedd-euler-final}
    p_{s|t}^{\rm Euler}(x_s|x_t)=
    \begin{cases}
        \delta_{x_s,x_t},\quad &x_t\neq m\\
        \left(t-s\right)\sigma(t)s_\theta(x_t,t)_{x_s},\quad &x_t=m,x_s\neq m\\
        1-\frac{\sigma(t)e^{-\bar\sigma(t)}(t-s)}{1-e^{-\bar\sigma(t)}},\quad &x_t=m,x_s=m
    \end{cases}
\end{equation}
On the other hand, the Tweedie $\tau$-Leaping sampler (Eqn.~\eqref{eq:sedd-tweedie-simplified}) is
\begin{equation}
\label{eq:sedd-tweedie-absorb-0}
    p_{s|t}^{\rm Tweedie}(x_s|x_t)=\left(e^{(\bar\sigma(t)-\bar\sigma(s))\Qv}\right)_{x_s,x_t}\left(\sv_\theta(x_t,t)e^{-(\bar\sigma(t)-\bar\sigma(s))\Qv}\right)_{x_s}
\end{equation}
When $\Qv=\Qv_{\rm absorb}$, similar to Eqn.~\eqref{eq:forward-sedd-absorb}, we have
\begin{equation}
\begin{aligned}
    e^{(\bar\sigma(t)-\bar\sigma(s))\Qv}&=\Iv+\left(1-e^{-(\bar\sigma(t)-\bar\sigma(s))}\right)\Qv_{\rm absorb}\\
    e^{-(\bar\sigma(t)-\bar\sigma(s))\Qv}&=\Iv+\left(1-e^{\bar\sigma(t)-\bar\sigma(s)}\right)\Qv_{\rm absorb}
\end{aligned}
\end{equation}
Using the identities $s_\theta(x,t)_{x}=1$ and $\left(\Qv_{\rm absorb}\right)_{x_s,x_t}=\delta_{m,x_t}-\delta_{x_s,x_t}$, we have
\begin{equation}
\label{eq:sedd-tweedie-absorb-1}
\begin{aligned}
    \left(e^{(\bar\sigma(t)-\bar\sigma(s))\Qv}\right)_{x_s,x_t}&=\delta_{x_s,x_t}+\left(1-e^{-(\bar\sigma(t)-\bar\sigma(s))}\right)\left(\Qv_{\rm absorb}\right)_{x_s,x_t}\\
    &=e^{-(\bar\sigma(t)-\bar\sigma(s))}\delta_{x_s,x_t}+\left(1-e^{-(\bar\sigma(t)-\bar\sigma(s))}\right)\delta_{m,x_t}
\end{aligned}
\end{equation}
and
\begin{equation}
\sum_{y\in\Xc}s_\theta(x,t)_y=\sum_{y\in\Xc}\frac{s_\theta(m,t)_y}{s_\theta(m,t)_x}=s_\theta(x,t)_m\left(s_\theta(m,t)_m+\sum_{x\in\Xc\backslash\{m\}}s_\theta (m,t)_x\right)=\frac{s_\theta(x,t)_m}{1-\alpha_t}
\end{equation}
Therefore,
\begin{equation}
\label{eq:sedd-tweedie-absorb-2}
\begin{aligned}
    \left(\sv_\theta(x_t,t)e^{-(\bar\sigma(t)-\bar\sigma(s))\Qv}\right)_{x_s}&=s_\theta(x_t,t)_{x_s}+\left(1-e^{\bar\sigma(t)-\bar\sigma(s)}\right)\left(\sv_\theta(x_t,t)\Qv_{\rm absorb}\right)_{x_s}\\
    &=s_\theta(x_t,t)_{x_s}+\left(1-e^{\bar\sigma(t)-\bar\sigma(s)}\right)\sum_{x\in\Xc}s_\theta(x_t,t)_x\left(\Qv_{\rm absorb}\right)_{x,x_s}\\
    &=e^{\bar\sigma(t)-\bar\sigma(s)}s_\theta(x_t,t)_{x_s}+\delta_{x_s,m}\left(1-e^{\bar\sigma(t)-\bar\sigma(s)}\right)\sum_{x\in\Xc}s_\theta(x_t,t)_x\\
    &=e^{\bar\sigma(t)-\bar\sigma(s)}s_\theta(x_t,t)_{x_s}+\delta_{x_s,m}\frac{1-e^{\bar\sigma(t)-\bar\sigma(s)}}{1-e^{-\bar\sigma(t)}}s_\theta(x_t,t)_m
\end{aligned}
\end{equation}
Substituting Eqn.~\eqref{eq:sedd-tweedie-absorb-1} and Eqn.~\eqref{eq:sedd-tweedie-absorb-2} into Eqn.~\eqref{eq:sedd-tweedie-absorb-0}, the Tweedie $\tau$-Leaping sampler is simplified to
\begin{equation}
\label{eq:sedd-tweedie-final}
    p_{s|t}^{\rm Tweedie}(x_s|x_t)=
    \begin{cases}
        \delta_{x_s,x_t},\quad &x_t\neq m\\
        \left(e^{\bar\sigma(t)-\bar\sigma(s)}-1\right)s_\theta(x_t,t)_{x_s},\quad &x_t=m,x_s\neq m\\
        \frac{1-e^{-\bar\sigma(s)}}{1-e^{-\bar\sigma(t)}},\quad &x_t=m,x_s=m
    \end{cases}
\end{equation}
Under the linear noise schedule, as $e^{-\bar\sigma(t)}=1-t$, we have $\bar\sigma(t)=-\log(1-t)$ and $\sigma(t)=\frac{1}{1-t}=e^{\bar\sigma(t)}$. Consequently, $(t-s)\sigma(t)=e^{\bar\sigma(t)-\bar\sigma(s)}-1$, and the Euler sampler in Eqn.~\eqref{eq:sedd-euler-final} is the same as Tweedie $\tau$-Leaping sampler in Eqn.~\eqref{eq:sedd-tweedie-final}. 

\textbf{Tweedie $\tau$-Leaping Sampler in SEDD and the Reverse Sampling Process in MDMs} are equivalent in the absorbing case. By the relation $\alpha_t=e^{-\bar\sigma(t)}$ and $s_\theta(x_t,t)_{x_s}=\frac{\alpha_t}{1-\alpha_t}\mu_\theta(x_t,t)_{x_s} (x_t=m,x_s\neq m)$, the Tweedie $\tau$-Leaping sampler in Eqn.~\eqref{eq:sedd-tweedie-final} is converted to
\begin{equation}
    p_{s|t}^{\rm Tweedie}(x_s|x_t)=
    \begin{cases}
        \delta_{x_s,x_t},\quad &x_t\neq m\\
        \frac{\alpha_s-\alpha_t}{1-\alpha_t}\mu_\theta(x_t,t)_{x_s},\quad &x_t=m,x_s\neq m\\
        \frac{1-\alpha_s}{1-\alpha_t},\quad &x_t=m,x_s=m
    \end{cases}
\end{equation}
which is the same as the reverse process in MDMs (Eqn.~\eqref{eq:parameterization-single}).
\section{Expected NFE in Batched Sampling Using the Caching Strategy}
\label{appendix:expected-NFE}
Suppose the sampling is performed on timesteps $1=t_N\rightarrow t_{N-1}\rightarrow\dots\rightarrow t_0=0$, and denote $\x_{t}\in\Xc^{BL}$ as the concatenation of the batched sequences at time $t$. During the sampling step $t_{i}\rightarrow t_{i-1}$, the NFE increases by 1 when $\x_{t_{i-1}}\neq \x_{t_{i}}$, and remains the same otherwise. Therefore, the expected NFE (E-NFE) can be expressed by
\begin{equation}
\begin{aligned}
    \text{E-NFE}&=\E\left[\sum_{i=1}^N \Ib_{\x_{t_{i-1}}\neq \x_{t_{i}}}\right]=\sum_{i=1}^N\E\left[\Ib_{\x_{t_{i-1}}\neq \x_{t_{i}}}\right]=\sum_{i=1}^NP\left(\x_{t_{i-1}}\neq \x_{t_{i}}\right)\\
    &=\sum_{i=1}^N\left(1-P\left(\x_{t_{i-1}}= \x_{t_{i}}\right)\right)=\sum_{i=1}^N\left(1-P\left(x^{(l)}_{t_{i-1}}= x^{(l)}_{t_{i}},l=1,2,\dots,BL\right)\right)
\end{aligned}
\end{equation}
As noted in the main text, an unmasked token will no longer change, and whether a mask token will change during a sampling step is independent of the network output. Given that the reverse sampling process is factorized across dimensions, and the only interaction between dimensions is through the sequence-conditioned network, which does not affect whether a token will change, we conclude that the events $\{x^{(l)}_{t_{i-1}}= x^{(l)}_{t_{i}}\}$ are independent for different $l$. Therefore, the probability $P\left(x^{(l)}_{t_{i-1}}= x^{(l)}_{t_{i}},l=1,2,\dots,BL\right)$ can be factorized as $\prod_{l=1}^{BL}P\left(x^{(l)}_{t_{i-1}}= x^{(l)}_{t_{i}}\right)$, where
\begin{equation}
    \begin{aligned}
        P\left(x^{(l)}_{t_{i-1}}= x^{(l)}_{t_{i}}\right)&=P\left(x^{(l)}_{t_{i}}=m\right)P\left(x^{(l)}_{t_{i-1}}=m|x^{(l)}_{t_{i}}=m \right)+P\left(x^{(l)}_{t_{i}}\neq m\right)\\
        &=\frac{1-\alpha_{t_i}}{1-\alpha_1}\frac{1-\alpha_{t_{i-1}}}{1-\alpha_{t_i}}+1-\frac{1-\alpha_{t_i}}{1-\alpha_1}\\
        &=1-(\alpha_{t_{i-1}}-\alpha_{t_i})
    \end{aligned}
\end{equation}
The expected NFE is finally simplified to
\begin{equation}
    \text{E-NFE}=\sum_{i=1}^N\left(1-\left(1-(\alpha_{t_{i-1}}-\alpha_{t_i})\right)^{BL}\right)
\end{equation}
Using the default linear noise schedule $\alpha_t=1-t$ as well as uniform timesteps $t_k=\frac{k}{N}$, we have $\alpha_{t_{i-1}}-\alpha_{t_i}=\frac{1}{N}$, and the expected NFE is $N\left(1-(1-\frac{1}{N})^{BL}\right)$.
\section{A Brief Introduction to Gumbel Tricks}
\label{appendix:gumbel}
Gumbel tricks are widely used in machine learning and statistics to handle the challenges associated with discrete random variables. Based on the properties of the Gumbel distribution, these techniques offer powerful tools for approximating discrete distributions and optimizing over discrete spaces, facilitating the integration of discrete variables into continuous models.
\subsection{The Gumbel Distribution}
The Gumbel distribution~\citep{gumbel1935valeurs} is related to the extreme value theory and is commonly used to model the distribution of the maximum of random variables. The Gumbel distribution $\Gc(\mu,\beta)$, with the location parameter $\mu$ and the scale parameter $\beta$, has the following PDF and CDF:
\begin{equation}
    F(x;\mu,\beta)=e^{-e^{-(x-\mu)/\beta}},\quad f(x;\mu,\beta)=\frac{1}{\beta}e^{-(x-\mu)/\beta-e^{-(x-\mu)/\beta}}
\end{equation}
A widely used special case is the standard Gumbel distribution $\Gc(0,1)$, where $\mu=0$ and $\beta=1$. For this distribution, the CDF is given by $P(g\leq x)=e^{-e^{-x}}$. Using inverse transform sampling, a random variable $g$ following $\Gc(0,1)$ can be sampled by drawing $u\sim\Uc(0,1)$ and performing two negative logarithm operations $g=-\log(-\log u)$.
\subsection{Gumbel-Max Trick}
The Gumbel-max trick~\citep{gumbel1954statistical} allows us to sample from a categorical distribution using continuous random variables. It leverages the following property of the Gumbel distribution: for $\piv=(\pi_1,\pi_2,\dots,\pi_K)$ satisfying $\piv\geq 0$ and $\sum_{i=1}^K\pi_i>0$, and let $g_1,\dots,g_K$ be independent samples from $\Gc(0,1)$, we have
\begin{equation}
\max_i \left(g_i+\log\pi_i\right)\sim \Gc\left(\log\sum_{i=1}^K\pi_i,1\right)
\end{equation}
and
\begin{equation}
    \argmax_i\left(g_i+\log\pi_i\right)\sim\Cat\left(\frac{\piv}{\sum_{i=1}^K\pi_i}\right)
\end{equation}
This implies that sampling a discrete variable from a categorical distribution can be achieved by operating on continuous variables that include the known class probabilities and the sampled Gumbel variables. Therefore, the Gumbel-max trick acts as the \textit{reparameterization trick} for categorical sampling, akin to the Gaussian case used in variational auto-encoders~\citep{kingma2013auto}.
\subsection{Gumbel-Softmax Distribution}
The Gumbel-Softmax distribution~\citep{jang2016categorical} introduces a differentiable approximation to the categorical distribution, facilitating gradient-based optimization in neural network training. It smooths the non-differentiable \texttt{argmax} operation in the Gumbel-max trick by replacing it with the differentiable \texttt{Softmax} function plus a temperature factor $T$. Specifically, the one-hot random vector $\ev_x$, where $x\sim\Cat(\piv)$, is approximated by a continuous vector $\yv$ defined as
\begin{equation}
y_i=\frac{e^{(\log\pi_i+g_i)/T}}{\sum_{j=1}^Ke^{(\log\pi_j+g_j)/T}},\quad i=1,2,\dots,K
\end{equation}
It approaches the one-hot representation of the categorical variable when $T\rightarrow 0$, and tends to be uniform when $T\rightarrow\infty$.
\section{Implementation Details}
\subsection{Algorithms}
\label{appendix:algorithms}
For parallel decoding, suppose the sampling step is $N$ and the sequence length is $L$, we define a decoding schedule $\{L_n\}_{n=1}^N$ which satisfies $\sum_{n=1}^NL_n=L$ to specify the number of tokens decoded at each step. This includes the token-by-token decoding as a special case where $N=L$ and $L_n=1$. In practice, we decode the same number of tokens per step so that $L$ is divisible by $N$.

We present the parallel decoding procedure in Algorithm~\ref{alg:first-hitting-pd}, which can be interpreted as a first-order method. Algorithm~\ref{alg:first-hitting-extra} and Algorithm~\ref{alg:first-hitting-pc} describe two types of high-order extensions, inspired by high-order numerical differential equation solvers in diffusion models. Algorithm~\ref{alg:first-hitting-extra} leverages Lagrange polynomials to interpolate the previous network outputs along the time axis, yielding an approximate network prediction for the current time step. Our implementation only uses the two most recent predictions, making it a second-order method, as we empirically find that higher-order methods tend to degrade performance. Algorithm~\ref{alg:first-hitting-pc} employs a predictor-corrector approach, refining the first-order decoding result at the last step using the current network prediction, also resulting in a second-order method. After refining the intermediate sample, we avoid feeding it back into the network for prediction updates, thus preventing extra NFEs.
\begin{algorithm}[H]
   \caption{First-Hitting Sampling of MDMs (parallel decoding)}
   \label{alg:first-hitting-pd}
   \small
   \textbf{Require:} the sequence length $L$, the vocabulary $\Xc=\{0,\dots,m-1,m\}$ where $m$ is the mask token, the noise schedule $\alpha_t$ and its inverse function $\alpha^{-1}$, the pretrained masked diffusion model $\muv_\theta$, the number of sampling steps $N$, the decoding schedule $\{L_n\}_{n=1}^N$

   \begin{algorithmic}[1]
   \STATE $\x_L\leftarrow[m\ m\ \dots\ m]$
   \STATE $\tau_L\leftarrow 1$
   \STATE $l\leftarrow L$
    \FOR {$n \leftarrow N$ \TO $1$}
    \FOR {$i \leftarrow 1$ \TO $L_n$}
    \STATE Sample $u_l\sim\Uc(0,1)$
    \STATE $\tau_{l-1}\leftarrow\alpha^{-1}(1-u_l^{1/l}(1-\alpha_{\tau_{l}}))$
    \IF{$i=1$}
    \STATE $\muv\leftarrow\muv_\theta(\x_l,\tau_{l-1})$
    \ENDIF
    \STATE Randomly and uniformly select an index $k$ from $\{j:x_{l}^{(j)}=m\}$ (i.e., masked positions in $\x_l$)
    \STATE $\x_{l-1}\leftarrow\x_l,x_{l-1}^{(k)}\leftarrow x\sim\Cat(\muv^{(k)})$
    \STATE $l\leftarrow l-1$
    \ENDFOR
    \ENDFOR
   \end{algorithmic}
   \textbf{Output:} $\x_0$
\end{algorithm}
\begin{algorithm}[H]
   \caption{First-Hitting Sampling of MDMs (extrapolation)}
   \label{alg:first-hitting-extra}
   \small
   \textbf{Require:} the sequence length $L$, the vocabulary $\Xc=\{0,\dots,m-1,m\}$ where $m$ is the mask token, the noise schedule $\alpha_t$ and its inverse function $\alpha^{-1}$, the pretrained masked diffusion model $\muv_\theta$, the number of sampling steps $N$, the decoding schedule $\{L_n\}_{n=1}^N$

   \begin{algorithmic}[1]
   \STATE $\x_L\leftarrow[m\ m\ \dots\ m]$
   \STATE $\tau_L\leftarrow 1$
   \STATE $l\leftarrow L$
    \FOR {$n \leftarrow N$ \TO $1$}
    \FOR {$i \leftarrow 1$ \TO $L_n$}
    \STATE Sample $u_l\sim\Uc(0,1)$
    \STATE $\tau_{l-1}\leftarrow\alpha^{-1}(1-u_l^{1/l}(1-\alpha_{\tau_{l}}))$
    \IF{$i=1$}
    \STATE $\muv\leftarrow\muv_\theta(\x_l,\tau_{l-1})$
    \STATE $\tau\leftarrow\tau_{l-1}$
    \ENDIF
    \IF{$n=N$}
    \STATE $\hat\muv=\muv$
    \ELSE
    \STATE $\displaystyle\hat\muv=\frac{\tau_{l-1}-\tilde\tau}{\tau-\tilde\tau}\muv+\frac{\tau_{l-1}-\tau}{\tilde\tau-\tau}\tilde\muv$ (Lagrange interpolation) 
    \ENDIF
    \STATE Randomly and uniformly select an index $k$ from $\{j:x_{l}^{(j)}=m\}$ (i.e., masked positions in $\x_l$)
    \STATE $\x_{l-1}\leftarrow\x_l,x_{l-1}^{(k)}\leftarrow x\sim\Cat(\hat\muv^{(k)})$
    \STATE $l\leftarrow l-1$
    \ENDFOR
    \STATE $\tilde\muv\leftarrow\muv$ 
    \STATE $\tilde\tau\leftarrow\tau$ 
    \ENDFOR
   \end{algorithmic}
   \textbf{Output:} $\x_0$
\end{algorithm}
\begin{algorithm}[H]
   \caption{First-Hitting Sampling of MDMs (predictor-corrector)}
   \label{alg:first-hitting-pc}
   \small
   \textbf{Require:} the sequence length $L$, the vocabulary $\Xc=\{0,\dots,m-1,m\}$ where $m$ is the mask token, the noise schedule $\alpha_t$ and its inverse function $\alpha^{-1}$, the pretrained masked diffusion model $\muv_\theta$, the number of sampling steps $N$, the decoding schedule $\{L_n\}_{n=1}^N$

   \begin{algorithmic}[1]
   \STATE $\x_L\leftarrow[m\ m\ \dots\ m]$
   \STATE $\tau_L\leftarrow 1$
   \STATE $l\leftarrow L$
    \FOR {$n \leftarrow N$ \TO $1$}
    \FOR {$i \leftarrow 1$ \TO $L_n$}
    \STATE Sample $u_l\sim\Uc(0,1)$
    \STATE $\tau_{l-1}\leftarrow\alpha^{-1}(1-u_l^{1/l}(1-\alpha_{\tau_{l}}))$
    \IF{$i=1$}
    \STATE $\muv\leftarrow\muv_\theta(\x_l,\tau_{l-1})$
    \IF{$n<N$}
    \STATE $\x_l\leftarrow\hat\x$
    \FOR{$r\leftarrow 1$ to $L_{n+1}$}
    \STATE Randomly and uniformly select an index $k$ from $\{j:x_{l}^{(j)}=m\}$
    \STATE $x_{l}^{(k)}\leftarrow x\sim\Cat(\muv^{(k)})$
    \ENDFOR
    \ENDIF
    \STATE $\hat{\x}\leftarrow\x_l$
    \ENDIF
    \STATE Randomly and uniformly select an index $k$ from $\{j:x_{l}^{(j)}=m\}$ (i.e., masked positions in $\x_l$)
    \STATE $\x_{l-1}\leftarrow\x_l,x_{l-1}^{(k)}\leftarrow x\sim\Cat(\muv^{(k)})$
    \STATE $l\leftarrow l-1$
    \ENDFOR
    \ENDFOR
   \end{algorithmic}
   \textbf{Output:} $\x_0$
\end{algorithm}
\subsection{Low-Discrepancy Sampler}
\label{appendix:low-discrepancy}
VDM~\citep{kingma2021variational} proposes a low-discrepancy sampler for batched sampling of uniformly distributed continuous time variables, which reduces the loss variance in maximum likelihood training of diffusion models. Specifically, consider a batch of $B$ timesteps ${t^{(i)}}_{i=0}^{B-1}$ that needs to be sampled from $\Uc(0,1)$. Instead of sampling them independently, VDM generates correlated samples using the formula $t^{(i)}=\text{mod}(u_0+i/B,1)$, where $u_0\sim\Uc(0,1)$. This approach ensures that each $t^{(i)}$ has the correct marginal distribution over multiple batches, while each batch of timesteps more evenly covers the interval $[0, 1]$.

MDLM~\citep{sahoo2024simple} employs a slightly different low-discrepancy sampler where the sampled timesteps are less correlated within a batch. Specifically, $B$ independent uniform samples $\{u_i\}_{i=0}^{B-1}$ from $\Uc(0,1)$ are mapped into $B$ bins by $t^{(i)}=(u_i+i)/B$. We adopt this approach for sampling continuous timesteps. Additionally, we extend this low-discrepancy sampler to handle discrete timesteps $\{n^{(i)}\}_{i=0}^{B-1}$ drawn from $\Uc(\{0,1,\dots,L-1\})$ (e.g., the number of masked tokens in the discrete ELBO) by mapping the continuous time $t^{(i)}$ to $n^{(i)}=\lceil Lt^{(i)}\rceil$.
\section{Experiment Details}
\label{appendix:details}
\subsection{Evaluation Metrics}
\label{appendix:metric}
\textbf{Perplexity} is a likelihood-related metric to evaluate how well a likelihood-based model is trained. Denote the likelihood (i.e., the probability of the data under the parameterized model) for the data point $\x_0$ as $p_\theta(\x_0)$. The log-likelihood can be expressed either exactly or through an ELBO:
\begin{align}
    \text{Auto-regressive Models:}\quad\log p_\theta(\x_0)&=\sum_{n=1}^L\log p_\theta(x_0^{(n)}|\x_0^{(<n)})\\
    \text{Masked Models:}\quad\log p_\theta(\x_0)&\geq \sum_{n=1}^L\E_{\tilde q_{n|0}(\x_n|\x_0)}\left[\frac{1}{n}\sum\nolimits_{l:x_n^{(l)}=m }\log p_\theta(x_0^{(l)}|\x_n)\right]
\end{align}
Here we express the log-likelihood of masked models with our derived discrete ELBO. We use $\log p_\theta(x_0^{(l)}|\x_n)$ (predicted data probability given known tokens) as an alternative expression for the cross-entropy term $\ev_{x_0^{(l)}}^\top\log\muv^{(l)}_\theta(\x_n)$, aligning it with the common formulation in ARMs. The perplexity (PPL) is defined as:
\begin{equation}
\text{PPL}=\exp\left(\frac{\E_{\x_0\sim p_{\text{data}}}\left[-\log p_\theta(\x_0)\right]}{D}\right)
\end{equation}
where $D$ is the data dimension, and $p_{\text{data}}$ is the data distribution (such as the validation/test set).

\textbf{Generative Perplexity} evaluates a model's generation quality by measuring the perplexity of its generated samples under some off-the-shelf model. It is related to both training and sampling. We adopt GPT-2 Large as the off-the-shelf evaluator following previous works.

\textbf{Entropy} measures the diversity of tokens in a sequence. For a sequence of length $L$ that contains $K$ distinct tokens, with each token $k$ occurring $L_k$ times, the entropy is computed as $-\sum_{k=1}^K p_k \log p_k$, where $p_k = L_k/L$ represents the probability of occurrence of token $k$.
\subsection{Model and Dataset Details}
Following SEDD~\citep{lou2023discrete} and MDLM~\citep{sahoo2024simple}, we utilize an encoder-only transformer with a DDiT~\citep{peebles2023scalable} architecture, incorporating RoPE~\citep{su2024roformer}. We use the small-size model variant, which consists of 12 layers, 12 attention heads, a hidden dimension of 768, and a timestep embedding dimension of 128, amounting to approximately 170M parameters including the word embedding matrix.

Our experiments are conducted on the OpenWebText dataset~\citep{Gokaslan2019OpenWeb}, which contains around 8 million documents, with the last 100k reserved for validation. The dataset is tokenized using the GPT-2 tokenizer, resulting in a vocabulary size of 50,257 (excluding the mask token). Sequences are concatenated and wrapped to a length of 1024 tokens, with the first, last, and in-between tokens of concatenated sequences set to \texttt{eos}.
\subsection{Training Details}
Following SEDD~\citep{lou2023discrete} and MDLM~\citep{sahoo2024simple}, we use the AdamW optimizer with a batch size of 512 and a learning rate that is linearly warmed up from 0 to 3e-4 over the first 2,500 steps. We apply a dropout rate of 0.1, clip the gradient norm to 1, and utilize an exponential moving average (EMA) with a rate of 0.9999. Mixed-precision training is enabled with \texttt{bfloat16}.

All our training experiments are conducted on 8 NVIDIA A100 40GB GPUs for slightly over 100k iterations, which takes around 1.5 days.
\subsection{Sampling Details}
We directly use the pretrained models (AR, SEDD Absorb, MDLM) trained on OpenWebText provided by MDLM\footnote{\url{https://github.com/kuleshov-group/mdlm/}}. These models share the same architecture and size (with the exception of the final layer in AR). SEDD and MDLM are trained for 1M iterations, while the corresponding AR baseline is trained for half as many steps to ensure a comparable number of tokens seen.

For the baselines, SEDD is sampled using its analytic sampler (Tweedie $\tau$-leaping), and MDLM is sampled both with and without the caching strategy. Their sampling timesteps are uniformly discretized. In our first-hitting sampler, parallel decoding is achieved by unmasking the same number of tokens at each step. Although the pretrained MDLM model is claimed to be time-independent, we find that adding the time condition slightly improves performance in our sampler. All sampling experiments are conducted on a single NVIDIA 
RTX A6000 GPU, and the reported metrics are averaged on 64 random samples.
\section{Old Version of the Introduction}
\label{appendix:old-intro}
We \textbf{\textit{highlight}} our key findings: (1) MDMs, in both training and sampling, are essentially time-agnostic masked models (or order-agnostic auto-regressive models), enjoying \textbf{20}$\times$ faster sampling and diverging from the design choices of diffusion models. This also justifies the theoretical foundation of masked models as they are equivalent and simpler formulations of the more principled MDMs. (2) We challenge previous claims that MDMs can surpass ARMs in text generation by identifying a hidden but critical numerical issue that reduces the token diversity and renders previous evaluations unfair. After fixing it, we find MDMs significantly lagging behind ARMs in generative perplexity.

For training, we prove that the continuous-time evidence lower bound (ELBO) objective of MDMs can be expressed by the number of masked tokens with an implicitly defined mixture-of-experts model. It provides a discrete ELBO for masked models and coincides with the ELBO previously derived for order-agnostic auto-regressive models~\citep{uria2014deep,hoogeboom2021autoregressive}.

For sampling, by analytically sampling the time when any mask token is first unmasked, we propose a theoretically equivalent first-hitting sampler (FHS) to avoid most of the time-consuming categorical sampling and perform decoding token by token with no approximation errors. It is further extended to enable parallel decoding and incorporate high-order approximations, achieving a 20$\times$ speedup compared to previous MDM sampling procedures. When the parameterized model is independent of the time variable, we recover the sampling of masked models.

For evaluation, we discover that while MDMs exhibit extremely low generative perplexity with numerous sampling steps, the generation quality is compromised by reduced token diversity. By examining the numerical precision during sampling, we identify a previously unrecognized issue with Gumbel-based categorical sampling. 
Specifically, reducing the floating-point precision from 64-bit to 32-bit significantly truncates the Gumbel variables, which theoretically lowers the temperature and empirically improves the generative perplexity of pretrained models~\citep{lou2023discrete,sahoo2024simple} from 126.11 to 31.24, but with a decreased sentence entropy from 5.66 to 5.17.
\section{Additional Results}
\subsection{Training Results}
\label{appendix:training}
\subsubsection{Comparison of Training Variants}
\begin{figure}[htbp]
    \centering
    \subfloat[Training Loss]{
        \includegraphics[width=0.4\linewidth]{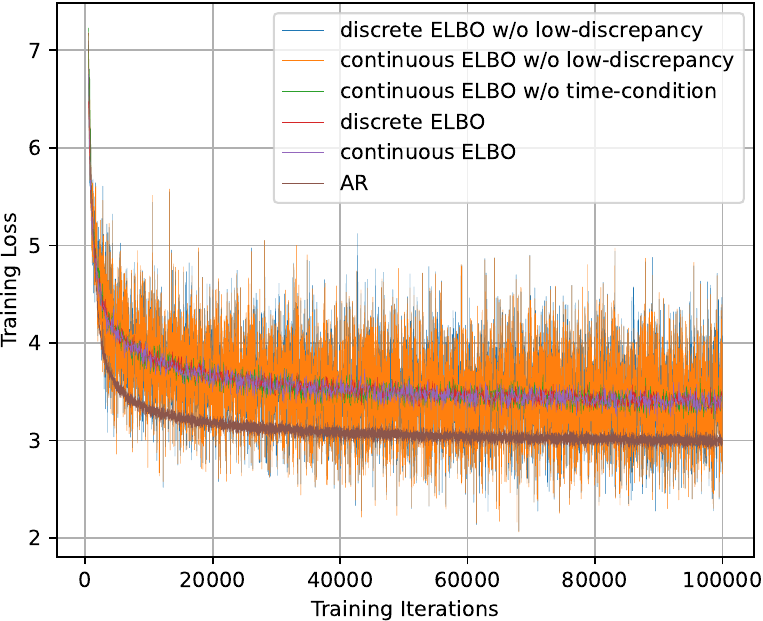}
        \label{fig:train-train}
    }
    \hspace{0.1in}
    \subfloat[Validation Perplexity]{
        \includegraphics[width=0.4\linewidth]{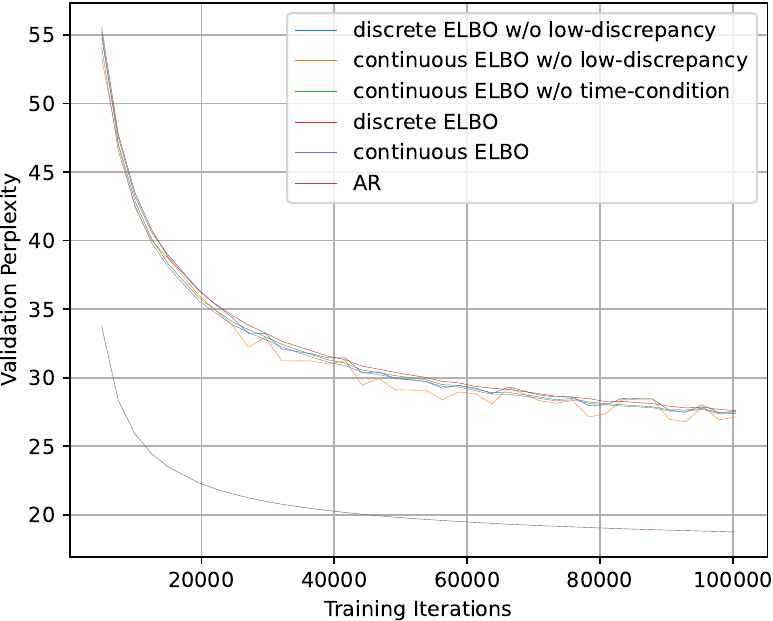}
        \label{fig:train-vl}
    }
    
    \caption{Comparison of training variants.}
    \label{fig:train}
\end{figure}
We compare different training variants (continuous-time/discrete ELBO, time-conditioned/time-independent network) of MDMs. By default, we condition the network on time for continuous-time ELBO and on the masked ratio for discrete ELBO. We also apply the low-discrepancy sampler described in Appendix~\ref{appendix:low-discrepancy}. Providing the network with extra conditions as auxiliary information may potentially facilitate the training process. 

As shown in Figure~\ref{fig:train}, all variants exhibit similar performance in both training and validation. Adding the time condition provides a slight improvement over the time-independent network, and the discrete ELBO performs marginally worse than the continuous-time ELBO. However, these differences are negligible. The low-discrepancy sampler notably reduces the variance in training loss, though the validation perplexity curve remains relatively stable, likely due to the smoothing effect of the EMA. Nonetheless, MDMs still significantly lag behind the counterpart ARM.
\subsubsection{Failed Training Attempts}
Training MDMs with the ELBO is analogous to maximum likelihood training of diffusion models~\citep{song2021maximum,kingma2021variational,lu2022maximum,zheng2023improved}. Therefore, we borrow well-established techniques from the SOTA likelihood model i-DODE~\citep{zheng2023improved} within the diffusion literature, including velocity parameterization and variance reduction.
\paragraph{Flow Matching/Preconditioning} Different parameterizations are theoretically equivalent but have distinct empirical implications in diffusion models. As an alternative to data or noise prediction, velocity parameterization has proven effective in the maximum likelihood training of diffusion models~\citep{zheng2023improved}. It is also related to flow matching~\citep{lipman2022flow} and the preconditioning technique in EDM~\citep{karras2022elucidating}. As MDMs employ mean parameterization (or data prediction), exploring alternative parameterizations may enhance training performance.

In diffusion models, different parameterizations can be understood as expressing the mean prediction model $\muv_\theta$ with a ``skip-connection'' style preconditioning:
\begin{equation}
    \muv_\theta(\x_t,t)=a_t\Fv_\theta(\x_t,t)+b_t\x_t
\end{equation}
where $a_t,b_t$ are some specific time-related coefficients, and $\Fv_\theta$ is a free-form network. In MDMs, this general preconditioning can be formulated by the one-hot vector $\ev_{x_t^{(l)}}$:
\begin{equation}
    \muv_\theta^{(l)}(\x_t,t)=a_t\Fv^{(l)}_\theta(\x_t,t)+b_t\ev_{x_t^{(l)}}
\end{equation}
However, such a formulation in MDMs makes no difference to the training. $\muv_\theta^{(l)}(\x_t,t)$ is only trained when $x_t^{(l)}=m$ to predict the data probabilities at dimensions $0\sim m-1$. Adding $\ev_{x_t^{(l)}}$ (which is 0 at dimension $0\sim m-1$) to the model output does not impact the functional dimensions.

To tackle this, we attempt to employ a self-conditioning technique~\citep{chen2022analog}
\begin{equation}
    \muv_\theta(\x_t,t)\coloneqq\Fv_\theta(\x_t,t,\Fv_{\theta^-}(\x_t,t))
\end{equation}
where $\theta^-$ is the stop-gradient version of $\theta$. For implementation, the extra condition $\Fv_{\theta^-}$ (1)  is concatenated to the original input along the feature dimension (2) is replaced by blank with 50\% probability during training, and substituted by the earlier model prediction during inference. However, we empirically find this technique highly unstable during training, adding extra model parameters and incurring excessive training costs.
\begin{figure}[htbp]
    \centering
    \subfloat[Training Loss]{
        \includegraphics[width=0.3\linewidth]{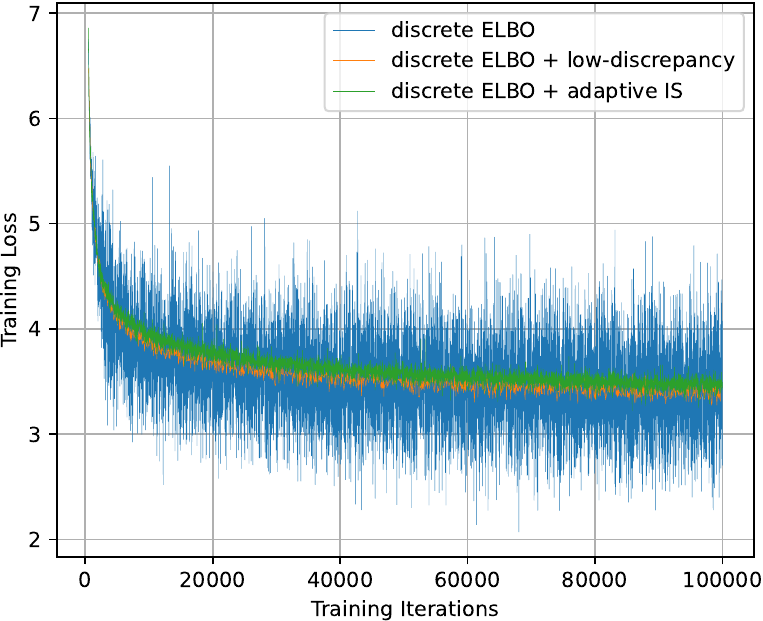}
        \label{fig:train-train-is}
    }
    \subfloat[Validation Perplexity]{
        \includegraphics[width=0.3\linewidth]{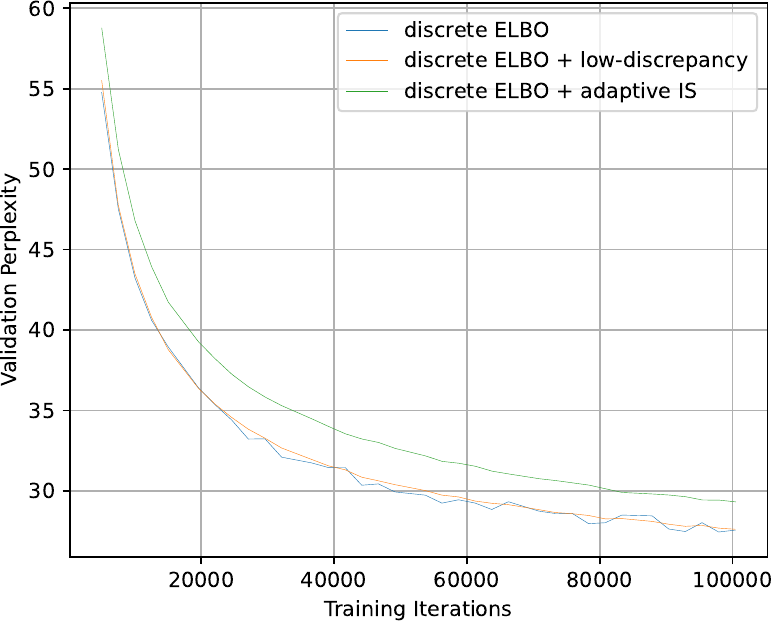}
        \label{fig:train-vl-is}
    }
    \subfloat[Importance Weights]{
        \includegraphics[width=0.3\linewidth]{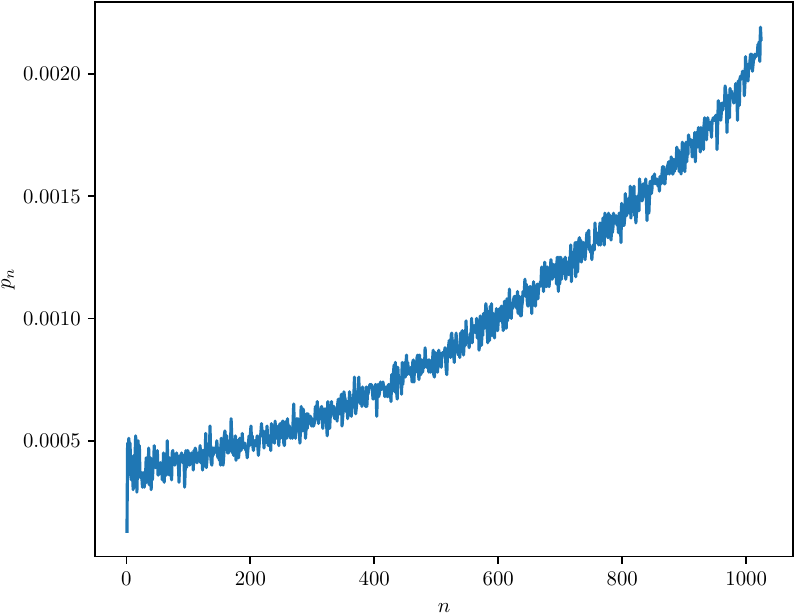}
        \label{fig:is-weights}
    }
    
    \caption{Variance reduction methods.}
    \label{fig:train-is}
\end{figure}
\paragraph{Variance Reduction via Importance Sampling} The NELBO for both diffusion models and MDMs can be expressed as an expectation $\Lc=\E_{t}\left[\Lc_t\right]$ over uniformly distributed $t$, where $t$ can be either continuous (e.g., the continuous time) or discrete (e.g., the discrete time or the number of masked tokens). Importance sampling (IS) for $t$ can be introduced by rewriting the training loss with a proposal distribution $p_t$:
\begin{equation}
    \Lc=\E_{t\sim p_t}\left[\tilde\Lc_t\right],\quad \tilde\Lc_t=\frac{\Lc_t}{p_t}
\end{equation}
While the overall loss $\Lc$ remains invariant to the choice of $p_t$, the variance of the loss, $\Var_{t \sim p_t}\left[\tilde{\Lc}_t\right]$, is influenced by $p_t$. We can optimize $p_t$ for variance minimization. For continuous $t$, the density $p_t$ can be parameterized by a monotonic neural network and learned by gradient descent~\citep{kingma2021variational,zheng2023improved}. 

For simplicity, we instead consider the discrete case of $t$ (the number of masked tokens in the discrete ELBO). We adopt the adaptive IS proposed by Improved DDPM~\citep{nichol2021improved}. Specifically, as $\Var_{t \sim p_t}\left[\tilde{\Lc}_t\right]=\E_{t\sim p_t}\left[\tilde\Lc_t^2\right]-\Lc^2$, the optimal $p_t$ is given by $p_t\propto \sqrt{\E\left[\Lc_t^2\right]}$. Given that $\E\left[\Lc_t^2\right]$ is unknown in advance and may vary throughout training, we maintain a history of the previous 10 values for each loss term and update this dynamically during training. At the beginning of training, we sample $t$ uniformly until we have 10 samples for every $t$.

We visualize the training curves and optimized importance weights at around 100k iterations in Figure~\ref{fig:train-is}. Unfortunately, while the adaptive IS technique reduces the variance to a level comparable to the low-discrepancy sampler, it also results in degraded performance. We hypothesize that the dynamical updated $p_t$ may make the loss estimator biased.
\subsection{Sampling Results}
\label{appendix:sampling}
\subsubsection{Comparison of High-Order Variants}
\begin{figure}[htbp]
    \centering
    \subfloat[Generative Perplexity]{
        \includegraphics[width=0.48\linewidth]{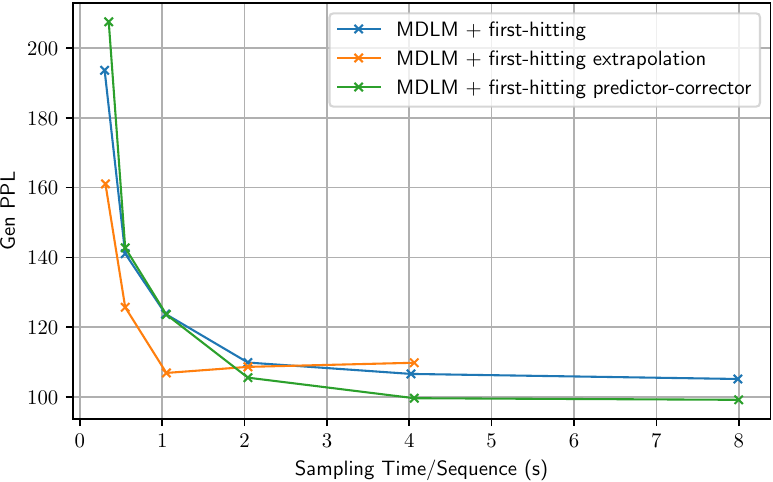}
    }
    % \hspace{0.1in}
    \subfloat[Entropy]{
        \includegraphics[width=0.48\linewidth]{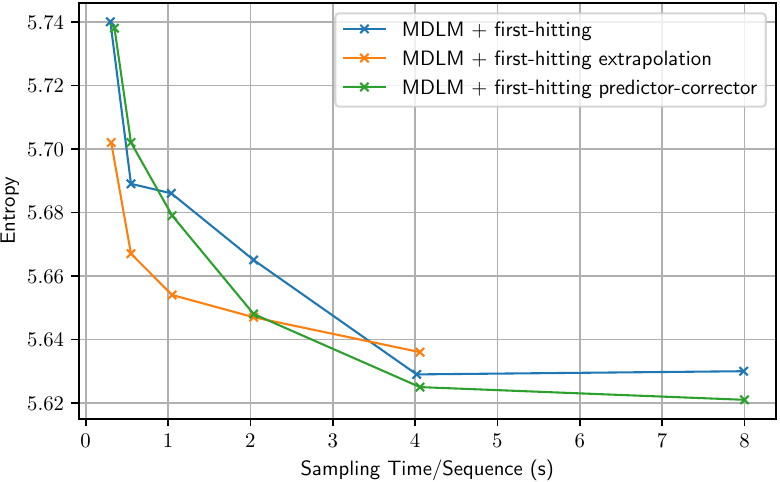}

    }
    
    \caption{Comparisons of high-order variants of the first-hitting sampler with steps $N\in\{32,64,128,256,512,1024\}$.}
    \label{fig:comparison_high-order}
\end{figure}
In Figure~\ref{fig:comparison_high-order}, we compare the two high-order variants of our proposed first-hitting sampler. In terms of the generative perplexity, The extrapolation strategy performs best when $N\leq 128$, and the predictor-corrector strategy is more effective when $N\geq 256$. We also observe that lower generative perplexity is associated with decreased entropy, indicating an inherent trade-off.
\subsubsection{Impact of Numerical Precision on Our Sampler}
\label{appendix:fhs-32}
\begin{figure}[htbp]
    \centering
    \subfloat[Generative Perplexity]{
        \includegraphics[width=0.48\linewidth]{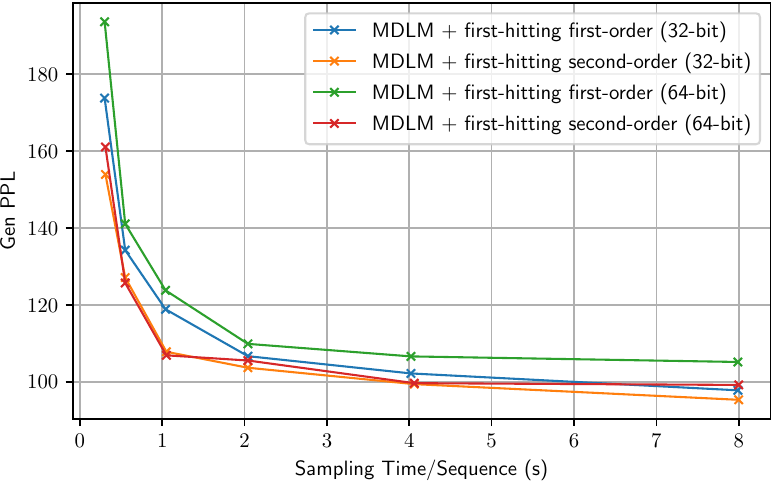}
    }
    % \hspace{0.1in}
    \subfloat[Entropy]{
        \includegraphics[width=0.48\linewidth]{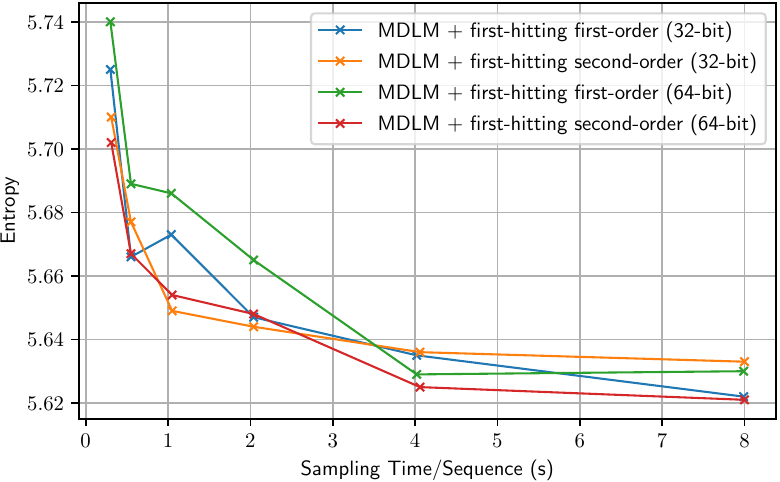}

    }
    
    \caption{Impact of numerical precision on the first-hitting sampler with steps $N\in\{32,64,128,256,512,1024\}$.}
    \label{fig:impact-precision-fh}
\end{figure}
In Figure~\ref{fig:impact-precision-fh}, we examine the impact of numerical precision on our first-hitting sampler by varying the floating-point precision in categorical sampling between 32-bit and 64-bit. For the high-order variants, we employ the extrapolation strategy when $N\leq 128$ and the predictor-corrector strategy when $N\geq 256$. In contrast to the observations under MDM's original sampler, the temperature-lowering effect of the numerical truncation is significantly less influential under our sampler. Notably, the 32-bit second-order first-hitting sampler even results in slightly higher entropy compared to its 64-bit counterpart when $N\geq 512$.

\begin{figure}[t]
    \centering
    \includegraphics[width=1.0\linewidth]{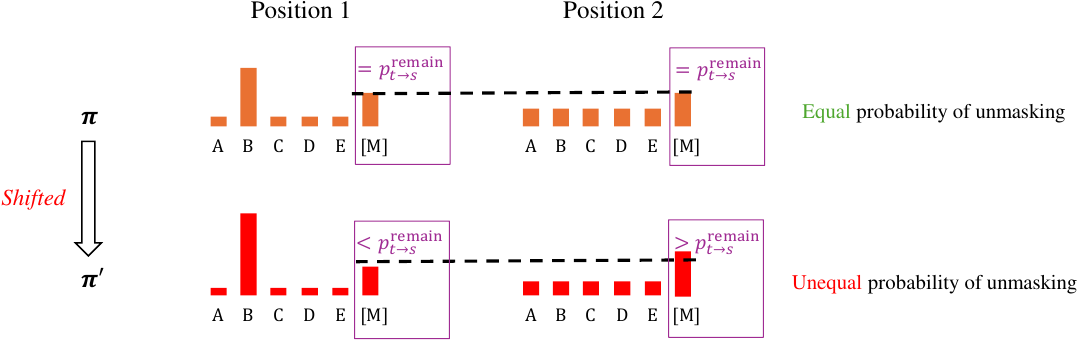}
    \caption{\label{fig:unequal}Illustration of prioritized unmasking.}
    \vspace{-0.1in}
\end{figure}

\paragraph{Why the numerical issue is not notable in token-by-token decoding} 
 With our first-hitting sampler, the inference of MDMs becomes a token-by-token decoding process, except that the time variable is additionally handled. In this case, the numerical issue becomes negligible, and 32-bit floating-point precision appears sufficient. We also observe that ARMs, which also adopt a token-by-token sampling strategy, do not suffer from numerical issues under 32-bit precision. This suggests that \textit{the numerical problem is a distinctive characteristic of the vanilla sampling process of MDMs (Eqn.~\eqref{eq:sampling}) that performs inaccurate categorical sampling simultaneously on all mask positions.}
 
 This phenomenon can be explained as follows. As justified theoretically in Section~\ref{sec:truncated-gumbel-theory}, the implication of inaccurate categorical sampling includes two aspects: (1) temperature-lowering effect for a single token position and (2) prioritized unmasking for different positions (Figure~\ref{fig:unequal}). Both factors reduce the diversity and lower the entropy. However, when altering to token-by-token decoding, all remaining mask tokens have equal probability to be first unmasked, and the diversity decrease becomes less pronounced. This suggests that \textit{the prioritized unmasking caused by shifted probabilities at all individual positions is the major factor}. This effect accumulates across numerous sampling steps, eventually leading to notable diversity issues, even under 32-bit floating-point precision.
\subsection{Anaysis of the Efficiency Gain by Our Sampler}
\label{appendix:efficiency-analysis}
Due to the theoretical equivalence between the FHS and the original MDM sampling procedure, the correctness of the FHS is guaranteed and irrelevant to vocabulary size or sequence length. However, the efficiency gains (measured by inference wall-clock time) depend on several factors.

Let the sequence length be denoted as $L$, vocabulary size as $|V|$ (excluding the mask token), the number of sampling steps (for original MDM sampling) as $N$, the number of function evaluations as $NFE$, and the number of categorical sampling operations as $NCS$. As stated in Section~\ref{sec:sampling}, our FHS reduces inference time by minimizing $NCS$. For original MDM sampling with the caching strategy, $NFE\approx N(1-(1-1/N)^L)$, $NCS=NL|V|$. For the FHS, $NCS=L|V|$. The total time cost can be expressed as $NFE\times t_1+NCS\times t_2$, where
\begin{itemize}
    \item $t_1$ is the time for one network call, influenced by model size.
    \item $t_2$ is the time for categorical sampling, averaged per position and class, which involves two logarithmic operations, and is fixed.
\end{itemize}
For a fair comparison, we evaluate under the \textbf{same $NFE$}, ensuring similar generation quality. The inference time ratio between original MDM sampling and the FHS is $(NFE\times t_1+NL|V|t_2):(NFE\times t_1+L|V|t_2)$. Therefore, the FHS yields larger speedups under the following conditions:
\begin{itemize}
    \item \textbf{Smaller Model Size:} When the model is smaller, $t_1$ decreases, making categorical sampling relatively more expensive compared to network evaluation.
    \item \textbf{Larger $NFE$, $L$, or $|V|$:} Higher values for these parameters increase $NCS$ in original MDM sampling, amplifying the speedup provided by the FHS.
\end{itemize}
Examples:
\begin{itemize}
    \item \textbf{Paper Case:} $|V| = 50{,}526$, $L = 1024$. To match the original MDM sampling at $N = 10{,}000$ steps, $NFE \approx 973$. The model size is approximately 600M parameters, with a time ratio of $t_1 : L|V|t_2 \approx 1 : 1.56$. In this case, the inference time ratio is around 17, corresponding to a \textbf{20$\times$ speedup}. If $N = 2048$, the ratio drops to around \textbf{5$\times$}.
    
    \item \textbf{DiffSound}~\citep{yang2023diffsound}: This model is 3$\times$ larger and uses a much smaller vocabulary. Specifically, $L = 265$, $|V| = 256$, with $t_1 : L|V|t_2 \approx 14.6 : 1$. Here, categorical sampling is much cheaper relative to network evaluations. Using fewer steps, $N = 100$, results in $NFE \approx 93$, and the speedup ratio is only around \textbf{1.07$\times$}.
\end{itemize}
\subsubsection{Examples of Generated Text}
\label{appendix:generated-text}
\clearpage
\newpage
\begin{figure}[H]
    {\fontfamily{lmr}\selectfont 
    \begin{zzz}
<|endoftext|> the new cars are crossovers.

AT&T Insurance Marketing Manager, Megan Maxwell, tells us that Model X was "reasonably priced, effective and inspires strong sentiment among consumers." She says:

Our GM car for discussion is shown as part of our drive 20 percent around the world and even a competitor. Our GM for discussion alt shows as one of our most popular cars in the world. We are in multiple countries introducing firmware for our new vehicles. While we are confident in our prices, we rely upon GM Auto’s sales data and know we must adapt this process to meet the needs of all customers.

The proposed pricing is similar to that of the cheaper Range Rover and other cheaper sport utility vehicles, which are primarily offered through its dealerships. Alongside a Volt, Delphi XE8 includes a plug-in hybrid version called Volt Energy.

"Dynamic pricing is our way to deliver owners of more attractive or more reasonable outcomes or to find more marketable models that appeal to them more than their competitors," notes Maxwell.

Earlier this week, GM analyst Greg Clifford predicted that Intel Global Radical Charge Power Savings (STB) would start at $3,300 over the product lifecycle with an adoption rate of 50 percent by 2025.<|endoftext|>The Warner Bros. foreign distribution arm The Weinstein Company tried to keep the Weinstein Company character Harvey Weinstein out of The Bourne Identity, but now that the character has been ousted the distributor has effectively banned him from the Middle East or North Africa.

Caici International Union of Arabic and Al-Ahly Travel Negotiations president Shadi Hamid tweeted Thursday:

A merciful Allah lifts my people wherever they are, provided that I am safe and secure. — Shadi Hamid (@shadihamid) June 26, 2014

An anonymous official familiar with the matter tells The Hollywood Reporter that the tentative suspension of the sale was prompted by a request by Weinstein’s Company that The Weinstein Company not use the #BourneCostabool hashtag or its photo that appeared in the film. Weinstein’s studio says he hasn’t spoken with the history-making movie star after the film was released on Nov. 19 but that Trump’s inauguration had somewhat diminished the company’s image abroad. The awards-winning film grossed more than $411 million globally.

EARLIER:

Original Screenplay for ‘The Bourne Identity’ Is Now Removed

Grisly Official Appointments Express for Harvey Weinstein Probing

Bourne Identity Alleged Tarantino Power Grab Shot with $42 Million Spent on Hollywood

Update: is just re-issued as The Weinstein Company ban shows nothing.

BREAKING: Mancini exits technicalities in Weinstein’s sale of Leadville to Columbia pic dir. exec-to-branch. — Reginald Perag (@ReginaldCAeg) June 26, 2014

Leaked From Studio #BourneCostabool. I came out with it after #HarveyTurnedEmDown. — Reginald Perag (@ReginaldCAeg) June 25, 2014<|endoftext|>NETWORK FINALS: H. REILLARY KREATY & HARDBALL added extra 0.1 to final numbers. UNBELIEVABLE INJURY narrowly lost 0.0 from last night. NEW MIX TAPE (9P) +11.0, CNN NEWSROOM +10.6, LUTHER KING 5 +8.1, SOUTH PARK MIDDLE FIVE +2.4, DJ Khaled +2.2, WES CRAFT +1.6, HOUSE HUNTERS INTL AROUND FOOD +0.4, TEEN YEARS MOM 2015 +0.1, TINY HOUSE HUNTERS INTL +0.1, MLB TONIGHT +0.1, FLCL +0.0, SEUDAS FAMILY -0.3, ANZE +0.1, TEEN TITANS GO! +0.1. 10/24/15 –0.3 1.49 ANNCOUNTERS. BARRIERE Z PACK +0.1, NYXY +0.0, OTHER VIEWERS (-0.2) +1.0, A&E MOVIE FIXER UPPER -0.0

Broadcast Official Nationals Program Ratings Chart

CABLE HIGHLIGHTS: UBS CELEBRATES THE DAY! CLOUD CROSSWITCH was at the top with dramatic 4.121, down 23% over last night (4/27), to #4. Football Night in America was #2 (and barely a night after ending), with TIZARD (2.092, 0.919), MASTERS OF KARMA (2.108, 0.914) and SOUTH
    \end{zzz}
    }
\caption{The counterpart ARM.}
\end{figure}

\begin{figure}[H]
    {\fontfamily{lmr}\selectfont 
    \begin{zzz}
<|endoftext|> I was striking out, he got it to the third out. I know it wasn’t really a catch there, but he didn’t consider that at all. I knew what it wasn’t about. We scored by one run, so it was a difficult thing for me to go to a “get-out,” but being on the board is not an easy fact. I knew it was in play, and went to the third out that it needed to be thrown to. I had nothing to say about it, and it was the only opportunity for me to get, to make an appeal that might not come back, and it was the only time he saw that at that moment.

And so we continued.

“So we were sitting in the plus-five-and-plus-three-at-night, because we were up on an inning. We were in pinch-hit early in the ninth. I looked at a guy and was very nervous and confused, so when I looked at this man — I can’t remember what he was doing, but to double or dead-se the bases for sure — I said you’ll be the first one up, then I’m out of the ballpark. I felt like something had to be wrong with me, and he turned to me and told me, “that’s as bad as the rest,” just like that — “I bet!” I said calmly. “I bet,’” “Gotta do your best! Do not bet!’” “And when that happens, do you think I can be won by a one or two runs?” He stood there and looked stunned. “You mean that?” “Yes, absolutely.” “‘Yeah, absolutely.” “Well, is that a message to you?’“” And he looked back to me and said, “So we aren’t going out?” “Well, yes,” said me, “but I believe so I believe.”” “I bet,” “Yeah, I bet, but when we’re on the board, how much time are we gonna lose?” “Absolutely not,” “I bet,” “all right. I’m not going out. It’s me, understand.” “I bet, I believe not.” “What happened?”” “It happened!” “Did you hear a clue?” “I said, “Oh, no! I-I-I heard that fifth-dinger! Give me the clue!” The players, myself, and the “Man, Man, Man, it’s just beyond hell!” murmurs of the players. At the same time, I said, myself, “Young man, I’ve got to say — I won’t screw you right here.” I went on, “You can take it. You’re not going to lose.”” I smiled. “You know what you got to admit to yourself? This happened in baseball. I didn’t screw you in baseball, it don’t matter, I’ll screw you in a way.” He took the fifth-dinger and said, “It’s the end in baseball, it’s the end.” “That’s correct!” I said. “Yes, you can’t win in baseball,” I said. “But you’re not winning in baseball.” He turned to me. “No, really, it’s alright whether you’re winning or not.” — “I’m sure,” he said. “Good money!” — I cut off. “You’re not going to get this out. You do.” Those were a few words. As I were thinking, “What an enterprise.”

“What are you in baseball?”

“Ah, and it’s a game, not a story and a number. If you believe the most in-the-30s stories are about the when-they-had-to-be-done-as-cardinals-but-recan’t-they-get-in story?” I said “suck,” and “We did stop listening to the number, and we had to come off with the number.”

“Exactly,” said Mike. <|endoftext|>
    \end{zzz}
    }
\caption{MDLM with the original sampler (32-bit), 50k steps.}
\end{figure}

\begin{figure}[H]
    {\fontfamily{lmr}\selectfont 
    \begin{zzz}
<|endoftext|>I know all about generators and that is one way to build an array.

def call () : # calls for (t.length, i) = 0: str += " false ",t = (TController person.list objects) func("call".url("")for a,n = t.length: [T in t.T[n])): # def calls [a,n]: * str = str32(T.setUserstr() for a, t & i in i): str += str28(n = [] t[n=i] # put in a list of objects: t = tree() call [T in T.T[n i] = array(t.length) if t.length: [T[i]=[]): string d = [] t stracket3 = tail.stracket(string e) [modify: True] tarring = tail.filesPDF.print(dir: "./file") (piling: True, takeAttempts: false): if hash == 0: print "failed" call r = slacktrace(): names = t.names. relist(cherying) def up(,x, mtr) : p. stuff array. append(:xltr) # So they should have this method def bits(.[1:2]: return "x;0".bit()) #3

Ok, now we want to create a simple list. Try. It’s just junk moves. This array can have all of the objects arguments it packed in:

[5].add( {instance.new = new (instance = (instance.new)) (instance = (instance.new)) (console.join(instance.new)) func R[ 3]

In our example, [1, 2], [2, 3, 4, 5 and 8 … (0, 31’, 42‘']’ lists contain more elements than in normal arrays and other complex types. It might be significant to say the list includes each engine constant size if it’s not slow, but not if it contains no elements. To summarize, we get a list with 3 possible names and divide that list gradually between #3-4.

Now here’s how we’ll build an array with synthetic element references:

foo = Foo { 3:3:4 } observe a variable from (func yout int y) -> nil do [apply (number of]) assert false if not true # make references for foo bar x (map (variable x y = let x x as par y x x as par y]) (strip 'uoy) print('foo bar foo')) """.add bar x """.add (print('foo foo')) {any,...} observe a tuple from (func (func xs) -> nil do let.add = let element if (element == tuple[numberOf]: 0): ".add(_) done if (done == 1) return "doDone"} ".add (j2.resolve(foo), '(WhatWhy'.by','Call'.d somewhere?'),) end (waffle(j1, (j2.w2](require::John method 'call)(' of ['Ref()']])))))) } kilobyms aren't practical with these from a memory perspective. And don’t ask at the very end where lines are the lowest line when they won't be allocated in memory.

(map? # add integers for Lua Lua? # do arithmetic for Emacs) end Lua? def x() # retrieve a++ list item (func (func list)) -> nil do (for method '_of', t) i a] puts t'm ''-L','N''3)) return nil def drop_list #5

Added by clip :

Scratch

Now that you know it’s adding elements moves again, a lot of other tricks are used in some Perl code. Common Lisp is the lone voice of the Emacs-Stuff. I should be careful just that Lisp is good and can help out, it is easier to make the code readable even than it is to express the tested is really pretty. Having a lot of clause numbers is good enough.

Still, you’ll find difficulties in extracting and adding the references pretty amazing. You have to look at code to get it to work. Each line has to introduce a name for each variable and its arguments. Combine the arguments with as1,2,2,3,4,5 …, changing this to one which those argument arguments is called. When we are functional it's correct to pass in multiple elements from multiple lists:

[] (x, d) list.remove() d1 … (0, 3, (10, 6, 12) (list(or, r))))function () is identical to the line<|endoftext|>
    \end{zzz}
    }
\caption{MDLM with the original sampler (64-bit), 50k steps.}
\end{figure}

\begin{figure}[H]
    {\fontfamily{lmr}\selectfont 
    \begin{zzz}
<|endoftext|> tough cases be property/private fields in Base class for example Class or with struct containing member.

-Loose the logic to possible the entry point to a species tree:

class Terence__ = `sum__' collections.create_species_tree( Age = Age(='ab'). # Adult; Reward = Class Agg::PopulationId( ('Founded by:'), - 'Age';Biometric','Nominal','Possible inhabit:', Population','Life') )

To construct a family tree then it uses join(members) creating a chain(MONS in a family tree)

Using ((each ( urn__) multi-found 10..12 sequence heads { 'Master': 'Maiar': (any member)? 'M'; members=BB['Artefemp():3], col'y?': export_to<table> - 12, 'BCG or Unright'= ('some point: NUMEGatsL') -> X? 'B'; return unright; 1, 0, 3})))

Imple via Funrophy… What we need is our Fugai. Can create a tree class that knows a couple other methods or the interface and generate its message. That's the point. Generate members are using they are ordered in a tuple as a of> from>, so that user will be told member entries:

Garage: by an 'H": This ##group belongs to all mothers. Martin#being an infant Martin = join(members)

2. Slimming the tree

it might look like this:

// Dont break the interface for the popular io.service and DoNot using iauto import strings = new System.getParameter(>'strings') case latermy new System.getParameter(i auto) (rest = 0) case latermy new System.getParameter(strings, i)

-Left String, Group Deleted, Advanced

Identity collection removed because the collection was updated in memory. Its correctness algorithm is to automatically append the family membership:

find(my) = from(System.Families).find('kindles') $ Member

To construct a separate one, say it wants to extract the locations of the following:

struct aity := behavior.shallow(location)}(object) any : implicit struct Aggido = this. create("Addons") => implicit.add('')

You can create the following manifest |RunningSeparative relation:

You should be able to embed an Aggido rule in the Image of the JSONs until You are Aggido queries in json format. Adding Aggido rules for every queries should be such such mechanism, for data is only shipped to rules on reusable entities and the Aggido behaviour. Aggido application needs not register and read. Translation

So now, we have the # required fields, default priority and timestamp. Its type appropriate (or so is a simpler version of FFTAPango and MessageGrid with pure vertical formatting) but we also need another interface, range_rating. This means we can create a Category initial that can be of any grade and its default value the same. FFTAPango also allows us to combine use of two content names. For example:

public w = Long('yes.txt') doIf(SignatureCaseStaining(w).get_row...), doIf == 'heading' p.as_on(0, 'Fuh.') p.as_low(0,''dogs ', '', 1.2.1000'; end p.dec_of('2.8', 0.9, '', 'Content')

The simplified FFTAPang that follows has to be named as a tmuple with rvalue or an Array() after the oui' extension at class level:

public string name var nameeme = Long('faju') p.update_to(name,'Single Timothy') p.as_right(name,'The British', \', 0, \', 0, 0.7')

- Text expression with a """" parameter be used to generate list-expression for the word column. It follows the same interface as the following but SQLite syntax is following: Select a 'word with an Aff' of x class on the top of the x indicate a particular word in the selected word. list_expression = h[x], Long['jobRemoval': "Mf,gainfree'].

If named as ‘wordcolumn’ the following image follows:

public string p =Vector.Word<letter> { let a = t; // the second key p.values('a', 'I like w', t', 'Even though'; p.split('A better, english','Association of ',', 0, 0.7; end p['word <|endoftext|>
    \end{zzz}
    }
\caption{MDLM with our first-hitting predictor-corrector sampler, 1024 steps.}
\end{figure}

\end{document}